\documentclass[11pt]{article}

\usepackage{fullpage}

\usepackage{latexsym}

\usepackage{dsfont}
\usepackage{graphicx}
\usepackage{amsfonts}
\usepackage{amsmath,amssymb}
\usepackage{amsthm}
\usepackage{subfig}\DeclareCaptionType{copyrightbox}
\usepackage{xcolor}

\usepackage{rotating}

\usepackage{verbatim}
\usepackage{multirow}
\usepackage[ruled,linesnumbered]{algorithm2e}
\usepackage{tabularx}
\usepackage{placeins}
\newtheorem{defi}{Definition}
\newtheorem{sat}{Theorem}
\newtheorem{lem}{Lemma}
\newtheorem{ass}{Assumption}
\newtheorem{exam}{Example}
\newtheorem{expe}{Experiment}

\newcommand{\E}{{\mathrm E}}
\renewcommand{\P}{{\mathrm P}}
\newcommand{\argmin}{\operatornamewithlimits{argmin}}

\DeclareMathAlphabet{\mathbb}{U}{bbold}{m}{n}

\newcommand{\1}{\mathds{1}}
\newcommand{\N}{\mathbb{N}}
\newcommand{\R}{\mathbb{R}}

\newcommand{\MID}{\hspace{1mm}\vert\hspace{1mm}}

\newcommand{\MIDR}{\hspace{1mm}\right\vert\hspace{1mm}}
\newcommand{\ASSIGN}{:=}
\newcommand{\VAR}{\sigma^2}
\newcommand{\COV}{cov}

\newcommand{\Msix}{M^6}

\graphicspath{{./svgs/}}

\begin{document}
\clubpenalty=10000
\widowpenalty = 10000

\title{Explanation of Stagnation at Points that are not Local Optima in Particle Swarm Optimization by Potential Analysis}

\author{
Alexander Rass
\qquad
Manuel Schmitt
\qquad
Rolf Wanka\\[2mm]
Department of Computer Science\\
University of Erlangen-Nuremberg, Germany\\
{\{alexander.rass, manuel.schmitt, rolf.wanka\}@fau.de}
}

\maketitle

\section*{Abstract}
Particle Swarm Optimization (PSO) is a nature-inspired
meta-heuristic for solving continuous optimization problems.
In \cite{SWa:13,SWc:15}, the potential of the particles of a swarm has been
used to show that slightly modified PSO guarantees convergence to local optima.
Here we show that under specific circumstances the unmodified PSO,
even with swarm parameters known (from the literature) to be ``good'',
almost surely does not yield convergence to a local optimum is provided.
This undesirable phenomenon is called stagnation.
For this purpose, the particles' potential in each dimension is analyzed mathematically.
Additionally, some reasonable assumptions on the behavior of the particles' potential are made.
Depending on the objective function and, interestingly, the number of particles,
the potential in some dimensions may decrease much faster than in other dimensions.
Therefore, these dimensions lose relevance,
i.\,e., the contribution of their entries to the decisions about attractor updates
becomes insignificant and, with positive probability, they never regain relevance.
If Brownian Motion is assumed to be an approximation of the
time-dependent drop of potential, practical, i.\,e., large
values for this probability are calculated.
Finally, on chosen multidimensional polynomials of degree two, experiments are provided
showing that the required circumstances occur quite frequently.
Furthermore, experiments are provided showing that
even when the very simple sphere function is processed
the described stagnation phenomenon occurs.
Consequently, unmodified PSO does not converge to any local optimum of the chosen functions for tested parameter settings.


\newpage
\tableofcontents
\newpage
\section{Introduction}

Particle swarm optimization (PSO), introduced by Ken\-ne\-dy and 
Eberhart~\cite{ken_eb_1995,eb_ken_1995}, is a very popular nature-inspired 
meta-heuristic for solving continuous optimization problems.
Fields of very successful application are, among many others,
Bio\-medical Image Processing~\cite{WSZZE:04},
Geosciences~\cite{OD:10},
and Materials Science~\cite{RPPN:09}, where the
continuous objective function on a multi-dimensional domain
is not given in a closed form, but by a ``black box''.
The popularity of the PSO framework 
is due to the fact that on the one hand it can be realized and,
if necessary, adapted to further needs easily,
but on the other hand shows in experiments good performance results with respect
to the quality of the obtained solution and the speed needed to obtain it.
A thorough discussion of PSO can be found in~\cite{swarmhandbook:11}.

To be precise, let an objective function $f:\mathbb{R}^D\rightarrow \mathbb{R}$
on a $D$-dimensional domain be given
that (w.\,l.\,o.\,g.) has to be minimized.
A population of \emph{particles}, each consisting of a position (the candidate for
a solution), a velocity
and a local attractor, moves through the search space $\mathbb{R}^D$. The
local attractor of a particle is the best position with respect to $f$
this particle has encountered so far.
The best of all local attractors is the global attractor.
The movement of a particle is governed 
by so-called movement equations that depend on both the particle's
velocity and its two attractors and on some additional fixed algorithm parameters.
The pseudo code of the PSO approach is visualized in Algorithm \ref{alg:classicalPSO}.
Additionally, Definition \ref{def:classicalPSO} captures the PSO behavior mathematically as a \emph{stochastic process}.
The population in motion is called the \emph{swarm}.

There are guidelines known for the ``good'' choice of the fixed parameters
that control the impact of the current velocity and the attractors
on the updated velocity of a particle (\cite{T:03,JLY:07a}\nocite{JLY:07})
such that the swarm provably converges to a particular point
in the search space (under some reasonable assumptions).
However, the point of convergence is not necessarily a global optimum.
Local optima might also be considered acceptable,
but unfortunately it is possible that the point of convergence
is not even a local optimum.
In the latter case, one says that the swarm \emph{stagnates}.
Examples are presented in Sec.~\ref{sec:experiments}.
E.\,g., with established \emph{good} parameter settings,
stagnation can be observed
if $3$ particles work on the $10$-dimensional,
sphere function.
In~\cite{LW:11}, Lehre and Witt show that there are non-trivial bad parameter
settings and initial configurations such that PSO possibly converges at arbitrary points when processing 
the one-dimensional sphere function. However, their result is presented for 
populations of $1$ and $2$ particles only.
Additionally, their parameter settings considerably deviate from 
those generally considered as good (\cite{Carlisle01,CK:02,T:03}).

In \cite{SWa:13, SWc:15}, the notion of the potential of the
swarm has been introduced.
This potential has been used to prove that in the one-dimensional
search space the swarm almost surely (in the mathematical sense)
finds a local optimum.
If $D\ge2$, the movement equation has been adapted slightly
to avoid that the swarm's potential drops too close to $0$, and
hence avoiding stagnation at a non-optimal position.
Consequently, this version of PSO almost surely finds local optima.

A comprehesive overview on theoretical results concerning PSO can be found in \cite{S15}.

The phenomenon of convergence to a point that is not a local
optimum, has to the best of our knowledge not yet been formally
investigated in a setting which is not generally restricted to a number of particles or specific parameters for the PSO.
In this paper, reasons for the phenomenon
of stagnation are given.
For it, the notion of the potential is modified
such that it is now aware of how the
particles experience the objective function $f$.
A theoretical model is provided
which on the one hand uses reasonable assumptions and
on the other hand provides a basis to mathematically prove that
the swarm almost surely stagnates.
The model captures the observation that during the execution of PSO
on objective functions $f$ the contribution of some dimensions to the potential
decreases exponentially faster than the contribution of
other dimensions.
This does not generally imply that PSO faces the problem of stagnation,
but if the model is applicable,
which is often the case when few particles are used,
then this model supplies indications whether in a specific setting stagnation is present or not.
The assumptions which need to be applied for the model will be justified in the experimental Section \ref{sec:experiments}.

The first model is used to prove the statement, that finally the swarm stagnates indefinitely almost surely.
In that model we can prove that the conditional probability that stagnation remains indefinitely if stagnation emerges at some time $T$ is positive.
If the time-dependent drop of potential is additionally approximated
by a Brownian Motion with drift, then the described probability can be calculated by an explicit formula.
The values obtained from that formula coincide very well with empirically measured probabilities in experiments.
If stagnation remains indefinitely mainly poor solutions will be returned by PSO.

Experimentally it is shown that the described
separation of potential indeed occurs when PSO is run with
certain numbers of particles on some popular 
functions from the CEC benchmark set~\cite{benchmarkset} and an additional function.

This paper is organized as follows: 
After the formal definition of the PSO process in
Section~\ref{sec:definitions},
in Section~\ref{sec:theory}
the new notion of potential, results of experiments and the proof that
under realistic assumptions PSO almost surely stagnates, are presented.
In Section~\ref{sec:experiments},
experimental analysis
of three benchmark functions from the CEC benchmark set~\cite{benchmarkset} and an additional function is presented.

\newpage
\section{Definitions}
\label{sec:definitions}
First the model which is going to be used for our analysis of the PSO algorithm is presented.
Algorithm \ref{alg:classicalPSO} represents the pseudo code of the classical PSO algorithm.
No bound handling strategies are investigated, because they have almost no influence on the convergence if the swarm is converging
to a point not on the boundaries.
Similar to \cite{SWa:13, SWc:15}, the model describes the positions of the particles,
the velocities and the global and local attractors also as real-valued stochastic processes. 
Basic mathematical tools from probability theory,
which are needed for this analysis can be found in,
e.\,g.,~\cite{durrett2010probability}.
\begin{defi}[Classical PSO process]\label{def:classicalPSO}
A \emph{swarm} of $N$
 particles moves through the $D$-dimensional search space
$\mathbb{R}^D$. Let $f:\mathbb{R}^D\rightarrow \mathbb{R}$ be the objective function.
At each time $t\in\N$, each particle $n$ has a position $X_t^n\in\R^D$,
a velocity $V_t^n\in\R^D$ and a local attractor $L_t^n\in\R^D$,
the best position particle $n$ has visited until time $t$.
Additionally, the swarm shares its global attractor $G_t^n\in\R^D$,
storing the best position \emph{any} particle has visited until the $t$'th step of particle $n$,
i.\,e., since each particle's update of the global attractor is immediately visible for the next particle,
it can have several values in the same iteration.
When necessary, $X_t^{n,d}$ is written for the $d$'th component of $X_t^n$ ($V_t^{n,d}$ etc. analogously).
With 
a given distribution for $(X_0, V_0)$, $(X_{t+1},V_{t+1},L_{t+1},G_{t+1})$ is determined
by the following recursive equations that are called the \emph{movement equations}:
\begin{align*}
L_{0}^{n} :=& X_0^n\text{ for }1\le n \le N,\\
G_t^n     :=& \argmin_{x\in\bigcup_{i=1}^{n-1}\lbrace L_{t+1}^i\rbrace\cup\bigcup_{i=n}^N\lbrace L_{t}^i\rbrace}f(x)\text{ for }t\ge 0\text{, }1\le n\le N,\\
V_{t+1}^n :=& \chi\cdot V_t^n + c_1\cdot r_t^n \odot (L_t^n-X_t^n)
			+ c_2\cdot s_t^n \odot (G_{t}^n-X_t^n)\text{ for }t\ge 0\text{, }1\le n\le N,\\
X_{t+1}^n :=& X_t^n+V_{t+1}^n\text{ for }t \ge 0\text{, }1\le n \le N,\\
L_{t+1}^n :=& \argmin_{x\in\lbrace X_{t+1}^n, L_t^n\rbrace}f(x)\text{ for }t \ge 0\text{, }1\le n \le N.
\end{align*}
Here, $\chi$, $c_1$ and $c_2$ are some positive constants called the
\emph{fixed parameters}
of the swarm, 
and $r_t^n$, $s_t^n$ are uniformly distributed over $[0,1]^D$ and all independent.
$\odot$ is meant as the item-wise product of two vectors.
The result is then a vector as well.

The underlying probability space is called $(\Omega,\mathcal{A},P)$.
The $\sigma$-Algebra $\mathcal{A}$ is equal to the generated $\sigma$-Algebra
of the family of $\sigma$-Algebras $\lbrace\mathcal{A}_t\rbrace_{t\in\N}$,
to which $(X_{t},V_{t},L_{t})_{t\in\N}$ is adapted. 
\end{defi}
One can think of $\mathcal{A}_t$ as the mathematical object carrying the information known at time $t$.
Mainly $\mathcal{A}_t$ refers to the product $\sigma$-algebra for the random variables $r_t^n$ and $s_t^n$,
but we will introduce further random variables,
which will be measurable by some $\mathcal{A}_t$ or $\mathcal{A}$ as well,
to expand our model.
If after the $t$'th step the process is stopped, the \emph{solution} is $G_t^1$.
$G_t^1$ is $\mathcal{A}_t$-measurable because it is the $\argmin$ of the local attractors $L_t^1,\ldots,L_t^N$,
whereas $G_t$ is not $\mathcal{A}_t$-measurable as it partially depends on $L_{t+1}$.
In \cite{bauer1996probability}, one can find a comprehensive introduction into $\sigma$-algebras and filtrations.
Informally, $\mathcal{A}_t$ captures the information of the underlying stochastic process,
that is available at time $t$,
i.\,e., a random variable is $\mathcal{A}_t$-measurable,
iff its value is determined at time $t$ and for a random variable $V$, $\E[V \mid \mathcal{A}_t]$ is the expectation of $V$ under taking the information of every time $t'\le t$ into account.
$\E[V \mid \mathcal{A}_t]$ is an $\mathcal{A}_t$-measurable random variable,
e.\,g., if $V$ is $\mathcal{A}_t$-measurable then $\E[V \mid \mathcal{A}_t]=V$.

The update process of a particle is visualized in Figure \ref{figure:particleUpdate}.
\begin{figure}[htbp]
\centering
{\includegraphics[width=0.5\textwidth]{./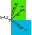}}
\caption{Update process of a particle with $\chi=0.72984$ and $c_1=c_2=1.496172$: Position $X$, velocity $V$, local attractor $L$ and global attractor $G$}
\label{figure:particleUpdate}
\end{figure}
As specified in the movement equations of Definition \ref{def:classicalPSO}, the new velocity $V_{t+1}^n$ of the particle $n$ consists of
a fraction of the old particle without any randomness $\chi\cdot V_{t}^n$,
a randomized part of the vector from the current position to the local attractor $c_1\cdot r_t^n\odot(L_{t}^n-X_t^n)$
and a randomized part of the vector from the current position to the global attractor $c_2\cdot r_t^n\odot(G_{t}^n-X_t^n)$.
With PSO parameters $\chi=0.72984$ and $c_1=c_2=1.496172$,
as proposed in \cite{CK:02}, 
the terminal point of the randomized vector from the current position to the local attractor is sampled uniformly in the blue area of Figure \ref{figure:particleUpdate}
and the terminal point of the randomized vector from the current position to the global attractor is sampled uniformly in the green area of Figure \ref{figure:particleUpdate}.
A possible assignment for the three vectors is visualized by the dashed vectors.
The accumulation of the three parts yields the new velocity $V_{t+1}^n$ and the new position $X_{t+1}^n$ can be reached by moving $X_{t}^n$ by $V_{t+1}^n$.
After the update of the velocity and the position, the local and global attractor could be updated.
If also this step is executed, then it can be proceeded with the next particle.
Algorithm \ref{alg:classicalPSO} determines the respective approach with pseudo code.
\begin{algorithm}[htbp]
\caption{Classical PSO}
\label{alg:classicalPSO}

\SetKwInOut{Input}{input}
\SetKwInOut{InOut}{in/out}
\SetKwInOut{Output}{output}
\SetKwInOut{Data}{data}

\SetKw{KwTo}{to}%

\Input{objective function $f:\R^D\rightarrow\R$}
\Output{an optimized position $G\in\R^D$ for the objective function $f$}
\BlankLine
\tcc{initialize all $N$ particles:}
\tcc{the positions $X\in(\R^D)^N$ and the velocities $V\in(\R^D)^N$}
$(X,V) \ASSIGN$ getInitialPositionsAndVelocities()\;
\tcc{the local attractors $L\in(\R^D)^N$}
$L\ASSIGN X$\;
\tcc{initialize the global attractor $G\in\R^D$}
$G\ASSIGN \argmin_{x\in\lbrace L[1],\ldots,L[N]\rbrace}f(x)$\;
\BlankLine
\While{termination criterion not fulfilled}
{
        \For{$n \ASSIGN 1$ \KwTo $N$}
        {
                \For{$d \ASSIGN 1$ \KwTo $D$}
                {
                        \tcc{update $d$'th velocity coordinate of $n$'th particle}
                        \tcc{rand($a,b$) supplies a uniform random value in $[a,b]$}
                        $V[n][d]\ASSIGN\chi\cdot V[n][d]+$\\
			$+c_1\cdot\text{rand($0.0,1.0$)} \cdot(L[n][d]-X[n][d])$\\
                        $+c_2\cdot\text{rand($0.0,1.0$)} \cdot(G[d]-X[n][d])$
                }
                \tcc{update position of $n$'th particle}
                $X[n]\ASSIGN X[n] + V[n]$\;
                \tcc{update local attractor of $n$'th particle}
                \If{$f(X[n]) < f(L[n])$}
                {
                        $L[n]\ASSIGN X[n]$\;
                }
                \tcc{update global attractor}
                \If{$f(X[n]) < f(G)$}
                {
                        $G\ASSIGN X[n]$\;
                }
        }
}
\BlankLine
\Return $G$ \;
\end{algorithm}

\newpage
\section{Stagnation Analysis of PSO}
\label{sec:theory}
In this chapter some assumptions are made, such that theoretical analysis can be done.
With these assumptions theoretical proofs are provided,
which state that PSO does not reach local optima almost surely if the number of particles is too small.
\subsection{Theoretical Tools}
First of all some theoretical tools are introduced to make later proofs more manageable.
The following technical, well known lemma, which will be used in the proof of Lemma \ref{lem:clt},
gives sufficient conditions for an infinite product of probabilities to have a positive value.
\begin{lem}\label{lem:prod}
If $0\le a_i<1$ for every $i\in\mathbb{N}$ and
$\sum_{i=1}^\infty a_i<\infty$, then
$$
\smash[t]{\prod_{i=1}^\infty (1-a_i)>0}
\enspace.
$$
\end{lem}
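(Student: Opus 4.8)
The plan is to take the logarithm and reduce the infinite product to an infinite sum, which is the standard approach for convergence of products. Since each factor satisfies $0 \le 1-a_i \le 1$, I first want to handle the degenerate possibility that some factor is zero. Because $a_i < 1$ strictly for every $i$, no individual factor $1-a_i$ vanishes, so each partial product is strictly positive; the only way the infinite product could fail to be positive is if the partial products tend to $0$, i.e. $\sum_{i=1}^\infty \log(1-a_i) = -\infty$. Thus it suffices to show this series converges (to a finite value).

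The key analytic tool is the elementary inequality $\log(1-x) \ge -\frac{x}{1-x}$, or more simply a bound showing $-\log(1-a_i)$ is comparable to $a_i$ when $a_i$ is small. First I would note that since $\sum_i a_i < \infty$, the terms satisfy $a_i \to 0$, so for all sufficiently large $i$ we have $a_i \le 1/2$. On that tail, the inequality $-\log(1-x) \le 2x$ holds for $0 \le x \le 1/2$ (equivalently $\log(1-x) \ge -2x$, which follows from a standard one-variable estimate). Therefore
\[
\sum_{i} \bigl(-\log(1-a_i)\bigr) \le 2\sum_{i} a_i < \infty,
\]
where the sum ranges over the tail where $a_i \le 1/2$. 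The finitely many initial terms each contribute a finite quantity since each $a_i < 1$ makes $-\log(1-a_i)$ finite. Hence $\sum_{i=1}^\infty -\log(1-a_i)$ is finite.

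It then follows that $\sum_{i=1}^\infty \log(1-a_i)$ converges to a finite real number $S \le 0$, so the partial products $\prod_{i=1}^k (1-a_i) = \exp\!\bigl(\sum_{i=1}^k \log(1-a_i)\bigr)$ converge to $\exp(S) > 0$. Since $\exp$ is continuous and strictly positive, the infinite product equals $\exp(S) > 0$, which is the claim.

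I do not expect any serious obstacle here; this is a classical lemma and the work is entirely routine. The only point that requires a little care is justifying the elementary logarithmic inequality $-\log(1-x) \le 2x$ on $[0, 1/2]$ and correctly separating the finitely many large terms (where $a_i$ could be close to $1$) from the tail. Splitting the sum at the index beyond which $a_i \le 1/2$ cleanly handles this, so the proof is short.
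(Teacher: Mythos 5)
Your proof is correct, but it takes a different route from the paper. You pass to logarithms, split off the finitely many indices where $a_i>1/2$, and bound the tail via $-\log(1-x)\le 2x$ on $[0,1/2]$, concluding that $\sum_i\log(1-a_i)$ converges to a finite $S$ and hence the product equals $\exp(S)>0$. The paper avoids logarithms entirely: it observes that positivity of the full product is equivalent to positivity of any tail product, chooses $n_0$ with $\sum_{i\ge n_0}a_i<1$, and applies the Weierstrass product inequality $\prod_{i\ge n_0}(1-a_i)\ge 1-\sum_{i\ge n_0}a_i>0$, which follows by a one-line induction from $(1-a)(1-b)\ge 1-a-b$ for $a,b\ge 0$. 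The paper's argument is shorter and needs no calculus estimate; yours is the more standard general treatment of infinite products and gives slightly more (an explicit expression $\exp(S)$ for the limit, and convergence of the log-series), at the cost of justifying the elementary logarithmic inequality and handling the initial segment separately. Both are complete and correct for the purpose the lemma serves in Lemma~\ref{lem:clt}.
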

\begin{proof}
$\prod_{i=1}^{\infty} (1-a_i)>0$ iff
$\exists n_0\in\N:\allowbreak\prod_{i=n_0}^{\infty} (1-a_i)>0$. 
Choose $n_0$ sufficiently large such that
$\sum_{i=n_0}^{\infty} a_i<1$. It follows that
$\prod_{i=n_0}^{\infty} (1-a_i)\ge 1-\sum_{i=n_0}^{\infty} a_i>0$.
\end{proof}
In the following, 
the Markov's inequality is applied to prove that 
any finite sum of independent, identically distributed random variables,
with negative expectation and finite first six moments each,
stays below zero with probability $>0$.
\begin{lem}
\label{lem:clt}
Let $(I_t)_{t\in\N}$ be independent identically distributed (i.i.d.) random variables with expected value $\mu^*$, let $\mu<0$, $M>0$ and $p_0>0$ be constants.
If $\mu^*\le \mu$, $\E[(I_t-\mu^*)^i]\le M$ for all $i\in\lbrace 1,\ldots,6\rbrace$ and $\P(I_t\le0)\ge p_0$, then:
\begin{itemize}
\item $\P(\sum_{\tilde t=0}^{t-1} I_{\tilde t}> 0)\le\frac{C}{t^{3}}$ for a constant $C>0$, such that $C:=C(\mu,M)$, and for all $t>0$,
\item $\P(\forall t\in\N:\sum_{\tilde t=0}^{t-1} I_{\tilde t}\le 0)\ge p$ for a constant $p>0$, such that $p:=p(\mu,M,p_0)$.
\end{itemize}
\end{lem}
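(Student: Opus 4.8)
The plan is to treat the two bullets in turn after passing to the centered variables $Y_t := I_t-\mu^*$, which are i.i.d.\ with $\E[Y_t]=0$ and $\E[Y_t^i]\le M$ for $i\le 6$; by Lyapunov's inequality each absolute moment $\E[|Y_t|^i]$ with $i\le 6$ is then also bounded by a constant depending only on $M$. Throughout I write $S_t := \sum_{\tilde t=0}^{t-1} I_{\tilde t}$.

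For the first bullet I would observe that $\{S_t>0\}$ forces the centered sum $W_t := S_t-t\mu^*=\sum_{\tilde t} Y_{\tilde t}$ to exceed $-t\mu^*\ge t|\mu|>0$. Since the sixth power is nonnegative and increasing on the positive axis, Markov's inequality gives $\P(S_t>0)\le \E[W_t^6]/(t|\mu|)^6$. The heart of this part is the estimate $\E[W_t^6]\le C' t^3$: expanding $W_t^6=(\sum_{\tilde t} Y_{\tilde t})^6$ and using independence together with $\E[Y_{\tilde t}]=0$, every monomial in which some index occurs exactly once has expectation zero, so only tuples whose distinct indices each occur at least twice survive. Among six factors there can be at most three distinct such indices, so there are only $O(t^3)$ surviving tuples, each with an expectation bounded in terms of $M$. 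Dividing by $(t|\mu|)^6$ yields $\P(S_t>0)\le C/t^3$ with $C=C(\mu,M)$.

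For the second bullet the difficulty is that summing the $C/t^3$ bounds over all $t$ need not give a total below $1$, so the union bound alone cannot certify positivity; the small-$t$ terms must be handled separately. My plan is a two-stage construction. In the first stage I force the first $m$ increments to be nonpositive, an event of probability at least $p_0^m>0$ by independence and $\P(I_t\le0)\ge p_0$; on it every partial sum up to time $m$ is automatically $\le0$. Conditioned on $\{I_0\le0,\dots,I_{m-1}\le0\}$ the increments are i.i.d.\ copies of $I$ given $I\le0$, whose mean is still $\le\mu<0$ and whose moments remain bounded (inflated by at most a factor $1/p_0$); reapplying the first bullet to this conditioned walk shows that for $m$ large enough one has $\P(S_m\le -a)\ge\tfrac12$ for any prescribed level $a$ once $m\gtrsim a/|\mu|$. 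Hence with probability at least $\tfrac12 p_0^m$ the walk reaches a buffer $S_m\le -a$ without ever having been positive.

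In the second stage I exploit that the increments after time $m$ are independent of the first stage. Writing $S_t=S_m+\sum_{j=m}^{t-1} I_j$ for $t>m$, the event $\{S_t\le0\ \forall t>m\}$ is implied by the fresh walk never rising by more than $a$, i.e.\ $\sup_{k\ge1}\sum_{j=m}^{m+k-1} I_j\le a$. Its running-maximum tail is again controlled by the first bullet: summing $\P(\sum_{j} I_j > a)\le C' k^3/(a+k|\mu|)^6$ over $k$ gives a bound $C''/a^2$ for the probability of ever exceeding $a$, and it is precisely this convergent series that, in the spirit of Lemma~\ref{lem:prod} ($\sum a_i<\infty$ forcing a strictly positive infinite product), certifies a positive probability of staying below the level forever. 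Choosing $a$ so large that $C''/a^2\le\tfrac12$ and fixing the corresponding $m$, independence of the two stages yields $\P(\forall t:\ S_t\le0)\ge \tfrac12 p_0^m\cdot\tfrac12 = p(\mu,M,p_0)>0$. The main obstacle, and the step I would treat most carefully, is exactly this coupling of ``head'' and ``tail'': one must drive the walk to a genuinely negative level (not merely keep it $\le0$) before releasing the fresh increments, and one must verify that conditioning on nonpositive increments preserves both the negative drift and the moment bounds required to reapply the first bullet.
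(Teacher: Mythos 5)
Your proof of the first bullet is the same as the paper's: expand the centered sixth moment, observe that only index patterns in which every distinct index occurs at least twice survive (giving $O(t^3)$ terms), and apply Markov's inequality at level $-\mu^* t$. For the second bullet, however, you take a genuinely different route. The paper proves a correlation-type inequality, $\P\big(\sum_{t=0}^{\tilde T}I_t\le 0\mid\forall t'<\tilde T:\sum_{t=0}^{t'}I_t\le 0\big)\ge\P\big(\sum_{t=0}^{\tilde T}I_t\le 0\big)$, via a convex-combination-and-induction argument exploiting independence, writes the target probability as the telescoping product of these conditional probabilities, lower-bounds each factor by $1-\min(1-p_0^{\tilde t},C/\tilde t^3)$, and invokes Lemma \ref{lem:prod}. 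You instead use a two-stage ``buffer'' construction: force the first $m$ increments to be nonpositive (cost $p_0^m$), note that the conditioned increments remain i.i.d.\ with mean $\le\mu<0$ and moments bounded in terms of $M$ and $p_0$ so that the walk reaches level $-a$ by time $m\asymp a/\vert\mu\vert$ with conditional probability $\ge\tfrac12$, and then control the fresh post-$m$ walk by a union bound $\sum_k C'k^3/(a+k\vert\mu\vert)^6=O(a^{-2})$, choosing $a$ so this is $\le\tfrac12$. Both arguments are correct and yield $p=p(\mu,M,p_0)$. The paper's route avoids any conditioning of the increment distribution and never needs the walk to reach a strictly negative level, at the price of the somewhat delicate conditional-probability inequality; your route is closer to standard first-descent arguments for random walks and replaces that inequality by plain independence of the two stages, at the price of the extra (but correct, and correctly flagged) verifications that conditioning on $I\le 0$ preserves the negative drift and that the centered moments about the new conditional mean stay bounded — the latter needs $\vert\E[I\mid I\le 0]-\mu^*\vert$ to be controlled via the unconditional absolute first moment and $p_0$, which your Lyapunov remark supplies.
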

\begin{proof}
As $\E[I_0-\mu^*]=0$ and $I_{\tilde t}$ are i.i.d., we get
\begin{align*}
\E\Big[\Big(\sum_{\tilde t=0}^{t-1}(I_{\tilde t}-\mu^*)\Big)^6\Big]=
&t\cdot \E[(I_0-\mu^*)^6]+10t(t-1)\cdot \E[(I_0-\mu^*)^3]^2 
\\&
+15 t(t-1)\E[(I_0-\mu^*)^4]\E[(I_0-\mu^*)^2]
+15 t(t-1)(t-2)\E[(I_0-\mu^*)^2]^3
\\
\le& \tilde C\cdot t^3,
\end{align*}
where
\begin{align*}
\tilde C:=&\tilde C(M):=M+10\cdot M^2+15\cdot M^2+15\cdot M^3\\
\ge& \E[(I_0-\mu^*)^6]+10\E[(I_0-\mu^*)^3]^2
+15 \E[(I_0-\mu^*)^4]\E[(I_0-\mu^*)^2] +15\E[(I_0-\mu^*)^2]^3,
\end{align*}
because every product which contains $E[(I_{\tilde t}-\mu^*)^1]$ as a factor is zero
and therefore only the exponent six, the pair of exponents two and four,
the pair of exponents three and three and the tuple of exponents two, two and two remain.
With the Markov's inequality on the sixth power of the centralized sum, which is a positive random variable,we get
\begin{equation*}
\P\Big(\sum_{\tilde t=0}^{t-1} I_{\tilde t}> 0\Big)\le\P\Big(\sum_{\tilde t=0}^{t-1}(I_{\tilde t}-\mu^*)\ge -\mu^* t\Big)
\end{equation*}
\begin{equation*}
\overset{\text{Markov}}{\le}\frac{\E[(\sum_{\tilde t=0}^{t-1}(I_{\tilde t}-\mu^*))^6]}{(\mu^* t)^6}\le\frac{\tilde C\cdot t^3}{(\mu^* t)^6}
\overset{\vert\mu^*\vert\ge\vert\mu\vert}{\le}\frac{\tilde C}{\mu^6t^3}\le \frac{C}{t^3}
\end{equation*}
where $C:=C(\mu,M):={\tilde C(M)}/{\mu^6}$.
Those inequalities prove the first part of Lemma \ref{lem:clt}.
To complete the proof further inequalities are needed, which will be received with conditional probabilities.
Let $\tilde t<\tilde T$ be a non negative integer.
\begin{eqnarray*}
&&   \P\Big(\sum_{t=0}^{\tilde T}         I_t\le 0\Bigm\vert \sum_{t=0}^{\tilde t}I_t\le0\wedge \forall t'<\tilde t:\sum_{t=0}^{ t'}I_t\le0\Big)\\
&\ge&\P\Big(\sum_{t=\tilde t+1}^{\tilde T}I_t\le 0\Bigm\vert \sum_{t=0}^{\tilde t}I_t\le0\wedge \forall t'<\tilde t:\sum_{t=0}^{ t'}I_t\le0\Big)\\
&=&  \P\Big(\sum_{t=\tilde t+1}^{\tilde T}I_t\le 0\Bigm\vert \sum_{t=0}^{\tilde t}I_t>0  \wedge \forall t'<\tilde t:\sum_{t=0}^{ t'}I_t\le0\Big)\\
&\ge&\P\Big(\sum_{t=0}^{\tilde T}         I_t\le 0\Bigm\vert \sum_{t=0}^{\tilde t}I_t>0  \wedge \forall t'<\tilde t:\sum_{t=0}^{ t'}I_t\le0\Big).
\end{eqnarray*}
As
\begin{equation*}
\P\Big(\sum_{t=0}^{\tilde T}I_t\le 0\Bigm\vert\forall t'<\tilde t:\sum_{t=0}^{t'}I_t\le0\Big)
\end{equation*}
is a convex combination of the first and the last term of the recent series of inequalities, it is less or equal to the greater value
\begin{equation*}
\P\Big(\sum_{t=0}^{\tilde T}I_t\le 0\Bigm\vert \sum_{t=0}^{\tilde t}I_t\le0\wedge \forall t'<\tilde t:\sum_{t=0}^{ t'}I_t\le0\Big)
\end{equation*}
and by induction
\begin{equation*}
\P\Big(\sum_{t=0}^{\tilde T}I_t\le0\Bigm\vert\forall t'<\tilde T:\sum_{t=0}^{t'}I_t\le0\Big)\ge\P\Big(\sum_{t=0}^{\tilde T}I_t\le0\Big).
\end{equation*}
Summing up all these probabilities leads to a sum, which is bounded:
\begin{equation}
\label{lem:clt:infiniteSum}
\sum\limits_{t=1}^\infty \P\Big(\sum\limits_{\tilde t=0}^{t-1}I_{\tilde t}> 0\Big)\le
\sum\limits_{t=1}^{\infty}\frac{C}{t^3}<\infty
\end{equation}
Since $\P(I_t\le 0)\ge p_0>0$, it follows that
$$
\P\Big(\forall t\le T:\sum_{\tilde t=0}^{t-1} I_{\tilde t}\le 0\Big)\ge \P(\forall t\le T:I_t\le 0)\ge p_0^T>0.
$$
Therefore the probability to reach no positive value is positive for fixed finite times.
As mentioned in equation \ref{lem:clt:infiniteSum} the infinite sum with the probabilities,
which specify the probability to remain non-positive at a specific time,
is bounded and therefore Lemma \ref{lem:prod} is applicable.\\
Altogether we have
\begin{eqnarray*}
\lefteqn{\P\Big(\forall t\in\N:\sum\limits_{\tilde t=0}^{t-1}I_{\tilde t}\le 0\Big)=
\lim\limits_{t \rightarrow \infty}\P\Big(\forall t' \le t:\sum\limits_{\tilde t=0}^{t'-1}I_{\tilde t}\le 0 \Big)}\allowdisplaybreaks\\
&=&\P(I_0\le 0)\prod\limits_{\tilde t=2}^\infty \P\Big(\sum\limits_{t=0}^{\tilde t-1}I_{t}\le 0
\Bigm\vert \forall t'<\tilde t:
\sum\limits_{t=0}^{ t'-1}I_{t}\le 0 \Big)\allowdisplaybreaks\\
&\ge&\prod\limits_{\tilde t=1}^\infty \P\Big(\sum\limits_{t=0}^{\tilde t-1}I_{t}\le 0\Big)\\
&\ge&\underbrace{\prod\limits_{\tilde t=1}^\infty \Big(1-\min\Big(1-p_0^{\tilde t}, \frac{C}{{\tilde t}^3}\Big)\Big)}_{>0\text{ with Lemma \ref{lem:prod}}}=:p>0.
\end{eqnarray*}
Obviously $p$ depends only on $\mu$, $M$ and $p_0$ because in the last expression only $C:=C(\mu,M)$ and $p_0$ appear.
\end{proof}
The next lemma provides some estimation for the moments of two random variables if a bound for the sixth moment of their sum is provided.
\begin{lem}\label{lem:boundedMoments}
Let $A$ and $B$ be independent random variables with finite first six moments and expectation zero.
There exists a fixed function $h:\R\rightarrow\R$ such that for each $i\in\lbrace 1,\ldots,6\rbrace$
\begin{itemize}
\item $\vert \E[(A+B)^i]\vert \le h(M)$
\item $\vert \E[A^i]\vert \le h(M)$
\item $\vert \E[B^i]\vert \le h(M)$
\end{itemize}
if $\E\left[(A+B)^6\right]\le M$.
\end{lem}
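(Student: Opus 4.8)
The plan is to dispatch the three bounds involving the sum directly and then to bound the individual moments of $A$ and $B$ by a bootstrapping argument that climbs from the second moment up to the sixth. For the sum, I would exploit that the exponent is even, so $\E[(A+B)^6]=\E[|A+B|^6]\le M$; Lyapunov's inequality (monotonicity of the $L^p$-norms) then gives $|\E[(A+B)^i]|\le\E[|A+B|^i]\le M^{i/6}$ for every $i\in\{1,\dots,6\}$, which already settles the first bullet. The genuine work is to control $\E[A^i]$ and $\E[B^i]$ using only information about the sum. Write $a_k:=\E[A^k]$ and $b_k:=\E[B^k]$, so that $a_0=b_0=1$ and $a_1=b_1=0$ by the zero-mean hypothesis.

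The structural observation I would build on is that, because $A$ and $B$ are independent with mean zero, the binomial expansion of an \emph{even} moment of the sum consists almost entirely of non-negative terms: independence gives $\E[A^kB^{m-k}]=a_kb_{m-k}$, and the vanishing first moments kill every summand carrying a factor $a_1$ or $b_1$. Concretely,
\[
\E[(A+B)^2]=a_2+b_2,\qquad \E[(A+B)^4]=a_4+6\,a_2b_2+b_4 .
\]
Since all even moments $a_2,a_4,b_2,b_4$ are non-negative, each identity exhibits the quantity of interest as a single non-negative summand of something already bounded by a power of $M$, so $a_2,b_2\le M^{1/3}$ and $a_4,b_4\le M^{2/3}$. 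The odd moment $a_3$ is then controlled by Cauchy--Schwarz, $|a_3|=|\E[A\cdot A^2]|\le a_2^{1/2}a_4^{1/2}\le M^{1/2}$, and likewise $|b_3|\le M^{1/2}$.

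The sixth moment is where the main obstacle lies, and it is what forces the ladder above. Here the expansion reads
\[
\E[(A+B)^6]=a_6+b_6+15\,(a_2b_4+a_4b_2)+20\,a_3b_3 ,
\]
and the cross term $20\,a_3b_3$ has \emph{indefinite} sign: a priori it could be a large negative number, which would block any attempt to read off a bound on $a_6+b_6$. Having already shown $|a_3|,|b_3|\le M^{1/2}$, I would estimate $-20\,a_3b_3\le 20\,|a_3|\,|b_3|\le 20M$, while the remaining cross term $15(a_2b_4+a_4b_2)$ is non-negative and can simply be dropped. Rearranging the identity then yields $a_6+b_6\le \E[(A+B)^6]+20M\le 21M$, and since both summands are non-negative, $a_6\le 21M$ and $b_6\le 21M$.

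With the sixth moments of $A$ and $B$ in hand, a second application of Lyapunov's inequality finishes the job: $|\E[A^i]|\le\E[|A|^i]\le a_6^{i/6}\le(21M)^{i/6}$ for $i\le 6$, and analogously for $B$. Taking $h(M)$ to be the maximum of the finitely many expressions $M^{i/6}$ and $(21M)^{i/6}$, $i\in\{1,\dots,6\}$, produces one fixed function dominating all three families of moments. The only delicate point I anticipate is taming the sign-indefinite term $20\,a_3b_3$; everything else reduces to monotonicity of norms and the non-negativity of even moments.
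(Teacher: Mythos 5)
Your proof is correct and follows essentially the same route as the paper's: expand the even moments of the sum, use independence and the vanishing first moments to drop the non-negative cross terms, and tame the sign-indefinite term $\E[A^3]\E[B^3]$ in the sixth-moment expansion by first bounding the third moments. The only differences are cosmetic --- you pass between moments of different orders via Lyapunov's inequality and Cauchy--Schwarz, where the paper uses the cruder bound $\E[X^i]\le 1+\E[X^6]$ together with a $\pm A$ symmetry argument for the odd moments.
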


\begin{proof}
For each $i\in\lbrace 1,\ldots,6\rbrace$ we define fixed functions $h_{1,i}:\R\rightarrow\R$, and $h_{2,i}:\R\rightarrow\R$ such that
\begin{itemize}
\item $\vert \E[(A+B)^i]\vert \le h_{1,i}(M)$,
\item $\vert \E[A^i]\vert \le h_{2,i}(M)$ and
\item $\vert \E[B^i]\vert \le h_{2,i}(M)$,
\end{itemize}
if $\E\left[(A+B)^6\right]\le M$.

All expected values exist as specified.
Only the inequalities without absolute value need to be shown because all evaluations can also be made with $-A$ instead of $A$ and $-B$ instead of $B$, which guarantees the inequality for odd $i$.
For even $i$ the absolute value has no effect on the equation because the value inside the absolute value is already non-negative.
As random variables $A$ and $B$ are independent,
the expectation of a product can easily be separated.
For $i=1$ all expected values are zero and therefore the following functions are possible:
$$
h_{1,1}(M):=h_{2,1}(M):=0.
$$
$h_{1,i}(M):=M+1$ for all $i\in\lbrace 2,\ldots,6\rbrace$, because
\begin{align}
\E[(A+B)^i]\le& \E[\max(1,(A+B))^i)]\nonumber\\
\overset{i\le 6}{\le}&
\label{equ:lem:boundedMomentsHelper}
\E[\max(1,(A+B)^6)]\\
\le& \E[1 +(A+B)^6)]\nonumber\\
\le& 1+M.\nonumber
\end{align}
\begin{align*}
E[(A+B)^4]=&
\E[A^4]+4\E[A^3]\underbrace{\E[B^1]}_{=0}
+\underbrace{6\E[A^2]\E[B^2]}_{\ge 0}
+4\underbrace{\E[A^1]}_{=0}\E[B^3]+\E[B^4],
\end{align*}
which leads to
$$
\underbrace{\E[A^4]}_{\ge 0}+\underbrace{E[B^4]}_{\ge 0}\le \E[(A+B)^4]\le h_{1,4}(M)
$$
and therefore
$$
h_{2,4}(M):=h_{1,4}(M)=M+1
$$
is possible.
Similar to inequalities \ref{equ:lem:boundedMomentsHelper}
$$
h_{2,i}(M):=h_{2,4}(M)+1=M+2
$$
is received for all $i\in\lbrace 1,2,3\rbrace$.
Similar to the case $i=4$ we receive
\begin{align*}
\underbrace{\E[(A+B)^6]}_{\le h_{1,6}(M)}=&\E[A^6]+\binom{6}{2}\underbrace{\E[A^4]\E[B^2]}_{\ge 0}
+\binom{6}{3}\underbrace{\E[A^3]\E[B^3]}_{\ge -h_{2,3}(M)^2}
+
\binom{6}{4}\underbrace{\E[A^2]\E[B^4]}_{\ge 0}+\E[B^6]
\end{align*}
$$
\Rightarrow \underbrace{\E[A^6]}_{\ge 0}+\underbrace{\E[B^6]}_{\ge 0}\le h_{1,6}(M)+\binom{6}{3}h_{2,3}(M)^2=:h_{2,6}(M)
$$
and finally
$h_{2,5}(M):=1+h_{2,6}(M)$.

For each $M$ the maximal function value of all functions $h_{i,j}$ is a suitable choice for the function value of $h$.
$$
h(M):=\max_{i\in\lbrace 1,2\rbrace,\,j\in\lbrace 1,\ldots,6\rbrace}h_{i,j}(M)
$$
\end{proof}
\subsection{Potential and Stagnation Phases}
Now the idea of a stagnation measure is introduced, which is a multidimensional extension to the potential
used in \cite{SWa:13, SWb:13, SWc:15}.
For every step, a $D$-dimensional vector of potentials is evaluated -- one potential value for each dimension.
It is intended that the greater the value of such a potential for a single dimension is,
the greater is the impact of this dimension on the behavior of the swarm,
i.\,e., the greater is the portion of the change in the function value,
which is due to the movement in that dimension.
This property is not declared in the definition because it cannot be quantified in a strict way.
It will be specified in detail in the Assumptions
\ref{ass:insignificance1} and 
\ref{ass:insignificance2}.
Furthermore, the logarithmic potential is defined, which compares the impact of a specific dimension with the maximal impact along all dimensions.
The dimension which has currently the highest impact on the swarm has a logarithmic potential of zero and
all other dimensions will have no larger logarithmic potential.
Since the convergence analysis in \cite{JLY:07} implies that the general movement of a converging
particle swarm drops exponentially, a logarithmic scale is used and linear decrease is expected.
\begin{defi}[Potential]\label{def:potential}
Let $\tilde \Phi:\R^{3\cdot N\cdot D}\rightarrow\R^D$ be a measurable function.
Let $\triangle t$ be a positive integer constant, which will be called the \emph{step width}.
$\Phi(t,d)$ is defined as
$$
\Phi(t,d):=(\tilde\Phi(X_{t\cdot\triangle t},V_{t\cdot\triangle t},L_{t\cdot\triangle t}))_d.
$$
$\Phi:\N_0\times\lbrace 1,\ldots,D\rbrace\rightarrow (\Omega\rightarrow\R)$ is a function
which evaluates to a random variable for each pair in $\N_0\times\lbrace 1,\ldots,D\rbrace$.
$\Phi$ is called a \emph{potential} if $\Phi(t,d)$ is positive almost surely for all $t$ and $d$.
Additionally
\begin{align*}
\Psi(t,d):=&
\log\Big({\Phi(t,d)}\big/{\max\limits_{\tilde d\in\lbrace 1,\ldots,D\rbrace}\Phi(t,\tilde d)}\Big)
=
\log\Big(\Phi(t,d)\Big)-\log\Big(\max\limits_{\tilde d\in\lbrace 1,\ldots,D\rbrace}\Phi(t,\tilde d)\Big)
\end{align*}
is called a \emph{logarithmic potential} and
$(I_{t, d})_{t\in\N}$,
with $I_{t, d}:=\Psi(t+1, d)-\Psi(t, d)$ are called \emph{increments of dimension $d$}.
$\Phi(t,d)$, $\Psi(t,d)$ and $I_{t-1,d}$ are $\mathcal{A}_{t\cdot\triangle t}$-measurable random variables,
where $\mathcal{A}_t$ is the $\sigma$-algebra, which is specified in Definition \ref{def:classicalPSO}.
\end{defi}
In this paper the expression $\log$ is associated with the logarithm of base $2$.
The step width $\triangle t$ specifies how the time is scaled in respect to the potential,
which means that the potential is only evaluated for PSO configurations $(X_{t},V_{t},L_{t})$ if $t$ is a multiple of $\triangle t$.
The effect of $\triangle t$ will be explained later.
In Section \ref{sec:experiments} and in all figures a potential is used which will be similar to the item-wise
product of the current velocity and the gradient of the function at the current position.
More precisely, the following potential will be used:
\begin{defi}[Experimental potential]\label{def:experimentalPotential}
The potential is defined as
$$
\Phi(t,d):=\max_{n\in\lbrace 1,\ldots,N\rbrace}\vert f(X^n_{\triangle t\cdot t}) -f(\tilde X^{n,d}_{\triangle t\cdot t})\vert
$$
where $\big(\tilde X^{n,d}_t\big)_{\tilde d}:=\begin{cases}
X^{n,\tilde d}_t+V^{n,\tilde d}_t, & \text{if }\tilde d=d,\\
X^{n,\tilde d}_t, & \text{otherwise}
\end{cases}$\\
and $f$ is the objective function which should be optimized.
\end{defi}
$\tilde X^{n,d}_t$ represents the position of a particle if only a step in dimension $d$ is done.
The deviation in one dimension is defined as a limit of a difference quotient.
We get this difference quotient by dividing the potential by the velocity.
If the absolute value of the velocity becomes very small we get something similar to the item-wise product of the
current velocity and the gradient as already mentioned.
If the absolute value of the gradient in some dimension tends to zero then higher deviations influence the potential.
It might be possible to use other potentials but for the objective functions, which are considered, this potential is sufficient.
Mainly there are two prerequisites which need to be fulfilled such that this potential is suitable.
Firstly, the velocities need to be nonzero almost surely.
If not all particles are initialized at the same point,
then the velocities will not encounter a value of zero almost surely.
Even for zero velocity initialization this is almost surely true, if the evaluation of the potential starts at the second step and standard position initialization is used.
Secondly, the objective function needs to fulfill the property that there exists no set of positive measure such that each point evaluates to the same function value.
For most continuous functions which have no plateaus this prerequisite is achieved.
Therefore this potential is positive for such functions almost surely if a standard PSO initialization is used.

By means of the logarithmic potential,
a stagnation phase can be defined.
If the logarithmic potential of a dimension becomes very low,
then there are other dimensions which have currently much more impact on the swarm behavior.
Finally, a dimension which has very low logarithmic potential for a long period of time will be far away from an optimized position in this dimension.
To specify this activity more in detail, the following definition is stated:
\begin{defi}[$(\triangle t, \Phi, N_0, c_0, c_s)$-Stagnation phase] \label{def:stagnationPhase}
Let $\triangle t$ be a constant step width, $\Phi$ a potential with the associated logarithmic potentials $\Psi$
and their increments $I_{t,d}$, $N_0$ a positive constant integer less than $D$ and $c_0\le c_s<0$ negative constants.
The following stopping times are defined:
\begin{align*}
\beta_{-1}:=&0\\
&\text{and inductively for all }i\ge 0:\\
\alpha_{i}:=&\triangle t\cdot\inf\lbrace t\ge\frac{\beta_{i-1}}{\triangle t}:\vert\lbrace d
\in\lbrace 1,\ldots,D\rbrace
:\Psi(t,d)\le c_0\rbrace\vert\ge N_0\rbrace
\\
\beta_i     :=&\triangle t\cdot\inf\lbrace t\ge\frac{\alpha_{i}}{\triangle t}:\vert\lbrace d
\in\lbrace 1,\ldots,D\rbrace
:\Psi(\frac{\alpha_i}{\triangle t},d)\le c_0
\wedge\max_{\frac{\alpha_i}{\triangle t}\le t'\le t}\Psi(t',d)\le c_s\rbrace\vert <  N_0\rbrace\\
\end{align*}
If $\alpha_i$ is finite, then it is said that the $i$'th $(\triangle t,$ $\Phi,$ $N_0,$ $c_0,$ $c_s)$-stagnation phase starts at time $\alpha_i$.
If $\beta_i$ is finite too, then it is said that the $i$'th $(\triangle t,$ $\Phi,$ $N_0,$ $c_0,$ $c_s)$-stagnation phase ends at time $\beta_i$.
If $\alpha_i$ is finite but $\beta_i$ is not finite then the $i$'th $(\triangle t,$ $\Phi,$ $N_0,$ $c_0,$ $c_s)$-stagnation phase does not end.
The event that a $(\triangle t,$ $\Phi,$ $N_0,$ $c_0,$ $c_s)$-stagnation phase starts at time $T_0\triangle t$ is defined as
$\lbrace \exists i\in\N_0:\alpha_i=T_0\triangle t\rbrace$.
\end{defi}
$N_0$ defines some minimal number of dimensions, which stagnate during the complete phase,
i.\,e., the logarithmic potential in these dimensions is at most $c_s$ during the stagnation phase.
$c_0$ defines a starting safety distance to the highest potential and $c_s$ defines a permanent safety distance to the highest potential such that dimensions which stay below that bound remain insignificant.
It is intended that it should not happen that stagnating dimensions, i.\,e., dimensions with low logarithmic potential,  influence the attractors,
because the swarm behaves differently if additional dimensions influence the attractors.
To ensure that stagnating dimensions have almost no influence on the swarm,
the logarithmic potential of stagnating dimensions need to be the safety distance apart of the logarithmic potential of dimension with most influence.
The bigger this safety distance is the more reasonable is the assumption of actual insignificance of stagnating dimensions.
It might happen that $\alpha_i=\beta_{i-1}$.
For example the set $\lbrace 1,\ldots,N_0\rbrace$ can be the set of dimensions, which have a logarithmic potential of at most $c_0$ at the start of a stagnation phase.
During the stagnation phase the logarithmic potential of dimension $(N_0+1)$ can become less than $c_0$.
If the logarithmic potential of dimensions $2,\ldots,N_0$ stay below $c_0$ and the logarithmic potential of dimension $1$ increases to a value larger than $c_s$,
then this stagnation phase ends and the next stagnation phase immediately starts because all dimensions in $\lbrace 2,\ldots, N_0+1\rbrace$ have a logarithmic potential of less than $c_0$.
Therefore it can happen that a stagnation phase starts immediately after another stagnation phase has ended, i.\,e., $\alpha_i=\beta_{i-1}$.
Furthermore, stagnation phases will not end at the same time as they start, i.\,e., $\alpha_i > \beta_i$,
because $c_0\le c_s$ and therefore the size of the set of dimensions with low logarithmic potential is at least $N_0$ at the beginning.

Dimensions with low logarithmic potential have low influence on the change of the value of the objective function during one step compared to the other dimensions.
This is obvious in respect to the experimental potential, which is introduced in Definition \ref{def:experimentalPotential},
because the $d$'th value represents the change of the function value if we do a step in dimension $d$ and leave the other dimensions as they are.
If the logarithmic potential of a dimension $d$ is low for some period of time, this dimension is called stagnating dimension.
Stagnating dimensions are called stagnating, because their small impact on the change in the value of the objective function results in a small impact on the decision whether a new position is a local attractor or a global attractor.
Therefore the decision whether a new position is an attractor does hardly depend on the movement in the stagnating dimensions.
\subsection{Unlimited Stagnation Phases}
\begin{defi} \label{nota:helpingRandomVariables}
Let $\Theta$ be a subset of probability distributions on $\R$.
Let $\triangle t$ be the constant step width.
Let $(B_{t,\Gamma_B})_{t\in\N,\Gamma_B\in\Theta}$ and $(J_{t,d,\Gamma_J})_{t\in\N,d\in\lbrace 1,\ldots,D\rbrace,\Gamma_J\in\Theta}$ be sets of random variables such that
\begin{itemize}
\item for all $t>0$ and $\Gamma_B\in\Theta$, $B_{t-1,\Gamma_B}$ is $\mathcal{A}_{\triangle t\cdot t}$-measurable and has distribution $\Gamma_B$,
\item for all $t>0$, $d\in \lbrace 1,\ldots,D\rbrace$ and $\Gamma_J\in\Theta$, $J_{t-1,d,\Gamma_J}$ is $\mathcal{A}_{\triangle t\cdot t}$-measurable and has distribution $\Gamma_J$,
\item for all $t\ge0$ and $\Gamma_B,\Gamma_J\in\Theta$, the random variables $J_{t,1,\Gamma_J}$, $\ldots$ , $J_{t,D,\Gamma_J}$, $B_{t,\Gamma_B}$ are independent and
\item for all $t\ge0$ and $\Gamma_B,\Gamma_J\in\Theta$, $\sigma(B_{t,\Gamma_B},(J_{t,d,\Gamma_J})_{1\le d\le D})$ is independent from $\mathcal{A}_{\triangle t\cdot t}$.
\end{itemize}
\end{defi}
The $\sigma$ operator provides the smallest $\sigma$-algebra, such that the arguments are measurable.
A detailed introduction into $\sigma$-algebras can be found in \cite{bauer1996probability}.
Only a small portion of the introduced variables are needed,
but if this huge set of random variables is not defined,
random variables would have been used which exist only under certain conditions,
which is formally not possible.

From now on the reality is approximated with the following model.
\begin{ass}[Separation of logarithmic potential]
\label{ass:separation}
It is assumed that for objective functions $f$ there exist $\triangle t$, $\Phi$, $N_0$, $c_0$ and $c_s$ such that
for any fixed time $T_{0}\cdot\triangle t$
there exist $\mathcal{A}_{\triangle t\cdot T_0}$-measurable $\Theta$-valued random variables $\Gamma_{J,T_0}$ and $\Gamma_{B,T_0}$ such that
\begin{itemize}
\item for all $T\ge T_0$ and $d\in\lbrace 1,\ldots,D\rbrace$ the random variables $B_{T,\Gamma_{B,T_0}}$ and $J_{T,d,\Gamma_{J,T_0}}$ are $\mathcal{A}_{(T+1)\triangle t}$-measurable,
\item for fixed times $T_0\triangle t$ and $T\triangle t$ and fixed dimension $d$, $\P(A\setminus B)=0$,
where 
\begin{align*}
A:=\lbrace& (\exists i:\alpha_i=T_0\triangle t\wedge \beta_i>T\triangle t)
\wedge\Psi(T_0,d)\le c_0\wedge \max_{T_0\le t'\le T}\Psi(t',d)\le c_s\rbrace
\end{align*}
$$
\text{and }\hspace{1cm}B:=\lbrace I_{T,d}=B_{T,\Gamma_{B,T_0}}+J_{T,d,\Gamma_{J,T_0}}\rbrace,
$$
\item for all $\omega\in\Omega$ the expectation of a $\Gamma_{J,T_0}(\omega)$-distributed random variable is zero and
\item for all $\omega\in\Omega$ 
the first six moments of a random variable, which has distribution $\Gamma_{B,T_0}(\omega)$ or $\Gamma_{J,T_0}(\omega)$, exist.
\end{itemize}
\end{ass}
If there is no start of a stagnation phase at time $T_0$ then $\Gamma_{J,T_0}$ and
$\Gamma_{B,T_0}$ can have any value which does not contradict with the restrictions in Assumption \ref{ass:separation}.
The value of $\Gamma_{J,T_0}$ and $\Gamma_{B,T_0}$ in these cases is not used in the analysis.

\begin{figure}[htb]
\center{\includegraphics[width=0.7\textwidth]{./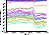}}
\caption{Logarithmic potential (with potential as specified in Definition \ref{def:experimentalPotential}) on sphere function in $25$ dimensions and with $3$ particles; each line represents the logarithmic potential of one dimension; visualized are time steps $200\,000$ to $210\,000$}
\label{figure:potential}
\end{figure}
The event $A$ specifies that there is a stagnation phase
which has started at time $T_0\triangle t$ and has not ended at time $T\triangle t$ or earlier
and the dimension $d$ is one of the dimensions with low logarithmic potential.
This event implies almost surely the event $B$
which states that the increments belonging to $\triangle t$ time steps after time $T\triangle t$ are composed by two components.

Mainly this assumption states that during stagnation phases the change in logarithmic potential of dimensions with low logarithmic potential
is composed of some basic random noise ($B_{T,\Gamma_{B,T_0}}$), which depends on the behavior of dimensions with high logarithmic potential,
and some individual random noise ($J_{T,d,\Gamma_{J,T_0}}$), which depends on the randomness in this specific dimension.
The basic random noise ($B_{T,\Gamma_{B,T_0}}$) will be called \emph{base increments} and the individual random noise ($J_{T,d,\Gamma_{J,T_0}}$) will be called \emph{dimension dependent increments}.
These two parts are independent of each other and they are also independent for different times $t$ as specified in Definition \ref{nota:helpingRandomVariables}.
The reason for this separation is that the dimensions with low logarithmic potential have no effect on local and global attractors of the swarm.
Therefore these dimensions do not interact with each other.
The distributions of the random variables $B_{T,\Gamma_{B,T_0}}$ and $J_{T,d,\Gamma_{J,T_0}}$ are fixed at time $T_0$,
because $\Gamma_{B,T_0}$ and $\Gamma_{J,T_0}$ are $\mathcal{A}_{T_0\triangle t}$-measurable.
The reason for this assumption is that according to the experiments the distributions mainly depend on the set of dimensions with low logarithmic potential,
which is fixed at the beginning of a stagnation phase.
Therefore $\Theta$,
the set of probability distributions on $\R$ specified in Definition \ref{nota:helpingRandomVariables},
need to contain only very few distributions.
The dimensions with high logarithmic potential determine the behavior of the particles,
which partially determines the logarithmic potential of the dimensions with low logarithmic potential.
An additional part of the logarithmic potential in a dimension arises from the random variables in the movement equations of the PSO in that dimension.
These two parts determine the logarithmic potential of dimensions with low logarithmic potential.
In Figure \ref{figure:potential} you can see the similar behavior of dimensions with low logarithmic potential.
Admittedly in any precise choice of a potential these new random variables will not be completely independent of each other,
but if $\triangle t$ is enlarged it can be expected that the dependencies become small.
For example, if $\triangle t$ is set to $1$ then the increments describe the behavior of consecutive steps which depend on each other in a strong manner
but for a larger $\triangle t$ larger intervals depend only partially on each other.
Therefore it is reasonable to accept the property of independence in the variable $t$ for the theoretical model as an approximation of the real-world process.

Furthermore, the dependencies and independencies in Assumption \ref{ass:separation} are not needed completely, but they are used to get some inequalities.
These inequalities will be described and justified experimentally in Section \ref{sec:experiments}.\\
\begin{defi}[$(\triangle t, \Phi, N_0, c_0, c_s, \mu, M, p_0)$-Stagnation phase] \label{def:stagnationPhaseClass}
Let $\mu\in\R$, $M\in\R$, $p_0\in[0,1]$, $T_0\ge 0$ be constants.\\
Let $A$ be the event that a $(\triangle t, \Phi, N_0, c_0, c_s)$-stagnation phase starts at time $T_0\triangle t$, i.\,e.,
$$
A=\lbrace\exists i\in\N_0:\alpha_i=T_0\triangle t\rbrace.
$$
Let $C_1$ be the event that the expectation of a $\Gamma_{B,T_0}$ distributed random variable is less or equal than $\mu$, i.\,e.,
$$
C_1=\lbrace\E[B_{T_0,\Gamma_{B,T_0}}\mid\mathcal{A}_{T_0\triangle t}]\le\mu\rbrace.
$$
Let $C_2$ be the event that the sixth moment of a $\Gamma_{B,T_0}$ distributed random variable and a $\Gamma_{J,T_0}$ distributed random variable is bounded by $M$, i.\,e.,
$H:=\E[B_{T_0,\Gamma_{B,T_0}}\vert \mathcal{A}_{T_0\triangle t}]$
$$
C_2=\Big\lbrace\E\Big[\big(B_{T_0,\Gamma_{B,T_0}}-H+J_{t,1,\Gamma_{J,T_0}}\big)^6\Bigm\vert \mathcal{A}_{T_0\triangle t}\Big]\le M\Big\rbrace.
$$
Let $C_3$ be the event that a $\Gamma_{B,T_0}$-distributed random variable stays below $\frac{\mu}{2}$ with a probability of at least $p_0$, i.\,e.,
\begin{align*}
C_3&=\Big\lbrace \P\Big(B_{T_0,\Gamma_{B,T_0}}-\frac{\mu}{2}\le 0 \Bigm\vert \mathcal{A}_{T_0\triangle t}\Big)\ge p_0\Big\rbrace
=\Big\lbrace \E\Big[\1_{B_{T_0,\Gamma_{B,T_0}}-\frac{\mu}{2}\le 0}\Bigm\vert \mathcal{A}_{T_0\triangle t}\Big]\ge p_0\Big\rbrace.
\end{align*}
Let $C_4$ be the event that a $\Gamma_{J,T_0}$-distributed random variable stays below $-\frac{\mu}{2}$ with a probability of at least $p_0$, i.\,e.,
\begin{align*}
C_4&=\left\lbrace \P\left(\left.J_{T_0,1,\Gamma_{B,T_0}}+\frac{\mu}{2}\le 0 \MIDR \mathcal{A}_{T_0\triangle t}\right)\ge p_0\right\rbrace
=\left\lbrace \E\left[\left.\1_{J_{T_0,1,\Gamma_{B,T_0}}+\frac{\mu}{2}\le 0}\MIDR \mathcal{A}_{T_0\triangle t}\right]\ge p_0\right\rbrace.
\end{align*}
A $(\triangle t,$ $\Phi,$ $N_0,$ $c_0,$ $c_s)$-stagnation phase is called $(\triangle t,$ $\Phi,$ $N_0,$ $c_0,$ $c_s,$ $\mu,$ $M,$ $p_0)$-stagnation phase if it achieves the conditions mentioned in $C_1$, $C_2$, $C_3$ and $C_4$, i.\,e., $A\cap C_1\cap C_2\cap C_3\cap C_4$ is called the event that a $(\triangle t, \Phi, N_0, c_0, c_s, \mu, M, p_0)$-stagnation phase starts at time $T_0\triangle t$.
\end{defi}
By definition the question whether a $(\triangle t,$ $\Phi,$ $N_0,$ $c_0,$ $c_s)$ -stagnation phase,
which starts at some time $T_0\cdot\triangle t$, is a $(\triangle t,$ $\Phi,$ $N_0,$ $c_0,$ $c_s,$ $\mu,$ $M,$ $p_0)$-stagnation phase can be answered at the beginning of that phase,
i.\,e., the indicator function of this event is $\mathcal{A}_{T_0\cdot\triangle t}$-measurable.
\begin{defi}[Good stagnation phase] \label{def:goodStagnationPhase}
A $(\triangle t,$ $\Phi,$ $N_0,$ $c_0,$ $c_s,$ $\mu,$ $M,$ $p_0)$-stagnation phase is called good if $\mu<0$ and $p_0>0$.
\end{defi}
These definitions only classify stagnation phases.
For a single stagnation phase which has negative expectation and bounded sixth moment $\mu$ and $M$ can be chosen as the exact values.
Furthermore, the probabilities which should be bounded below by $p_0$ are positive because the probability that a random variable encounters a value less or equal its expectation or even larger values is always positive.
Such stagnation phases are called good, because it can be proved that those phases do not end with positive probability.

The following theorem shows that, with positive probability, the swarm never recovers from encountering a good stagnation phase.
\begin{sat}
\label{sat:theorem1}
A good $(\triangle t, \Phi, N_0, c_0, c_s, \mu, M, p_0)$-stagnation phase does not end with a probability, which is at least $p:=p(N_0, \mu, M, p_0)>0$.
\end{sat}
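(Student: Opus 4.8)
The plan is to exhibit one explicit event of positive probability, built entirely from the auxiliary base and dimension-dependent increments of Definition~\ref{nota:helpingRandomVariables}, on which a fixed set of $N_0$ dimensions provably never leaves the band below $c_s$; then the count that defines $\beta_i$ can never drop under $N_0$, so the phase never ends. Fix the start time $T_0\triangle t$ and work conditionally on $\mathcal{A}_{T_0\triangle t}$ on the good-phase event $A\cap C_1\cap C_2\cap C_3\cap C_4$. Since $\alpha_i=T_0\triangle t$, at least $N_0$ dimensions satisfy $\Psi(T_0,d)\le c_0$, and the set of such dimensions is $\mathcal{A}_{T_0\triangle t}$-measurable; select an $\mathcal{A}_{T_0\triangle t}$-measurable subset $\mathcal{D}$ of exactly $N_0$ of them. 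Abbreviate $B_t:=B_{t,\Gamma_{B,T_0}}$ and $J_{t,d}:=J_{t,d,\Gamma_{J,T_0}}$. The crucial algebraic trick is to split the negative drift evenly, writing $I_{t,d}=\big(B_t-\tfrac{\mu}{2}\big)+\big(J_{t,d}+\tfrac{\mu}{2}\big)$; because $\E[J_{t,d}]=0$ by Assumption~\ref{ass:separation} and $\E[B_t\mid\mathcal{A}_{T_0\triangle t}]\le\mu$ by $C_1$, both shifted families have conditional mean at most $\tfrac{\mu}{2}<0$.

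Next I would apply Lemma~\ref{lem:clt} separately to the shifted base family $\{B_t-\tfrac{\mu}{2}\}$ and to each of the $N_0$ shifted dimension families $\{J_{t,d}+\tfrac{\mu}{2}\}_{d\in\mathcal{D}}$, which, conditionally on $\mathcal{A}_{T_0\triangle t}$, are i.i.d.\ by Definition~\ref{nota:helpingRandomVariables}. Conditions $C_3$ and $C_4$ supply the hypotheses $\P(\,\cdot\le0\,)\ge p_0$, while $C_2$ combined with Lemma~\ref{lem:boundedMoments}, applied to the independent mean-zero pair $B_t-H$ and $J_{t,1}$, bounds the first six centralized moments of each family by $M':=h(M)$ (note $\E[(B_t-\tfrac{\mu}{2}-\E[\cdot])^i]=\E[(B_t-H)^i]$ and $\E[(J_{t,d}+\tfrac{\mu}{2}-\tfrac{\mu}{2})^i]=\E[J_{t,d}^i]$). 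Lemma~\ref{lem:clt} then yields, for the base family, $\P\big(\forall T\ge T_0:\sum_{t=T_0}^{T-1}(B_t-\tfrac{\mu}{2})\le0\bigm\vert\mathcal{A}_{T_0\triangle t}\big)\ge p(\tfrac{\mu}{2},M',p_0)=:p^*$, and the same lower bound for each $d\in\mathcal{D}$. Since the base increments and the $N_0$ dimension-dependent families are mutually independent, the intersection $E$ of these $N_0+1$ events satisfies $\P(E\mid\mathcal{A}_{T_0\triangle t})\ge(p^*)^{N_0+1}=:p(N_0,\mu,M,p_0)>0$.

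Finally I would verify by induction on $T$ that $E$ forces the phase to persist: on $E$, for each $d\in\mathcal{D}$ the two non-positive partial sums combine to $\sum_{t=T_0}^{T-1}I_{t,d}\le0$, whence $\Psi(T,d)\le\Psi(T_0,d)\le c_0\le c_s$ — but only if the decomposition $I_{t,d}=B_t+J_{t,d}$ is legitimate at each step. This is the single genuine obstacle: Assumption~\ref{ass:separation} grants the decomposition only while the phase is still running ($\beta_i>T\triangle t$), so the argument appears circular. The induction breaks the circle: the hypothesis ``every $d\in\mathcal{D}$ satisfies $\Psi(t,d)\le c_s$ for $T_0\le t\le T$'' keeps the count at $\ge N_0$ through time $T$, which certifies $\beta_i>T\triangle t$ and thereby licenses Assumption~\ref{ass:separation} to deliver the decomposition at step $T$, closing the step. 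Hence on $E$ the whole set $\mathcal{D}$ stays below $c_s$ for all time, $\beta_i=\infty$, and the phase does not end. Integrating the conditional bound $\P(\text{phase does not end}\mid\mathcal{A}_{T_0\triangle t})\ge p(N_0,\mu,M,p_0)$ over the good-phase event gives the stated $p>0$ depending only on $N_0,\mu,M,p_0$.
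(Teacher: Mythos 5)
Your proposal is correct and follows essentially the same route as the paper's proof: the same split $I_{t,d}=(B_t-\tfrac{\mu}{2})+(J_{t,d}+\tfrac{\mu}{2})$, Lemma~\ref{lem:boundedMoments} to bound the centralized moments by $h(M)$, Lemma~\ref{lem:clt} applied to the base family and to each of the $N_0$ dimension families, and independence to obtain the bound $p(\tfrac{\mu}{2},h(M),p_0)^{N_0+1}$. Your explicit induction resolving the apparent circularity is exactly the paper's observation that $I_{t,d}\equiv\tilde I_{t,d}:=B_{t,\Gamma_B}+J_{t,d,\Gamma_J}$ holds as long as all $d\in S$ have stayed below $c_s$, just spelled out in more detail.
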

\begin{proof}
Let $X_0$, $V_0$ and $L_0$ be initialized such that a good $(\triangle t,$ $\Phi,$ $N_0,$ $c_0,$ $c_s,$ $\mu,$ $M,$ $p_0)$-stagnation phase starts at time 0
and let $\Gamma_J$ and $\Gamma_B$ be the related probability distributions of this stagnation phase specified in Assumption \ref{ass:separation}.
As the PSO-process is a Markov-process, evaluating the process which starts in a good
$(\triangle t,$ $\Phi,$ $N_0,$ $c_0,$ $c_s,$ $\mu,$ $M,$ $p_0)$-stagnation phase is
equivalent to evaluate a good $(\triangle t,$ $\Phi,$ $N_0,$ $c_0,$ $c_s,$ $\mu,$ $M,$ $p_0)$-stagnation phase within a PSO run.
Let $S_0$ be $\lbrace d\in\lbrace 1,\ldots,D\rbrace \MID \Psi(0,d)\le c_0\rbrace$, the starting subset.
$S_0$, $\Gamma_B$ and $\Gamma_J$ are fixed objects here, because they are always known at the beginning of a stagnation phase.
Then the probability that this good stagnation phase does not end is equal to
$$
\P\Big(\exists U\subset S_{0}:\vert U\vert\ge N_0\wedge\sup_{d\in U, t>0} \Psi(t,d)\le c_{s}\Big).
$$
Let $S$ be defined as $\lbrace d\in S_0:\vert\lbrace 1,\ldots,d\rbrace\cap S_0\vert \le N_0\rbrace$, 
the subset of $S_{0}$, which contains the $N_0$ smallest indices.
$S$ is a fixed object too.
The probability that this good stagnation phase does not end is at least
$\P\left(\sup_{d\in S, t>0} \Psi(t,d)\le c_{s}\right)$.
Let $\tilde I_{t,d}:=B_{t,\Gamma_B}+J_{t,d,\Gamma_J}$ (see Assumption \ref{ass:separation}),
then $I_{t,d}\equiv\tilde I_{t,d}$ if the stagnation phase has not ended at time $t$ or earlier
and $d$ is in the set
$\lbrace \tilde d \in S_{0}\vert\max_{0< \tilde t\le t}\Psi(\tilde t,\tilde d)\le c_s\rbrace$.
Therefore $I_{t,d}\equiv\tilde I_{t,d}$ for all $d\in S$ if $\max_{\tilde d\in S,0<\tilde t\le t}\Psi(\tilde t, \tilde d)\le c_s$.
It follows
\begin{eqnarray*}
&&   \P         \Big(\sup_{t> {0}, d\in S}\Psi(t,d)\le c_{s}\Big)\allowdisplaybreaks\\
&=&  \P         \Big(\sup_{t> {0}, d\in S} \sum_{\tilde t={0}}^{t-1}\tilde I_{\tilde t,d}+\Psi({0},d) \le c_s\Big)\allowdisplaybreaks\\
&\ge&\P         \Big(\sup_{t> {0}, d\in S} \sum_{\tilde t={0}}^{t-1}\tilde I_{\tilde t,d}+c_0           \le c_s\Big)\allowdisplaybreaks\\
&=&  \P         \Big(\sup_{t> {0}, d\in S} \sum_{\tilde t={0}}^{t-1}\tilde I_{\tilde t,d} \le c_s-c_0   \Big)\allowdisplaybreaks\\
&\ge&\P         \Big(\sup_{t> {0}, d\in S} \sum_{\tilde t={0}}^{t-1}\tilde I_{\tilde t,d} \le 0         \Big)\allowdisplaybreaks\\
&=&\P         \Big(\sup_{t> {0}, d\in S} \sum_{\tilde t={0}}^{t-1}\Big(\Big(B_{\tilde t,  \Gamma_B}-\frac{\mu}{2}\Big)
+\Big(J_{\tilde t,d,\Gamma_J}+\frac{\mu}{2}\Big)\Big) \le 0\Big)\allowdisplaybreaks\\
&\ge&\P         \Big(\sup_{t> {0}}         \sum_{\tilde t={0}}^{t-1}\left(B_{\tilde t,  \Gamma_B}-\frac{\mu}{2}\right) \le 0
\wedge\sup_{t> {0}, d\in S} \sum_{\tilde t={0}}^{t-1}\left(J_{\tilde t,d,\Gamma_J}+\frac{\mu}{2}\right) \le 0\Big)\allowdisplaybreaks\\
&=&  \P         \Big(\sup_{t> {0}}         \sum_{\tilde t={0}}^{t-1}\left(B_{\tilde t,  \Gamma_B}-\frac{\mu}{2}\right) \le 0\Big)
\prod_{d\in S}\P\Big(\sup_{t> {0}}         \sum_{\tilde t={0}}^{t-1}\left(J_{\tilde t,d,\Gamma_J}+\frac{\mu}{2}\right) \le 0\Big).
\end{eqnarray*}
The expectations of $\left(B_{\tilde t,\Gamma_B}-{\mu}/{2}\right)$ and $\left(J_{\tilde t,d,\Gamma_J}+{\mu}/{2}\right)$ are at most ${\mu}/{2}$,
which is less than zero.
With Lemma \ref{lem:boundedMoments} the first six moments of
$$
\Big(B_{\tilde t,  \Gamma_B}-{\mu}/{2}-\E\big[B_{\tilde t,  \Gamma_B}-{\mu}/{2}\big]\Big)
\text{ and }
\Big(J_{\tilde t,d,\Gamma_J}+{\mu}/{2}-\E\big[J_{\tilde t,d,\Gamma_J}+{\mu}/{2}\big]\Big)
$$
can be bounded by $h(M)$.
Therefore Lemma \ref{lem:clt} tells us that
$$
\P\Big(\sup_{t> {0}}\sum_{\tilde t={0}}^{t-1}\big(B_{\tilde t,\Gamma_B}-{\mu}/{2}\big) \le 0\Big)\ge p({\mu}/{2},h(M),p_0)>0
$$
and
$$
\P\Big(\sup_{t> {0}}\sum_{\tilde t={0}}^{t-1}\big(J_{\tilde t,d,\Gamma_J}+{\mu}/{2}\big) \le 0\Big)\ge p({\mu}/{2},h(M),p_0)>0.
$$
Altogether we have
\begin{eqnarray*}
\lefteqn{\P\big(\exists U\subset S_{0}:\vert U\vert\ge N_0\wedge\sup_{d\in U, t> {0}} \Psi(t,d)\le c_{s}\big)}\allowdisplaybreaks\\
&\ge&\P         \big(\sup_{t> {0}, d\in S}\Psi(t,d)\le c_{s}\big)\allowdisplaybreaks\\
&\ge&\P         \Big(\sup_{t> {0}}         \sum_{\tilde t={0}}^{t-1}\big(B_{\tilde t,  \Gamma_B}-\frac{\mu}{2}\big) \le 0\Big)
\prod_{d\in S}\P\Big(\sup_{t> {0}}         \sum_{\tilde t={0}}^{t-1}\big(J_{\tilde t,d,\Gamma_J}+\frac{\mu}{2}\big) \le 0\Big)\allowdisplaybreaks\\
&\ge&p\left(\frac{\mu}{2},h(M),p_0\right)\cdot p\left(\frac{\mu}{2},h(M),p_0\right)^{N_0}\allowdisplaybreaks\\
&=:&p(N_0, \mu, M, p_0)=:p
\end{eqnarray*}
\end{proof}
According to Donsker's theorem \cite{billing}
incremental processes composed of independent idential distributed increments converge to a Brownian Motion if they are correctly scaled.
As our general model uses independent increments, we also discuss the Brownian Motion as a feasible model for our process.
To acquire Brownian Motion it is additionally assumed in the model that the increments are normally distributed during good stagnation phases.
Then a Brownian Motion,
with some variance $\sigma^2$ and negative drift $\mu$ according to the expectation and variance of the increments,
approximates the logarithmic potential.
I.\,e., the Brownian Motion at discrete times would be equal to the logarithmic potential at these times.
In \cite{Scheike:92}, it is proved that the probability that a standard Brownian Motion crosses some line $y=m\cdot x+t$ is equal to $\exp(-2\cdot m\cdot t)$ for positive values $m$ and $t$.
Under these assumptions, $\left(\sum_{\tilde t=T_0}^{t-1}\left( \tilde I_{\tilde t,d} -\mu\right)\right)/\sigma$ can be approximated by a standard Brownian Motion,
i.\,e., a Brownian Motion with expectation 0 and variance 1 per time step.
The previous bound $c_s$ is now moved to $-\frac{\mu}{\sigma}t+\frac{c_s-c_0}{\sigma}$.
If the Brownian Motion stays below this line,
then it also stays below the line at the discrete time steps referring to time steps of the PSO,
which implies that the logarithmic potential stays below $c_s$ indefinitely.
Hence, the probability that the logarithmic potential stays below $c_s$ forever,
is approximated by $1-\exp(2\frac{(c_s-c_0)\mu}{\sigma^2})$,
where $\mu$ and $\sigma^2$ represent the expectation and variance of the increments per $\triangle t$ time steps.
An approximate lower bound for the probability that at least $N_0$ dimensions in $S_0$ stay below $c_s$ is represented by the $N_0$'th power of the probability in one dimension:
$(1-\exp(-2\frac{(c_s-c_0)\mu}{\sigma^2}))^{N_0}$.
In the previous model we were only able to determine some positive lower bound for the probability that stagnation remains indefinitely.
This model supplies the previously mentioned formula, which approximates the actual probability quite well for at least $N_0=1$.
In Section \ref{sec:brownianMotion} the values obtained from that formula will be compared to experimentally measured values.
\begin{figure}[htb]
\center{\includegraphics[width=0.7\textwidth]{./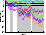}}
\caption{Logarithmic potential (with potential as specified in Definition \ref{def:experimentalPotential}) on sphere function in 10 dimensions and with 2 particles; each line represents the logarithmic potential of one dimension, $c_0=-40$, $c_s=-20$, $N_0=7$}
\label{figure:phases}
\end{figure}
\subsection{Convergence to Non-Optimal Points}
\begin{defi}[Phases]
\label{def:phases}
Let $\triangle t$, $\Phi$, $N_0$, $c_0$, $c_s$, $\mu$, $M$ and $p_0$ be fixed objects.
Let $(\alpha_i)_{i\in\N_0}$ and $(\beta_i)_{i\in\N_0}$ be the stopping times specified in Definition \ref{def:stagnationPhase}.
Furthermore, the following stopping times are defined:
\begin{align*}
\tilde\beta_{-1}=0\phantom{\inf}\\
\intertext{and inductively for all $i\ge 0$:}
\tilde\alpha_{i}=\inf\lbrace& t\ge\tilde \beta_{i-1}:\exists j\in\N_0:\alpha_j=t\wedge\text{the }(\triangle t,\Phi,N_0,c_0,c_s)
\text{-stagnation phase}
\\&
\text{starting at time }t\text{ is a}
(\triangle t,\Phi,N_0,c_0,c_s,\mu,M,p_0)\text{-stagnation phase}\rbrace,\\
\tilde\beta_i=     \inf\lbrace& t  >\tilde\alpha_i:\exists j\in\N_0:\beta_j=t\rbrace.
\end{align*}
$\tilde\alpha_i$ and $\tilde\beta$ determine the start and end of the $i$'th $(\triangle t,$ $\Phi,$ $N_0,$ $c_0,$ $c_s,$ $\mu,$ $M,$ $p_0)$-stagnation phase.
Phases are random time intervals.
Phases of type ${\mathit PH}_X$ are phases before or between $(\triangle t,$ $\Phi,$ $N_0,$ $c_0,$ $c_s,$ $\mu,$ $M,$ $p_0)$-stagnation phases.
The $i$'th phase of type ${\mathit PH}_X$ is defined as
\begin{center}
$\begin{array}{c l}
{[}\tilde\beta_{i-1},\tilde\alpha_i{]}&\text{if }\tilde\alpha_i<\infty\\
{[}\tilde\beta_{i-1},\infty{[}&\text{if }\tilde\alpha_i=\infty\wedge\tilde\beta_{i-1}<\infty\\
\emptyset&\text{otherwise.}
\end{array}$
\end{center}
Phases of type ${\mathit PH}_{Y}$ are $(\triangle t,$ $\Phi,$ $N_0,$ $c_0,$ $c_s,$ $\mu,$ $M,$ $p_0)$-stagnation phases which have an end.
The $i$'th phase of type ${\mathit PH}_{Y}$ is defined as
\begin{center}
$\begin{array}{c l}
{[}\tilde\alpha_i,\tilde\beta_i{]}&\text{if }\tilde\beta_i<\infty\\
\emptyset&\text{otherwise.}
\end{array}$
\end{center}
The phase of type ${\mathit PH}_{F}$ is a $(\triangle t,$ $\Phi,$ $N_0,$ $c_0,$ $c_s,$ $\mu,$ $M,$ $p_0)$-stagnation phase which has no end and it is defined as
\begin{center}
$\begin{array}{c l}
{[}\tilde\alpha_i,\infty{[}&\text{if }\tilde\alpha_i<\infty\wedge\tilde\beta_i=\infty\\
\emptyset&\text{if no such }i\text{ exists.}
\end{array}$
\end{center}
$X_i$ is defined to be the duration of the $i$'th phase of type ${\mathit PH}_X$:
$$
X_i:=\begin{cases}
\tilde\alpha_i - \tilde\beta_{i-1}& \text{if } \tilde\beta_{i-1}<\infty\\
0 & \text{otherwise.}
\end{cases}
$$
$Y_i$ is defined to be the duration of the $i$'th phase of type ${\mathit PH}_Y$:
$$
Y_i:=\begin{cases}
\tilde\beta_i - \tilde\alpha_{i}& \text{if } \tilde\beta_{i}<\infty\\
0 & \text{otherwise.}
\end{cases}
$$
Furthermore, the random variable $T_F$ is defined as the time when the final phase of type ${\mathit PH}_{F}$ starts:
$$
T_F:=\sum_{i=0}^\infty X_i+Y_i.
$$
$Y_i$ is a $\N_0$-valued random variable and $X_i$, $T_F$ are $\N_0\cup\lbrace\infty\rbrace$-valued random variables.
\end{defi}
A phase of type ${\mathit PH}_{X}$ is the interval beginning
at the start of the PSO or at the end of a ($\triangle t$, $\Phi$, $N_0$, $c_0$, $c_s$, $\mu$, $M$, $p_0$)-stagnation phase
and ending at the start of the next ($\triangle t$, $\Phi$, $N_0$, $c_0$, $c_s$, $\mu$, $M$, $p_0$)-stagnation phase.
If there is no further ($\triangle t$, $\Phi$, $N_0$, $c_0$, $c_s$, $\mu$, $M$, $p_0$)-stagnation phase then this phase does not end.
A phase of type ${\mathit PH}_{Y}$ is the time interval of a ($\triangle t$, $\Phi$, $N_0$, $c_0$, $c_s$, $\mu$, $M$, $p_0$)-stagnation phase which has an end
and a phase of type ${\mathit PH}_{F}$ is the time interval of a ($\triangle t$, $\Phi$, $N_0$, $c_0$, $c_s$, $\mu$, $M$, $p_0$)-stagnation phase which has no end.
The duration of a phase equals the difference of the last time step and the first time step contained in the related time interval and is infinity if the time interval has no limit.
If the $i$'th instance of type ${\mathit PH}_X$-phase of a specific run does not exist
then $X_i$ is zero.
If the $i$'th instance of type ${\mathit PH}_Y$-phase of a specific run does not exist
then $Y_i$ is zero.
$T_F$ is infinity if no phase of type ${\mathit PH}_{F}$ starts.
The process starts with a phase of type ${\mathit PH}_{X}$.
Phases of type ${\mathit PH}_X$ and ${\mathit PH}_Y$ are alternating until the final phase, which has type ${\mathit PH}_F$, begins.
Alternatively the last phase could be a phase of type ${\mathit PH}_X$ or phases of type ${\mathit PH}_X$ and ${\mathit PH}_Y$ are alternating infinitely often,
but it will be shown in Theorem \ref{sat:theorem2} that this does not happen almost surely.
It may occur that a phase of type ${\mathit PH}_X$ has duration zero if the set of dimensions with low logarithmic potential has changed.
Figure \ref{figure:phases} shows an example run of a $10$-dimensional PSO with $2$ particles.
$c_0:=-40$, $c_s:=-20$ and $N_0:=7$,
i.\,e., a stagnation phase starts whenever $7$ or more of the $10$ dimensions have logarithmic potential below $-40$.

In the following definition a group of functions is specified, which allows for comprehensible assumptions.
\begin{defi}[Composite functions]
A function is called composite if it can be written as  $f(x)=g(\sum_{i=1}^D f_i(x_i))$,
where $g$ is a strictly monotonically increasing function and $(f_i)_{i\in\lbrace 1,\dots,D\rbrace}$ are functions which have a lower bound each.
\end{defi}
For example all $q$-norms $(\sum_{i=1}^D \vert x_i\vert^q)^\frac{1}{q}$ are composite functions
with $f_i(x_i):=\vert x_i\vert^q$ and\linebreak $g(y):=$sign$(y)\vert y\vert^\frac{1}{q}$,
where sign represents the signum function
sign$(y):=1$ if $y>0$, $0$ if $y=0$ and $-1$ if $y<0$.

Furthermore, the following notation will be used:
\begin{defi}
For objects $W=(W^1,\ldots,W^D)$ and $S\subset\lbrace 1,\ldots,D\rbrace$ we write $W^{d\in S}$ for the object,
where all entries in dimensions contained in $S$ remain,
i.\,e., let $d_1$ to $d_{\vert S\vert}$ be the dimensions contained in $S$ in increasing order
then $W^{d\in S}=(W^{d_1},\ldots,W^{d_{\vert S\vert}})$.\\
Analogously $W^{d\not\in S}$ is written for $W^{d\in\lbrace 1,\ldots,D\rbrace\setminus S}$,
where all entries in dimensions contained in $S$ are removed.
\end{defi}
\begin{ass}[Insignificance of stagnating dimensions (1)]\label{ass:insignificance1}
Let $T_0\in\N$, $T\in\N$ and let $S$ be a subset of $\lbrace 1,\ldots,D\rbrace$, such that $T_0<T$ and $\vert S\vert \ge N_0$.
For fixed objects $\triangle t$, $\Phi$, $N_0$, $c_0$, $c_s$ and a composite objective function $f(x)=g(\sum_{i=1}^D f_i(x_i))$
a modified PSO can be defined.
The modified PSO starts at time $T_0\triangle t$ with initialization
$$
\tilde X_{T_0\triangle t}:=X_{T_0\triangle t}^{d\not\in S}\text{, }
\tilde V_{T_0\triangle t}:=V_{T_0\triangle t}^{d\not\in S}\text{,}
\tilde L_{T_0\triangle t}:=L_{T_0\triangle t}^{d\not\in S}\text{ and }
$$
$$
\text{objective function }
\tilde f:=\sum_{i\not\in S}f_i(x_i).
$$
Furthermore, similar to Definition \ref{def:classicalPSO} the movement equations for the modified PSO for all particles $n$ ($1\le n \le N$) are defined to be
\begin{align*}
\tilde G_t^n     :=& \argmin_{x\in\bigcup_{i=1}^{n-1}\lbrace \tilde L_{t+1}^i\rbrace\cup\bigcup_{i=n}^N\lbrace \tilde L_{t}^i\rbrace}\tilde f(x)\text{ for }t\ge T_0\triangle t,\\
\tilde V_{t+1}^n :=& \chi\cdot \tilde V_t^n + c_1\cdot (r_t^n)^{d\not\in S} \odot (\tilde L_t^n-\tilde X_t^n)
+ c_2\cdot (s_t^n)^{d\not\in S} \odot (\tilde G_{t}^n-\tilde X_t^n)\text{ for }t\ge T_0\triangle t,\\
\tilde X_{t+1}^n :=& \tilde X_t^n+\tilde V_{t+1}^n\text{ for }t \ge T_0\triangle t,\\
\tilde L_{t+1}^n :=& \argmin_{x\in\lbrace \tilde X_{t+1}^n, \tilde L_t^n\rbrace}\tilde f(x)\text{ for }t \ge T_0\triangle t.
\end{align*}
with reused random variables $(r_t^n)^{d\not\in S}$ and $(s_t^n)^{d\not\in S}$.
Let $A$ and $B$ be the events
\begin{align*}
A:=\lbrace &\exists i\in\N_0:\alpha_i=T_0\triangle t
\wedge \max_{d\in S}\Psi(T_0,d)\le c_0
\wedge \max_{d\in S,T_0\le t\le T}\Psi(t,d)\le c_s\rbrace
\end{align*}
and
\begin{align*}
B:=\lbrace &X_t^{d\not\in S}=\tilde X_{t}\wedge V_t^{d\not\in S}=\tilde V_{t}
\wedge L_t^{d\not\in S}=\tilde L_{t}\forall t\in\lbrace T_0\triangle t,\ldots, T\triangle t-1\rbrace\rbrace.
\end{align*}
It is assumed that there exist $\triangle t$, $\Phi$, $N_0$, $c_0$ and $c_s$ such that Assumption \ref{ass:separation} is applicable and $\P(A\setminus B)=0$.
\end{ass}
The modified PSO represents the actual PSO, but the dimensions with low logarithmic potential are completely removed.
The event $A$ appears if a stagnation phase starts at time $T_0\triangle t$ and has not ended at time $T\triangle t$ and $S$ is the set of stagnating dimensions.
The event $B$ appears if the reduced process and the actual process are equal at times $t$ such that $T_0\le t<T$ for all dimensions not in $S$.
The assumption states that almost surely $A$ implies $B$,
which means that this modified PSO behaves similar to the actual PSO
if a stagnation phase starts at time $T_0$ and all dimensions in $S$ are stagnating.
Therefore stagnating dimensions have no effect on the other dimensions, 
but dimensions not in $S$ have effect on the stagnating dimensions
because the times when attractors are updated are determined by dimensions not in $S$.

In the following a short motivation is presented why this assumption is reasonable.
The potential used in this paper represents the impact of a single dimension to the change in the function value in a single step.
The probability that a dimension can change the evaluation whether a new position is a local or global attractor can therefore approximately be bounded by $2^{\Psi(t,d)}$.
For stagnation phases with $c_s=-20$ this value is approximately $0.000\,001$
and therefore the expected waiting time that
a dimension with that low potential has effect on the behavior of the swarm is approximately bounded below by $1\,000\,000$ iterations.
Furthermore, for good stagnation phases the logarithmic potential is expected to decrease linearly on average.
Neglecting the randomness the logarithmic potential at time $t$ can be bounded by $c_0-\mu(t-T_0\triangle t)$,
where $-\mu$ is the negative expectation of the potential per time step, 
and therefore the expected number of time steps such that
this dimension has effect on a decision whether a new position is a local or global attractor after the beginning of an unlimited stagnation phase 
is approximately bounded by
$$
\sum_{i>0}2^{c_0-\mu i}\le 2^{c_0}\int_{0}^{\infty}\exp(-\ln(2)\mu i)di=2^{c_0}\frac{1}{\ln(2)\mu},
$$
which is a finite value.
Furthermore, $c_0$ can be chosen such that this value is even less than one.
Also changes in dimensions with low logarithmic potential can not accumulate over time,
because if updates in local and global attractors happen quite often, then accumulation can not appear
and if the updates occur not quite often,
then changes will not accumulate as well
because the positions of the particles will mainly stay between the local and the global attractors.

The last assumption states that dimensions,
which have low logarithmic potential,
have no effect on the swarm.
This means that those dimensions receive a series of decision results whether a position is a new local or global attractor.
Those decision results influence how the positions in dimensions with low logarithmic potential develop.

In this paper the behavior of the PSO for large time scales is analyzed.
Therefore it is necessary to define in which cases convergence is present.
\begin{defi}[Swarm convergence]\label{def:swarmConvergence}
A swarm is said to \emph{converge} if the coordinates of the global and local attractors
and the positions converge to the same point and the velocities of the particles converge to zero.
The limit of the positions is also called \emph{point of convergence}.
\end{defi}
If the velocity tends exponentially to zero then convergence of attractors and positions also appears.

If there is a stagnation phase which has no end then there is a non empty subset of dimensions
and the dimensions in this set have low logarithmic potential as soon as the last stagnation phase starts.
Assumption \ref{ass:insignificance1} states that there exists an alternative PSO without that dimensions.
Although these dimensions have no effect on the remaining dimensions,
these dimensions are still changing.
The dimensions with low logarithmic potential receive a series of updates of the local and global attractors from the reduced process.
If convergence appears, some additional conclusions can be made.
Depending on the update series and the random variables for the dimensions with low logarithmic potential, some final value for that dimensions is reached.
As it is not plausible that the limit of the coordinates for that dimensions is fixed at the start of the last stagnation phase,
it is assumed that there is no limit value of these coordinates, which has positive probability.
Neither optimal values nor any other value.

Therefore the following assumption is concluded.
\begin{ass}[Insignificance of stagnating dimensions (2)]\label{ass:insignificance2}
For fixed objects $\triangle t$, $\Phi$, $N_0$, $c_0$, $c_s$, $\mu$, $M$ and $p_0$
let $X_0$, $V_0$ and $L_0$ be initialized such that it represents a possible start of a good $(\triangle t, \Phi, N_0, c_0, c_s, \mu, M, p_0)$-stagnation phase,
let $\Gamma_J$ and $\Gamma_B$ be the related probability distributions of this stagnation phase specified in Assumption \ref{ass:separation}
and let
$$
S\subset\lbrace d\in\lbrace 1,\ldots,D\rbrace\MID \Psi(0,d)\le c_0\rbrace
$$
such that $\vert S\vert =N_0$.
Let $\mathcal{E}$ be the event 
$$
\mathcal E:=\lbrace \sup_{t\ge 0,d\in S}\Psi(t,d)\le c_s\wedge \text{the swarm converges}\rbrace.
$$
It is assumed that there exist $\triangle t$, $\Phi$, $N_0$, $c_0$, $c_s$, $\mu$, $M$ and $p_0$ such that
Assumption \ref{ass:separation} is applicable and
$$
\P(\lim_{t\rightarrow\infty}X_t^{n,d}=v\MID \mathcal{E})=0
\text{ for all }
n\in\lbrace 1,\ldots,N\rbrace\text{, }v\in\R\text{ and }d\in S.
$$
\end{ass}
This assumption states that if the PSO reaches a stagnation phase
which does not end and the PSO converges,
then it is almost surely true that specific points will not be reached,
because at least the coordinates of the insignificant dimensions will differ almost surely.
This implies that if the set of local optima is countable
then almost surely none of the local optima will be reached,
because the set of possible coordinate values of optimal points is countable for each dimension and therefore the
dimensions with low logarithmic potential will not reach any of those values almost surely.
Therefore stagnation occurs in those cases.

Assumption \ref{ass:insignificance2} is not restricted to composite functions.
The behavior for functions which are not composite is more complicated, but the effect is the same.
An aspect which will not be proved is that dimensions which become relevant from time to time are optimizing.
If the coordinates in dimensions with low logarithmic potential are converging to a non-optimal value
then the optimal value for the coordinates,
which are optimizing, is probably not the same as in the case where all dimensions are optimizing.
For example the function $f(x,y)=2(x+y)^2+(x-y)^2=3x^2+2xy+3y^2$ is minimal for $x=0$ and $y=0$,
but if $x$ tends to $1$ then $f(1,y)=3y^2+2y+3$ is minimal for $y=-\frac{1}{3}\not=0$.
Nevertheless, the effect is the same for functions, which are not composite,
because the influence of the dimensions with low logarithmic potential is heavily delayed.
The scales of the change in stagnating dimensions and non-stagnating dimensions are very different
and therefore the change in the optimal position is recognized by the optimizing dimensions later in the process,
when the stagnating dimensions do not have the possibility to change the optimal position and their position in a comparable manner.

For example if the logarithmic potential of a dimension $d$ stays smaller than $-100$
then (for the specified experimental potential and tested functions)
also the velocities in dimension $d$ are by a factor of $2^{-100}$ smaller than the velocities in the dimensions with most potential.
If an attractor is updated, then its $d$-th position entry changes a bit.
This change might result in a minor change of the optimal position for all other dimensions
if the position value in dimension $d$ is regarded as a constant value,
i.\,e. let $f_c(x_1,\ldots,x_{d-1},x_{d+1},\ldots,x_D):=f(x_1,\ldots,x_D)$,
let $c_{old}$ be the previous value of the $d$-th position entry
and let $c_{new}$ be the new value of the $d$-th position entry,
then the optimal position of $f_{c_{old}}$ and $f_{c_{new}}$ may vary a bit.
Currently the attractor choice does hardly depend on that change,
but if the swarm finally converges,
then the velocity in dimensions with most logarithmic potential finally reaches the same scale
as the previous change of the optimal position where
the position value in dimension $d$ is regarded as a constant.
The velocity in dimension $d$ has also decreased and is again much smaller.
Therefore it cannot undo the changes done many steps in the past,
where the velocity in dimension $d$ had the same scale as the dimensions with large logarithmic potential now have.
This is meant as the heavily delayed influence of dimensions with low logarithmic potential.
As the movement in dimension $d$ was at no time guided to the
optimal point in this dimension, it is almost impossible that this dimension is optimizing.

With Assumption \ref{ass:insignificance2} the main theorem of this paper can be proved.
\begin{sat}
\label{sat:theorem2}
Let $\Phi$ be a potential, $c_0\le c_s<0$, $\triangle t>0$, $N_0>0$ be constants such that the Assumptions \ref{ass:separation} and \ref{ass:insignificance2} are applicable.
Furthermore, let $T>0$, $\mu<0$, $M<\infty$ and $p_0>0$ be constants.
Let $A_{X,i}:=\lbrace \tilde\beta_{i-1}<\infty\rbrace$ for all $i$.
If the following conditions hold:
\begin{itemize}
\item The swarm converges almost surely,
\item the objective function $f$ has only a countable number of local optima and
\item for all $i$ the expectation $\E[X_i\MID A_{X,i}]\le T$ if $\P(A_{X,i})>0$,
\end{itemize}
then the swarm converges to a point, which is not a local optimum, almost surely.
\end{sat}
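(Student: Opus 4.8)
The plan is to combine a geometric (Borel--Cantelli--type) argument built on Theorem~\ref{sat:theorem1} with a restart argument based on the strong Markov property of the PSO process, and to finish by invoking Assumption~\ref{ass:insignificance2} together with the countability of the local optima. First I would show that almost surely a phase of type ${\mathit PH}_F$ eventually starts, i.e.\ that $T_F<\infty$ almost surely. The hypothesis $\E[X_i\MID A_{X,i}]\le T<\infty$ forces each ${\mathit PH}_X$ phase to be finite on the event that the previous good stagnation phase has ended: since $X_i$ is a nonnegative, $\N_0\cup\lbrace\infty\rbrace$-valued random variable with finite conditional expectation on $A_{X,i}=\lbrace\tilde\beta_{i-1}<\infty\rbrace$, it is finite almost surely there, so $\tilde\alpha_i<\infty$ almost surely on $\lbrace\tilde\beta_{i-1}<\infty\rbrace$ and hence $\P(\tilde\alpha_i<\infty)=\P(\tilde\beta_{i-1}<\infty)$. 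Once the $i$'th good stagnation phase has started, the strong Markov property lets me restart the process at the stopping time $\tilde\alpha_i$ (exactly as in the proof of Theorem~\ref{sat:theorem1}, where starting inside a good stagnation phase is equivalent to meeting one within a run), and Theorem~\ref{sat:theorem1} then yields $\P(\tilde\beta_i<\infty\MID\tilde\alpha_i<\infty)\le 1-p$ with $p=p(N_0,\mu,M,p_0)>0$. Combining the two estimates gives $\P(\tilde\beta_i<\infty)\le(1-p)\,\P(\tilde\beta_{i-1}<\infty)$, and since $\tilde\beta_{-1}=0$ this yields $\P(\tilde\beta_i<\infty)\le(1-p)^{i+1}$. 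The events $\lbrace\tilde\beta_i<\infty\rbrace$ are nested and decreasing, so their intersection --- the event that every good stagnation phase ends --- has probability $\lim_{i\to\infty}(1-p)^{i+1}=0$. Hence almost surely there is a smallest index $k$ with $\tilde\beta_k=\infty$, and since $\tilde\beta_{k-1}<\infty$ forces $\tilde\alpha_k<\infty$, this is a genuine ${\mathit PH}_F$ phase starting at the finite time $T_F=\tilde\alpha_k$.

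Next I would analyze the process on the ${\mathit PH}_F$ phase. By the characterization of a good stagnation phase not ending, there is a set $U$ of $N_0$ dimensions whose logarithmic potential stays below $c_s$ for all time after $T_F$. Using the strong Markov property once more to restart the process at $T_F$, the continuation is a fresh PSO started from a configuration representing the beginning of a good stagnation phase that never ends, and by hypothesis the swarm converges almost surely, so the event $\mathcal{E}$ of Assumption~\ref{ass:insignificance2} (taken with $S=U$) holds. That assumption then gives $\P(\lim_{t\to\infty}X_t^{n,d}=v\MID\mathcal{E})=0$ for every particle $n$, every $d\in U$ and every fixed $v\in\R$. Let $z$ denote the common point of convergence. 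If $z$ were a local optimum $o$, then for each $d\in U$ its coordinate would satisfy $z_d=o_d$; but since $f$ has only countably many local optima, the set $\lbrace o_d : o\text{ is a local optimum}\rbrace$ is countable, so countable subadditivity gives $\P(z_d\in\lbrace o_d\rbrace\MID\mathcal{E})=0$ and therefore $\P(z\text{ is a local optimum}\MID\mathcal{E})=0$.

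It remains to assemble the two parts. Almost surely the swarm converges and almost surely a ${\mathit PH}_F$ phase starts at a finite time, so almost surely we are on the conditioning event underlying the previous paragraph, where the point of convergence is almost surely not a local optimum. Decomposing over the countably many possible starting indices $k$ and the finitely many possible stagnating sets $U$ --- each handled by the same restart argument --- and summing the resulting null events shows that the swarm converges to a non-optimal point almost surely.

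The main obstacle I expect is the rigorous use of the strong Markov property to ``restart'' the process at the random stopping times $\tilde\alpha_i$ and at $T_F$: one must verify that conditioning on having entered a good stagnation phase at such a time reproduces exactly the initial configuration assumed in Theorem~\ref{sat:theorem1} and Assumption~\ref{ass:insignificance2}, and that the random stagnating set $U$ can be treated dimension-by-dimension without spoiling measurability. Once this is justified, the geometric decay of $\P(\tilde\beta_i<\infty)$ and the countability step are routine.
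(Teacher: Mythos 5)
Your proposal is correct and follows the same overall skeleton as the paper's proof (geometric decay of the survival probability of the alternating phase structure via Theorem~\ref{sat:theorem1}, then Assumption~\ref{ass:insignificance2} plus countability of the local optima to rule out optimal limit points), but the key quantitative step is handled by a genuinely different and more elementary route. The paper establishes that a phase of type ${\mathit PH}_F$ occurs almost surely by proving the stronger statement $\E[T_F]<\infty$; this forces it to also bound the expected duration $\E[Y_i\MID A_{S,i}]$ of every stagnation phase that does end, which it does by writing that expectation as $\sum_t t\triangle t\cdot\P(\text{phase ends exactly at }t)$, union-bounding over the at most $D$ stagnating dimensions, and invoking the tail estimate $\P(\sum_{\tilde t}I_{\tilde t}>0)\le C/t^3$ from Lemma~\ref{lem:clt} to get a convergent series $\sum_t t\cdot t^{-3}$. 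You avoid this entirely: you only need $X_i<\infty$ a.s.\ on $A_{X,i}$ (which follows from the hypothesis $\E[X_i\MID A_{X,i}]\le T$) together with $\P(\tilde\beta_i<\infty\MID\tilde\alpha_i<\infty)\le 1-p$ from Theorem~\ref{sat:theorem1}, giving $\P(\tilde\beta_i<\infty)\le(1-p)^{i+1}\to 0$ over the nested events $\lbrace\tilde\beta_i<\infty\rbrace$, hence $T_F<\infty$ a.s. What your shortcut loses is exactly the additional conclusion the authors highlight after their proof, namely that the expected onset time of permanent stagnation is finite (with an explicit bound $C(\triangle t,N_0,\mu,M,p_0,T)$); what it buys is that you never need the third-moment tail bound for the duration of ended stagnation phases, only the positivity of $p$. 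The concluding step via Assumption~\ref{ass:insignificance2} is the same in substance as the paper's (which merely points to the discussion below that assumption), and your explicit remark about the countable set of $d$-th coordinates of local optima matches the paper's informal justification. The strong-Markov restart issue you flag is glossed over in the paper as well, so you are not missing anything the authors supply.
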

\begin{proof}
If the final phase which has type ${\mathit PH}_F$ appears,
then the PSO will not converge to a local optimum for reasons explained below Assumption \ref{ass:insignificance2}.
Now all which is needed to be shown is that a phase of type ${\mathit PH}_F$ appears almost surely.
This is true if it can be shown that the expected value of $T_F$, the starting time of the phase of type ${\mathit PH}_F$, is finite.\\
Let $A_{X,i}:=\lbrace \tilde\beta_{i-1}<\infty\rbrace$, the event that the $i$'th instance of a phase which has type ${\mathit PH}_{X}$ appears,
let $A_{Y,i}:=\lbrace\tilde\beta_{i}<\infty\rbrace$, the event that the $i$'th instance of a phase which has type ${\mathit PH}_{Y}$ appears
and let $A_{S,i}:=\lbrace\tilde\alpha_{i}<\infty\rbrace$,
the event that the $i$'th instance of a good $(\triangle t,$ $ \Phi,$ $N_0,$ $c_0,$ $c_s,$ $\mu,$ $M,$ $p_0)$-stagnation phase appears.
Let $N_X$ be $\sup\lbrace i\in\N:\P(A_{X,i})>0\rbrace$ and let $N_Y$ be $\sup\lbrace i\in\N:\P(A_{Y,i})>0\rbrace$.
Both values can be infinity.
Let $S_{0,i}$ be the random variable, which represents the starting subset of the $i$'th good $(\triangle t, \Phi, N_0, c_0, c_s, \mu, M, p_0)$-stagnation phase
or the empty set if this phase does not occur.
Let $\tilde I_{t,d,i}$ be $B_{t,\tilde\Gamma_{B,i}}+J_{t,d,\tilde\Gamma_{J,i}}$, similar as in the proof of Theorem \ref{sat:theorem1},
such that $\tilde\Gamma_{B,i}$ and $\tilde\Gamma_{J,i}$ are the random probability distributions
of the $i$'th good $(\triangle t, \Phi, N_0, c_0, c_s, \mu, M, p_0)$-stagnation phase.
The random probability distributions are introduced in Assumption \ref{ass:separation}.
If the $i$'th good $(\triangle t, \Phi, N_0, c_0, c_s, \mu, M, p_0)$-stagnation phase
does not exist then $\tilde\Gamma_{B,i}$ is defined to be a Dirac impulse on $\mu$
and $\tilde\Gamma_{J,i}$ is defined to be a Dirac impulse on zero.
Additionally, let $\mathcal{A}$ be the generated $\sigma$-Algebra
of the family of $\sigma$-Algebras $\lbrace\mathcal{A}_t\rbrace_{t\in\N}$,
where $\mathcal{A}_t$ is the natural filtration of the PSO as specified in Definition \ref{def:classicalPSO}.
This $\sigma$-algebra is needed because for any time $t$ it is not possible to decide whether a stagnation phase,
which is active at time $t$ has type ${\mathit PH}_F$ or ${\mathit PH}_Y$.
Therefore $A_{Y,i}$ is not necessarily contained in $\mathcal{A}_t$ for any $t$.
The conditional expectation of $X_i$, the duration of the $i$'th phase of type ${\mathit PH}_X$, is by the requirements of this theorem bounded by
$$
E[X_i\MID A_{X,i}]\le T
$$
if the event $A_{X,i}$ has positive probability.
Let $T_{S,i}$ be the random variable,
such that $T_{S,i}\cdot \triangle t$ represents the time of the start of the $i$'th stagnation phase,
which has either type ${\mathit PH}_Y$ or ${\mathit PH}_F$.
If the $i$'th stagnation phase does not occur then $T_{S,i}$ is set to infinity.
As already specified, $Y_i$ represents the duration of the $i$'th phase of type ${\mathit PH}_Y$ if that phase exists and zero otherwise.
Let $p$ be defined as $p(N_0,\mu,M,p_0)$,
the lower bound of the probability that a good $(\triangle t, \Phi, N_0, c_0, c_s, \mu, M, p_0)$-stagnation phase does not end from Theorem \ref{sat:theorem1}.
\begin{eqnarray*}
\E[T_F]
&=&\sum_{i=0}^{N_X}\P(A_{X,i})\E[X_i\MID A_{X,i}]+\sum_{i=0}^{N_Y}\underbrace{\P(A_{Y,i})\E[Y_i\MID A_{Y,i}]}_{=\E[Y_i]}\allowdisplaybreaks\\
&=&\sum_{i=0}^{N_X}\P(A_{X,i})\E[X_i\MID A_{X,i}]+\sum_{i=0}^{N_Y}\P(A_{S,i})\E[Y_i\MID A_{S,i}]\hspace{6cm}\phantom{a}
\end{eqnarray*}
compare with end of stagnation phase (Definition \ref{def:stagnationPhase})
\begin{eqnarray*}
&=&       \sum_{i=0}^{N_X}\P(A_{X,i})\E[X_i\MID A_{X,i}]+\sum_{i=0}^{N_Y}\P(A_{S,i})\sum_{t=1}^\infty t\triangle t\cdot\\
&&\cdot\P(\vert\lbrace d\in S_{0,i}:\sup_{0\le\tilde t<t}\Psi(T_{S,i}+\tilde t,d)\le c_s\rbrace\vert\ge N_0
\\&&
\wedge \vert\lbrace d\in S_{0,i}:\sup_{0\le\tilde t\le t}\Psi(T_{S,i}+\tilde t,d)\le c_s\rbrace\vert< N_0\vert A_{S,i})\allowdisplaybreaks\\
&\le&\sum_{i=0}^{N_X}(1-p)^i\E[X_i\MID A_{X,i}]+\sum_{i=0}^{N_Y}(1-p)^i\sum_{t=1}^\infty t\triangle t\cdot\\
&&\cdot\P(\vert\lbrace d\in S_{0,i}:\sup_{0\le\tilde t<   t}\sum_{t'=T_{S,i}}^{T_{S,i}+\tilde t}\tilde I_{t',d,i}\le 0\rbrace\vert\ge N_0
\wedge  \vert\lbrace d\in S_{0,i}:\sup_{0\le\tilde t\le t}\sum_{t'=T_{S,i}}^{T_{S,i}+\tilde t}\tilde I_{t',d,i}\le 0\rbrace\vert <  N_0\vert A_{S,i})\allowdisplaybreaks\\
&\le&\sum_{i=0}^{\infty}(1-p)^i T+\sum_{i=0}^{N_Y}(1-p)^i\sum_{t=1}^\infty t\triangle t\cdot
\P(\exists d\in S_{0,i}:
\sup_{0\le\tilde t< t}\sum_{t'=T_{S,i}}^{T_{S,i}+\tilde t}\tilde I_{t',d,i}\le 0
\wedge \sum_{t'=T_{S,i}}^{T_{S,i}+t}\tilde I_{t',d,i}> 0\vert A_{S,i})\allowdisplaybreaks\\
&\le&\frac{T}{p}+\sum_{i=0}^{N_Y}(1-p)^i\sum_{t=1}^\infty t\triangle t
\sum_{d=1}^D
\P(d\in S_{0,i}\MID A_{S,i})\cdot
\P(\sup_{0\le\tilde t<t}\sum_{t'=T_{S,i}}^{T_{S,i}+\tilde t}\tilde I_{t',d,i}\le 0 \wedge \sum_{t'=T_{S,i}}^{T_{S,i}+t}\tilde I_{t',d,i}> 0\vert A_{S,i})\allowdisplaybreaks\\
&\le&\frac{T}{p}+\sum_{i=0}^{N_Y}(1-p)^i\sum_{t=1}^\infty t\triangle t
\sum_{d=1}^D\P(\sum_{t'=T_{S,i}}^{T_{S,i}+t}\tilde I_{t',d,i}> 0\vert A_{S,i})\allowdisplaybreaks\\
&\overset{\text{Lemma \ref{lem:clt}}}{\le}\hspace*{-1cm}&\hspace*{0.8cm}\frac{T}{p}+\sum_{i=0}^{N_Y}(1-p)^i\sum_{t=1}^\infty t\triangle t
\sum_{d=1}^D
\frac{C(\mu,h(M))}{t^3}\allowdisplaybreaks\\
&\le&\underbrace{\frac{T}{p}+\frac{1}{p}D\triangle t\cdot C(\mu,h(M))\sum_{t=1}^\infty t^{-2}}_{=:C(\triangle t, N_0,\mu,M,p_0,T)<\infty}.
\end{eqnarray*}
\end{proof}
This is the final statement of the theoretical part.
The proof of Theorem \ref{sat:theorem2} not only shows that the final phase with type ${\mathit PH}_F$ appears almost surely,
but it also verifies that the expected begin of the final phase is a finite value.
Now evidence is provided in the experimental part,
that those conditions truly occur,
and therefore that PSO does not converge to a local optimum almost surely
for some well known benchmarks and parameters of the PSO,
which are commonly used.

\newpage
\section{Experimental Results}
\label{sec:experiments}
Since we are interested in the behavior of the swarm for $t\rightarrow\infty$, we let the swarm do a very large number of iterations.
During the ongoing process, the absolute value of the velocities and the change in the objective function tend to zero.
Therefore calculations with double precision are not sufficiently precise.
Instead, we used the \emph{mpf\_t} data type of the \emph{GNU Multiple Precision Arithmetic Library}, which supplies arbitrary precision.
Initially we start with a precision of 2000 bits as significant precision of the mantissa and increase the precision on demand,
i.\,e., on every addition and subtraction we perform a check whether the current precision needs to be increased.
The constants of PSO are assigned to values that are commonly used, i.\,e., $\chi=0.72984$ and $c_1=c_2=1.496172$, as proposed in \cite{CK:02}.
The classical PSO is used as specified in Definition \ref{def:classicalPSO}.
Additionally, the PSO is visualized as pseudo code in Algorithm \ref{alg:classicalPSO}.
No bound handling procedure is used.
Particles move through the space without any borders.
We checked our results with some benchmarks of \cite{benchmarkset} and an additional function.
In detail, we investigate the sphere function,
the high conditioned elliptic Function
and Schwefel's problem (see \cite{benchmarkset} for detailed problem description).
Additionally, the diagonal function is added, where its second derivation matrix has a single heavy eigenvalue and the corresponding eigenvector is oriented diagonally.
\begin{defi}[Objective functions]\label{def:objectiveFunctions}
The formal definitions of the used objective functions are
\begin{itemize}
\item $f_{sph}(x):=\sum_{i=1}^D x_i^2$, the sphere function,
\item $f_{hce}(x):=\sum_{i=1}^D(10^6)^\frac{i-1}{D-1}x_i^2$, the high conditioned elliptic function,
\item $f_{sch}(x):=\sum_{i=1}^D(\sum_{j=1}^ix_j)^2$, the Schwefel's problem and
\item $f_{diag}(x):=\sum_{i=1}^D x_i^2+10^6\cdot\left(\sum_{i=1}^D x_i\right)^2$, the diagonal function.
\end{itemize}
\end{defi}
The optimal position for all of these functions is the origin and the optimal value is zero.
\subsection{Used Software}
For arbitrary precision the \emph{mpf\_t} data type of the \emph{GNU Multiple Precision Arithmetic Library} with version 4.3.2 is used.
The program, which simulates PSO is implemented in C++ and uses the \emph{mpf\_t} data type (program code is enclosed).
The program runs on \emph{openSUSE 12.3} and uses the \emph{GNU Compiler Collection} with version 4.3.3.
For creation of diagrams \emph{plot2svg} of J\"urg Schwizer within \emph{Matlab R2013b} and \emph{InkScape} 0.48 is used.

\subsection{Expectation of increments for various scenarios}
This section represents a motivation for stagnation phases with a specific number of stagnating dimensions.
Firstly, some typical runs of the PSO are presented,
which illustrate that for a combination of function and number of particles there is always a fixed number of dimensions, which are not stagnating.
Secondly, the expectation of the increments are visualized to support this impression.
\begin{figure}[htbp]
\centering
\subfloat[\label{figure:separationOfDimensions1A}Logarithm of potential $\log(\Phi)$]
{\includegraphics[width=0.865\textwidth]{./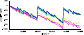}}

\subfloat[\label{figure:separationOfDimensions1B}Logarithmic potential $\Psi$]
{\includegraphics[width=0.865\textwidth]{./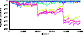}}

\subfloat[\label{figure:separationOfDimensions1C}Logarithm of distance to optimum]
{\includegraphics[width=0.865\textwidth]{./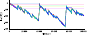}}
\caption{(a) Potential, (b) logarithmic potential and (c) distance to optimum with potential as specified in Definition \ref{def:experimentalPotential} on sphere function in 10 dimensions and with 3 particles; each line represents a single dimension, usual initialization}
\label{figure:separationOfDimensions1}
\end{figure}
\begin{figure}[htbp]
\centering
\subfloat[\label{figure:separationOfDimensions2A}Logarithm of potential $\log(\Phi)$]
{\includegraphics[width=0.865\textwidth]{./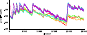}}

\subfloat[\label{figure:separationOfDimensions2B}Logarithmic potential $\Psi$]
{\includegraphics[width=0.865\textwidth]{./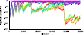}}

\subfloat[\label{figure:separationOfDimensions2C}Logarithm of distance to optimum]
{\includegraphics[width=0.865\textwidth]{./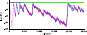}}
\caption{(a) Potential, (b) logarithmic potential and (c) distance to optimum with potential as specified in Definition \ref{def:experimentalPotential} on sphere function in 10 dimensions and with 3 particles; each line represents a single dimension, initialization with Algorithm \ref{alg:specialInit}}
\label{figure:separationOfDimensions2}
\end{figure}

Figure \ref{figure:separationOfDimensions1} shows a typical run of the PSO on the sphere function with three particles and 10 dimensions.
All positions are initialized uniformly at random in the $10$-dimensional cube $[-100.0,\,100.0]^{10}$.
Additionally, zero velocity initialization is used, which means that all entries of all velocity vectors are initialized with the value zero.
Other velocity initializations have also been investigated, but the results are the same.
The initialization of positions uniformly at random in a cube and with zero velocity will be referred to as \emph{usual initialization}, because this is a common initialization for PSO algorithms.
The Figure \ref{figure:separationOfDimensions1} contains the logarithm of the potential, the logarithmic potential and the logarithm of the distance to the origin, which is the only optimal point.
For some periods of time there are some dimensions which have low logarithmic potential and for the same dimensions the logarithm of the distance to the origin stays quite constant.
If one of those dimensions regains potential then all dimensions regain potential,
but dimensions which already have low logarithmic potential do increase much slower than the others.
Therefore their logarithmic potential decreases.
In Figure \ref{figure:separationOfDimensions1} approximately at time steps $40\,000$ and $75\,000$ a dimension,
which had lower logarithmic potential than other dimensions, regains potential.
At the same times the logarithmic potential of the remaining dimensions,
which already had low logarithmic potential,
significantly decreases.
While some dimensions increase their potential in dimensions,
which had large logarithmic potential previously,
the respective positions may become worse.
Finally, there are three dimensions which have a significantly smaller logarithmic potential than the other dimensions.
The other 7 dimensions are optimizing.
There are always short periods of time, where one of the 7 dimensions has a quite constant difference to the origin, as you can see in Figure \ref{figure:separationOfDimensions1}.
These periods can be that large that one of the three dimensions with low logarithmic potential regains potential,
but as the logarithmic potential of the lower three dimensions continuously decreases,
it becomes more and more unlikely that dimensions change from low logarithmic potential to high logarithmic potential.
An interesting fact is that the number of optimizing dimensions in this experiments stay the same,
even if the number of dimensions of the domain changes.
For the sphere function and three particles finally $\max(0,D-7)$ dimensions appear,
which have significantly smaller logarithmic potential than the other dimensions.

Figure \ref{figure:separationOfDimensions2} illustrates another run of the PSO on the sphere function with three particles and 10 dimensions.
For this run the positions are initialized as specified in Algorithm \ref{alg:specialInit}.
\begin{algorithm}
\caption{Special position initialization}
\label{alg:specialInit}

\SetKwInOut{Input}{input}
\SetKwInOut{InOut}{in/out}
\SetKwInOut{Output}{output}
\SetKwInOut{Data}{data}

\SetKw{KwTo}{to}%

\Input{number of particles $N$, number of dimensions $D$, scale $S$, number of dimensions with low logarithmic potential $L$, first dimension with low logarithmic potential $d^*$}
\Output{a vector of initial positions for each particle $X\in\left(\R^D\right)^N$}
\BlankLine
\tcc{rand($a,b$) supplies a uniform random value in $[a,b]$}
\For{$d \ASSIGN 1$ \KwTo $D$}
{
	\For{$n \ASSIGN 1$ \KwTo $N$}
	{
		$X[n][d] \ASSIGN$ rand(-100.0, 100.0)\;
	}
}
\BlankLine
\For{$d \ASSIGN d^*$ \KwTo $d^*+L-1$}
{
	$Y \ASSIGN$ rand(-100.0, 100.0)\;
	\For{$n \ASSIGN 1$ \KwTo $N$}
	{
		$X[n][d] \ASSIGN X[n][d] \cdot 2^{-S}+Y$\;
	}
}
\BlankLine
\Return $X$ \;
\end{algorithm}
This algorithm first initializes all $D$ dimensions as previously described uniformly at random in the interval $[-100.0, 100.0]$.
The dimensions $d^*$ to $d^*+L-1$ are initialized with some random center ($Y\in[-100,0, 100.0]$) and some random noise in the range $[-100.0\cdot 2^{-S}, 100.0\cdot 2^{-S}]$.
This initialization simulates stagnation of $L$ consecutive dimensions with initial logarithmic potential of approximately $-S$.
The chosen parameters for the run visualized in Figure \ref{figure:separationOfDimensions2} are $N=3$ particles,
$D=10$ dimensions, a scale of $S=200$, $L=9$ initial stagnating dimensions and $d^*=1$ the index of the first stagnating dimension.

Another option would have been scaling all position values of the first $L$ dimensions with $2^{-S}$, but this is problematic.
On the one hand the value of the initial potential then is mainly much less than $-S$,
because not only the velocities are reduced,
but also the derivative is smaller in the neighborhood of the optimum.
The reduction of the derivative cannot be controlled with some general formula,
because it depends on the objective function.
On the other hand the logarithmic potential used in this paper on functions which are not composite,
like $f_{sch}$ and $f_{diag}$,
acts not as proposed in the assumptions of Section \ref{sec:theory} for this initialization.
This is due to the fact that the derivative of those objective functions in stagnating dimensions depends also on the non-stagnating dimensions.
If it changes which part overweights the other then also the distribution of the increments changes significantly during a stagnation phase.
This change appears for the functions $f_{sch}$ and $f_{diag}$ when the positions of non-stagnating dimensions reach values in the interval $[-100.0\cdot 2^{-S}, 100.0\cdot 2^{-S}]$.
\begin{exam}
\label{exam:derivativeEvolution}
The function $f(x,y)=x^2+(x+y)^2$, the Schwefel's problem in two dimensions, is analyzed.
The first derivative equals $\left(4 x+2y, 2x+2y\right)$.
If $y= -2$, then the optimal value for $x$ is $1$,
because then the first entry of the first derivative is zero.
If $x= 100 + 1$,
then all entries of the first derivate are mainly determined by the value of $x$.
If the difference of $x$ to the optimal value is decreased by a factor of $100$ to $x= 1 + 1$,
then the first derivative is also decreased approximately by a factor of $100$.
If the difference of $x$ to the optimal value is again decreased by a factor of $100$ to $x= 0.01 + 1$,
then the first entry of the first derivative is also decrease by a factor of $100$,
but the second entry stays quite constant.
Therefore the second entry of the first derivative is now mainly determined by $y$.
The potential recognizes this change, too.
\end{exam}
If the positions of the stagnating dimension would be initialized small compared to the other dimensions,
then the position values of the stagnating dimensions stay constant while the position values of non-stagnating dimensions tend to their optimal value.
Initially the complete first derivative is determined by the dimensions with larger position values.
In the beginning the absolute values of all entries of the first derivative are decreasing, 
but finally the entries for stagnating dimensions of the first derivative stay constant.
Therefore the behavior changes significantly if this border is crossed.
For sure such situations may appear after initialization or during a PSO run,
but it is very unlikely that the particles are initialized in that way if standard initializations are used
and it is very unlikely that stagnating dimensions become much more optimized than non-stagnating dimensions during a PSO run.
The objective functions $f_{sph}$ and $f_{hce}$ are not facing this problem,
because the derivative of a single dimension depends only on the dimension itself.

For the PSO run in Figure \ref{figure:separationOfDimensions2} 9 of 10 dimensions are initialized as stagnating dimensions with logarithmic potential $-200$.
This figure illustrates that if less than seven dimensions are non-stagnating,
then the logarithmic potential of stagnating dimensions is increasing.
The smaller the number of non-stagnating dimensions is,
the larger is the increase of the logarithmic potential.
Finally, one of the stagnating dimensions becomes significant.
This results in a running phase in that dimension.
Running phases are introduced in \cite{SWa:13, SWb:13, SWc:15}.
If the swarm is running in a dimension $d$,
then the coordinate value of dimension $d$ determines the local attractor and the influence of the other dimensions can be neglected.
All velocities in direction $d$ are either all positive or all negative,
the local attractors are updated each step and the global attractor is updated at least once each iteration.
Dimension $d$ is heavily improved during this phase.
The running phase terminates as soon as walking in dimension $d$ does not lead to a further improvement of the function value.
During this running phase, the previously non-stagnating dimensions and the running dimension regain potential.
The other dimensions also experience an increase in potential,
but this increase is much smaller compared to the running dimension and the non-stagnating dimensions.
Therefore the logarithmic potential decreases.
This behavior repeats until seven dimensions are non-stagnating.
The largest running phase in Figure \ref{figure:separationOfDimensions2} occurs approximately at step $19\,000$. 

The PSO runs visualized in Figures \ref{figure:separationOfDimensions1} and \ref{figure:separationOfDimensions2} do not look quite linear as proposed.
\begin{figure}[htbp]
\centering
\subfloat[\label{figure:separationOfDimensions3A}Logarithm of potential $\log(\Phi)$]
{\includegraphics[width=0.865\textwidth]{./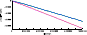}}

\subfloat[\label{figure:separationOfDimensions3B}Logarithmic potential $\Psi$]
{\includegraphics[width=0.865\textwidth]{./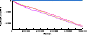}}

\subfloat[\label{figure:separationOfDimensions3C}Logarithm of distance to optimum]
{\includegraphics[width=0.865\textwidth]{./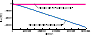}}
\caption{(a) Potential, (b) logarithmic potential and (c) distance to optimum with potential as specified in Definition \ref{def:experimentalPotential} on sphere function in 10 dimensions and with 3 particles; each line represents a single dimension, usual initialization}
\label{figure:separationOfDimensions3}
\end{figure}
This is due to the fact that they do not show continuous stagnation phases.
Figure \ref{figure:separationOfDimensions3} represents the extended run of Figure \ref{figure:separationOfDimensions1}, with hundred times more iterations.
The linear drift of the logarithmic potential can easily be recognized in this figure.
Furthermore, the non-stagnating dimensions can hardly be distinguished because their variance is not larger than in less number of iterations.
As the scale has increased, the variance looks smaller.

\begin{defi}
For an evaluated random variable of the $r$'th test run,
we write the original random variable in single tilted brackets with index $r$.
For example $\langle A\rangle_r$ is written for the evaluated random variable with name $A$ in test run $r$.
\end{defi}
\begin{expe}
\label{exp:expectationOfIncrements}
\label{EXP:EXPECTATIONOFINCREMENTS}
The first statistical analysis measures expected alteration of the logarithm of the potential and the logarithmic potential for various scenarios.
All functions are evaluated with $D=10$ dimensions and $100\,000$ iterations.
The stagnating dimensions, the number of particles $N$ and the objective functions vary.
Algorithm \ref{alg:specialInit} is used for position initialization and all velocities are initially set to zero.
The initial logarithmic potential is adjusted,
such that for no test run the logarithmic potential of any stagnating dimension reaches a value of $-100$ or more within the $100\,000$ iterations.
Each tested configuration is started $R=500$ times with different seeds.
Let $D_S$ be the set of stagnating dimensions, which is fixed for each specific configuration.
Let $T_{m}$ be $50\,000$ and let $T_{e}$ be $100\,000$, the number of half and full time steps of the evaluated process.
\end{expe}
The following estimaters are defined:
\begin{align*}
\overline{\mu_U}:=&\frac{1}{R(T_e-T_m)}\sum_{r=1}^{R}\left[{\log\left(\max_{d\not\in D_S}\left(\left\langle\Phi\left(\frac{T_e}{\triangle t}, d\right)\right\rangle_r\right)\right)
	-\log\left(\max_{d\not\in D_S}\left(\left\langle\Phi\left(\frac{T_m}{\triangle t}, d\right)\right\rangle_r\right)\right)}\right],\allowdisplaybreaks\\
\overline{\mu_M}:=&\frac{1}{R(T_e-T_m)}\sum_{r=1}^{R}\left[{\log\left(\min_{d\not\in D_S}\left(\left\langle\Phi\left(\frac{T_e}{\triangle t}, d\right)\right\rangle_r\right)\right)
	-\log\left(\min_{d\not\in D_S}\left(\left\langle\Phi\left(\frac{T_m}{\triangle t}, d\right)\right\rangle_r\right)\right)}\right],\allowdisplaybreaks\\
\overline{\mu_D}:=&\frac{1}{R\cdot\vert D_S\vert(T_e-T_m)}\sum_{r=1}^{R}\sum_{d\in D_S}\left[{\log\left(\left\langle\Phi\left(\frac{T_e}{\triangle t}, d\right)\right\rangle_r\right)
	-\log\left(\left\langle\Phi\left(\frac{T_m}{\triangle t}, d\right)\right\rangle_r\right)}\right],\allowdisplaybreaks\\
\overline{\mu_L}:=&
\overline{\mu}_D-\overline{\mu}_U.
\end{align*}
As linearity is available only in logarithmic scale, the logarithm of the potential $\Phi$ and the logarithmic potential $\Psi$ is used.
$\overline{\mu_U}$ represents the average decrease or increase of the logarithm of the potential of the most significant dimension per time step,
$\overline{\mu_M}$ represents the average decrease or increase of the logarithm of the potential of the least significant dimension among the initially non-stagnating dimensions per time step,
$\overline{\mu_D}$ represents the average decrease or increase of the logarithm of the potential of stagnating dimensions per time step and
$\overline{\mu_L}$ represents the average decrease or increase of the logarithmic potential of the stagnating dimensions per time step.
It is assumed that the first half of iterations is sufficient for adequate mixing.
Then the second half is used for average calculation.
Additionally the squared standard deviations are estimated for $\overline{\mu_U}$, $\overline{\mu_M}$, $\overline{\mu_D}$ and $\overline{\mu_L}$ by
\begin{align*}
\overline{\VAR_U}:=&\frac{1}{R(T_e-T_m)^2}\sum_{r=1}^{R}\left[{\log\left(\max_{d\not\in D_S}\left(\left\langle\Phi\left(\frac{T_e}{\triangle t}, d\right)\right\rangle_r\right)\right)
}\right.\\&\hspace{3.0cm}\left.{
	-\log\left(\max_{d\not\in D_S}\left(\left\langle\Phi\left(\frac{T_m}{\triangle t}, d\right)\right\rangle_r\right)\right)}-(T_e-T_m)\overline{\mu_U}\right]^2,\\
\overline{\VAR_M}:=&\frac{1}{R(T_e-T_m)^2}\sum_{r=1}^{R}\left[{\log\left(\min_{d\not\in D_S}\left(\left\langle\Phi\left(\frac{T_e}{\triangle t}, d\right)\right\rangle_r\right)\right)
}\right.\\&\hspace{3.0cm}\left.{
	-\log\left(\min_{d\not\in D_S}\left(\left\langle\Phi\left(\frac{T_m}{\triangle t}, d\right)\right\rangle_r\right)\right)}-(T_e-T_m) \overline{\mu_M}\right]^2,\\
\overline{\VAR_D}:=&\frac{1}{R\cdot\vert D_S\vert(T_e-T_m)^2}\sum_{r=1}^{R}\sum_{d\in D_S}\left[{\log\left(\left\langle\Phi\left(\frac{T_e}{\triangle t}, d\right)\right\rangle_r\right)
}\right.\\&\hspace{4.9cm}\left.{
	-\log\left(\left\langle\Phi\left(\frac{T_m}{\triangle t}, d\right)\right\rangle_r\right)}-(T_e-T_m)\overline{\mu_D}\right]^2,\\
\overline{\VAR_L}:=&\frac{1}{R\cdot\vert D_S\vert(T_e-T_m)^2}\sum_{r=1}^{R}\sum_{d\in D_S}\left[{\left\langle\Psi\left(\frac{T_e}{\triangle t}, d\right)\right\rangle_r
	-\left\langle\Psi\left(\frac{T_m}{\triangle t}, d\right)\right\rangle_r}-(T_e-T_m)\overline{\mu_L}\right]^2
\end{align*}
and
$\overline{\sigma_U}:=\sqrt{\overline{\VAR_U}}$, $\overline{\sigma_M}:=\sqrt{\overline{\VAR_M}}$,
$\overline{\sigma_D}:=\sqrt{\overline{\VAR_D}}$, $\overline{\sigma_L}:=\sqrt{\overline{\VAR_L}}$.
The variance will be analyzed in Section \ref{subsec:assumptionOfIndependence} in detail.
In Figure \ref{fig:expectationSphere3Particles} the measured decrease of the logarithm of the potential for non-stagnating dimension $\overline{\mu_U}$,
for stagnating dimensions $\overline{\mu_D}$, the measured change in logarithmic potential $\overline{\mu_L}$ and their measured standard deviations are visualized
for the sphere function $f_{sph}$, $N=3$ particles, $D=10$ dimensions and for different numbers of stagnating dimensions $L$.
\begin{figure}[t]
\center{\includegraphics[width=0.7\textwidth]{./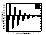}}
\caption{Measured expectations $\overline{\mu_U}$, $\overline{\mu_D}$ and $\overline{\mu_L}$ and their standard deviations for sphere function $f_{sph}$, $N=3$ particles, $D=10$ dimensions, $d^*=1$ and variable number of stagnating dimensions $L$}
\label{fig:expectationSphere3Particles}
\end{figure}
The decrease of the logarithm of the potential for non-stagnating dimensions $\overline{\mu_U}$ becomes smaller as the number of non-stagnating dimensions increases.
This is due to the fact that,
the more dimensions need to be optimized,
the more time is needed for optimization.
The second columns represent the decrease of the logarithm of the potential for stagnating dimensions $\overline{\mu_D}$, which also becomes smaller,
but not that fast as $\overline{\mu_U}$.
The third columns, which represent the change of logarithmic potential $\overline{\mu_L}$, is heavily positive for only one non-stagnating dimension,
but it decreases while the number of non-stagnating dimensions increases until finally the measured change becomes negative.
If less than 7 dimensions are non-stagnating and further stagnating dimensions are available,
then the logarithmic potential is likely to increase until at least one of the stagnating dimensions becomes non-stagnating,
because $\overline{\mu_L}$ is positive in those cases.

Figures \ref{fig:expectations2Particles} and \ref{fig:expectations3Particles} visualize the expectations of the change in logarithmic potential per time step for various scenarios.
\begin{figure}[htb]
\center{\includegraphics[width=0.7\textwidth]{./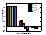}}
\caption{Measured expectations of logarithmic potential $\overline{\mu_L}$ and their standard deviations for various functions, $N=2$ particles, $D=10$ dimensions and variable number of stagnating dimensions $L$}
\label{fig:expectations2Particles}
\end{figure}
\begin{figure}[htb]
\center{\includegraphics[width=0.7\textwidth]{./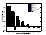}}
\caption{Measured expectations of logarithmic potential $\overline{\mu_L}$ and their standard deviations for various functions, $N=3$ particles, $D=10$ dimensions and variable number of stagnating dimensions $L$}
\label{fig:expectations3Particles}
\end{figure}
An obvious attribute of both graphs is that all functions perform similar if only one dimension is non-stagnating ($L=9$).
All functions mentioned in this paper have a constant second derivative,
which is a positive definite matrix,
and third and higher derivatives are zero.
If only a single dimension is not stagnating then the remaining function is represented by a parabola with some scale.
Remaining function is meant as the function,
which remains after all values of stagnating dimensions are replaced with the constants that represents their current position.
The scale of the parabola does not influence the behavior of the PSO,
because it only evaluates whether a position is better or not,
which is not dependent on the scale.
For sure the scale of a parabola has linear influence on $\Phi$,
but the logarithmic potential is not affected,
because this scale is compensated by the subtraction in Definition \ref{def:potential}.

Furthermore, the sphere function $f_{sph}$ and the high conditioned elliptic function $f_{hce}$ have quite similar measured expectations.
The high conditioned elliptic function actually is a sphere function scaled along the axes with different constant scales.
It seems that the PSO does not care if such scaling is applied to the sphere function.
\\
Other functions can have different behavior.
For the
Schwefel's problem it even depends on which dimensions are stagnating.
To explain this property, examining the matrix representation of the mentioned objective functions is helpful:
\begin{itemize}
\item $f_{sph}(x)=x^t\cdot A_{sph}\cdot x$, with $(A_{sph})_{i,j}=1$ if $i=j$ and $0$ else,
\item $f_{hce}(x)=x^t\cdot A_{hce}\cdot x$, with $(A_{hce})_{i,j}=(10^6)^\frac{i-1}{D-1}$ if $i=j$ and $0$ else,
\item $f_{sch}(x)=x^t\cdot A_{sch}\cdot x$, with $(A_{sch})_{i,j}=D+1-\min(i,j)$ and
\item $f_{diag}(x)=x^t\cdot A_{diag}\cdot x$, with $(A_{diag})_{i,j}=10^6+(A_{sph})_{i,j}$,
\end{itemize}
where $x^t$ represents the transposed vector of $x$.
Important are the directions of the eigenvectors of those matrices.
All eigenvectors have only real values, because the matrices are symmetric and positive semidefinite,
which always is a property of second derivative matrices of objective functions, which are two times continuously differentiable, at local minima.
For sphere function $f_{sph}$ and high conditioned elliptic function $f_{hce}$  there exists an eigenvector basis,
which is parallel to the coordinate axes.
This is the main reason for their similar behavior, because particles prefer to walk along the axes as proposed in \cite{SWb:13}.
In contrast, we have the diagonal function $f_{diag}$ with matrix $A_{diag}$,
which has a single large eigenvalue $(D\cdot 10^6+1)$ and the corresponding eigenvector is oriented diagonally to all axes,
i.\,e., the eigenvector is represented by $(1,1,\ldots,1)^t\in\R^D$.
The PSO has only bad performance on this function.
For $2$, $3$ and $4$ particles only two non-stagnating dimensions are necessary to generate considerable decreasing logarithmic potential,
while for the sphere function and the high conditioned elliptic function there are $3$ non-stagnating dimensions if $2$ particles are available and
there are $7$ non-stagnating dimensions if $3$ particles are available until the measured expectation of logarithmic potential becomes negative.
For $4$ or more particles operating on the sphere function or the high conditioned elliptic function it is even not known whether there exists a finite bound for the number of non-stagnating dimensions, 
such that the expectation of logarithmic potential becomes negative.
As the logarithmic potential of the diagonal function $f_{diag}$ is already negative for two non-stagnating dimensions,
only the values for $L=9$ and $L=8$ are visualized in Figures \ref{fig:expectations2Particles} and \ref{fig:expectations3Particles}.
Negative expectation of logarithmic potential means that stagnating dimension prefer to decrease their logarithmic potential and therefore are more likely to stay stagnating.
\begin{figure}[htb]
\center{\includegraphics[width=0.7\textwidth]{./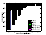}}
\caption{Measured expectations $\overline{\mu_U}$, $\overline{\mu_M}$ and $\overline{\mu_D}$ and their standard deviations for Schwefel's problem $f_{sch}$, $N=3$ particles, $D=10$ dimensions and variable number of stagnating dimensions $L$}
\label{fig:expectationsSchwefelsProblem}
\end{figure}

If some dimensions are temporarily stagnating then the functions appear like 
$$
(x-x^*)^t\cdot A\cdot(x-x^*)+const,
$$
where all rows and columns of stagnating dimensions are removed and $x^*$ is the current optimal point for the remaining dimensions.
For composite functions this optimal point stays at the origin.
For other functions this is not necessarily the case.
For the sphere function $f_{sph}$ and the diagonal function $f_{diag}$ the remaining submatrix does not depend on which dimensions are stagnating,
but only on the number of stagnating dimensions.
The submatrix of the high conditioned elliptic function $f_{hce}$ depends on which dimensions are stagnating,
but the remaining matrix is always a diagonal matrix and therefore the PSO always performs similar to the sphere function.
This implies that for this objective function it also only depends on how many dimensions are currently stagnating.

For the Schwefel's problem it heavily depends on which dimensions are stagnating.
E.\,g., two very different submatrices are the matrices where the first $L$ dimensions are stagnating and where the last $L$ dimensions are stagnating.
The effect on the potential is visualized in the Figures \ref{fig:expectations2Particles}, \ref{fig:expectations3Particles} and \ref{fig:expectationsSchwefelsProblem}.
If the first $L$ dimensions are removed, then the remaining matrix is equal to the respective matrix of the Schwefel's problem in $D-L$ dimensions.
If the last $L$ dimensions are removed, then the remaining matrix is equal to the respective matrix of the Schwefel's problem in $D-L$ dimensions plus the value $L$ for each entry for each entry of the remaining matrix.
This is similar to the matrix of the diagonal function, where a value of $10^6$ is added to each entry of the matrix of the sphere function.
The more dimensions at the end are removed, the more intense becomes a mostly diagonal eigenvector.
Therefore the PSO performs much worse on the Schwefel's problem if the last $L$ dimensions are stagnating than if the first $L$ dimensions are stagnating.
In Figures  \ref{fig:expectations2Particles} and \ref{fig:expectations3Particles} this can be observed,
because the change in logarithmic potential is much lower if the last $L$ dimensions are stagnating.

Figure \ref{fig:expectationsSchwefelsProblem} even shows that the final number of stagnating dimensions can differ if different dimensions are stagnating.
It shows the measured expectations of the change in potential of the largest potential value,
the lowest potential value of the initial non-stagnating dimensions and the average of the initial stagnating dimensions.
If the number of non-stagnating dimensions is small then $\overline{\mu_M}$ is similar to $\overline{\mu_U}$, which means that the $D-L$ initially non-stagnating dimensions stay non-stagnating.
Finally the value of $\overline{\mu_M}$ is similar to $\overline{\mu_D}$, which means that at least one of the initially non-stagnating dimensions becomes stagnating.
Actually this change should happen as soon as $\overline{\mu_U}$ is already larger than $\overline{\mu_D}$ for fewer non-stagnating dimensions,
because then there exist a dimension such that if this dimension becomes also stagnating the logarithmic potential of stagnating dimensions is more likely to decrease.
In Figure \ref{fig:expectationsSchwefelsProblem} $\overline{\mu_U}$ is larger than $\overline{\mu_D}$ for $d^*=1$ and $L=5$.
As expected $\overline{\mu_M}$ is similar to $\overline{\mu_D}$ for $d^*=1$ and $L<5$.
For $d^*=D-L+1$ this is not that clear.
$\overline{\mu_U}$ is slightly larger than $\overline{\mu_D}$ for $L=6$,
but as the resulting decrease of logarithmic potential is very small, it may happen through random fluctuation that all initially non-stagnating dimensions regain potential from time to time.
Therefore $\overline{\mu_M}$ only slightly differs from $\overline{\mu_U}$ for $L=5$.
If the number of iterations is heavily increased, then it will be more likely that one of the initial non-stagnating dimensions becomes stagnating.

Detailed results on the exact values of $\overline{\mu_U}$, $\overline{\mu_M}$, $\overline{\mu_D}$ and $\overline{\mu_L}$ of Experiment \ref{exp:expectationOfIncrements} are listed in Appendix \ref{appendix:dataOfExp:expectationOfIncrements}.
\subsection{Verification of the Assumption of Separation}
In this section evidence is provided for the separation of the increments $I_{t,d}$ into the two parts $B_{t,\Gamma_{B,T_0}}$ and $J_{t,d,\Gamma_{J,T_0}}$,
as proposed in Assumption \ref{ass:separation}.
The following experiment supplies information about the PSO during stagnation phases.
\begin{expe}
\label{exp:increments}
All functions are evaluated with $D=100$ dimensions and $200\,000$ iterations.
The number of particles $N$ and the objective functions vary.
Algorithm \ref{alg:specialInit} is used for position initialization and all velocities are initially set to zero.
$L$, the number of initially stagnating dimensions, and $d^*$, the index of the first stagnating dimension, are chosen,
such that the expectation of the increments of stagnating dimensions are just becoming negative.
$S$, the initial scale of the stagnating dimensions, is set to $500$. 
For no test run the logarithmic potential of any stagnating dimension reaches a value of $-100$ or more within the $200\,000$ iterations.
Each tested configuration is started $R=500$ times with different seeds.
Let $D_S$ be the set of stagnating dimensions, which is fixed for each specific configuration.
Let $T_{m}$ be $100\,000$ and let $T_{e}$ be $200\,000$, the number of half and full time steps of the evaluated process.
$\triangle t$ is set to one for this experiment.
\end{expe}
The number of dimensions is taken that large,
such that it can be assumed that $B_{t,\Gamma_{B,T_0}}$,
the base increment of all stagnating dimensions,
can be approximated by the average of all increments of stagnating dimensions.
The initialization with Algorithm \ref{alg:specialInit} and specific parameters $L$ and $d^*$ leads to a fixed set of stagnating dimensions.
As discussed earlier, the set of stagnating dimensions fixes the expectation of the increments,
but the set of stagnating dimensions also fixes the distributions of the increments,
because the remaining function is always the same and therefore the PSO performs similar.
Therefore it is assumed that $\Gamma_{B,T_0}$ and $\Gamma_{J,T_0}$ are constant distributions for a single scenario.
For reasons of readability, the constant distributions are removed from the index of the estimaters for $B$ and $J$.
The first half of the iterations is again used for sufficient mixing.
With this experiment large sums of increments are investigated.
Therefore $\triangle t$ is set to one for this experiment,
because larger values of $\triangle t$ are implicitly shown with larger numbers of accumulated increments.
Furthermore no scaling of the time axis is existent with this value of $\triangle t$.
The following estimaters for $B$ and $J$ in the $r$'th run and time step $t$ are used:
\begin{align*}
\overline{\langle B_{t}  \rangle_r}:=&\sum_{d\in D_S}\langle I_{t,d}\rangle_r/\vert D_S\vert\\
\overline{\langle J_{t,d}\rangle_r}:=&\langle I_{t,d}\rangle_r-\overline{\langle B_{t} \rangle_r}
\end{align*}
Additionally the following helping variables for sums of $\tau$ values are defined:
\begin{align*}
\langle I'_{\tau,d}\rangle_r:=&\sum_{t=T_m}^{T_m+\tau-1}\langle I_{t,d}\rangle_r\allowdisplaybreaks\\
\overline{\langle B'_{\tau}  \rangle_r}:=&\sum_{t=T_m}^{T_m+\tau-1}\overline{\langle B_{t}  \rangle_r}\allowdisplaybreaks\\
\overline{\langle J'_{\tau,d}\rangle_r}:=&\sum_{t=T_m}^{T_m+\tau-1}\overline{\langle J_{t,d}\rangle_r}
\end{align*}
The estimators for the expectation are calculated over the complete second half to receive a reliable value as in the previous experiment:
\begin{align*}
\overline{\mu_I}:=\overline{\mu_B}:&=\frac{1}{R\cdot(T_e-T_m)}\sum_{r=1}^R\sum_{t=T_m}^{T_e-1}\overline{\langle B_{t} \rangle_r}
\end{align*}
As proposed in Assumption \ref{ass:separation} the measured expectation of $J$ is zero:
\begin{align*}
\overline{\mu_J}:&=\frac{1}{R\cdot\vert D_S\vert\cdot(T_e-T_m)}\sum_{r=1}^R\sum_{d\in D_S}\sum_{t=T_m}^{T_e-1}\overline{\langle J_{t,d} \rangle_r}\allowdisplaybreaks\\
				 &=\frac{1}{R\cdot\vert D_S\vert\cdot(T_e-T_m)}\cdot
				 \sum_{r=1}^R\sum_{t=T_m}^{T_e-1}\sum_{d\in D_S}\left(\langle I_{t,d}\rangle_r-\sum_{\tilde d\in D_S}\langle I_{t,\tilde d}\rangle_r/\vert D_S\vert\right)=0
\end{align*}
A further estimater is introduced to measure the covariance ($\COV$) of the sums $\sum_{t=T_m}^{T_m+\tau-1}B_t$ and $\sum_{t=T_m}^{T_m+\tau-1}J_{t,d}$:
\begin{align*}
\overline{\COV_{B,J,\tau}}:&=\frac{1}{R\cdot\vert D_S\vert}\sum_{r=1}^R\sum_{d\in D_S}\left(\overline{\langle B'_{\tau}\rangle_r}-\tau\cdot\overline{\mu_B}\right)\cdot \overline{\langle J'_{\tau,d}\rangle_r}.
\end{align*}
If $B$ and $J$ are independent then their covariance needs to be zero.
Actually the estimater $\overline{\COV_{B,J,\tau}}$ for this covariance always evaluates to zero, because
\begin{eqnarray*}
\overline{\COV_{B,J,\tau}}
&:=&\frac{1}{R\cdot\vert D_S\vert}\sum_{r=1}^R\sum_{d\in D_S}\left[\left(\overline{\langle B'_{\tau}\rangle_r}-\tau\cdot\overline{\mu_B}\right)\cdot\overline{\langle J'_{\tau,d}\rangle_r}\right]\allowdisplaybreaks\\
&=&\frac{1}{R\cdot\vert D_S\vert}\sum_{r=1}^R\left[\left(\overline{\langle B'_{\tau}\rangle_r}-\tau\cdot\overline{\mu_B}\right)\cdot\right.
\sum_{d\in D_S}\left.\left(\langle I'_{\tau,d}\rangle_r-\overline{\langle B'_{\tau} \rangle_r}\right)\right]\allowdisplaybreaks\\
&=&\frac{1}{R\cdot\vert D_S\vert}\sum_{r=1}^R\left[\left(\overline{\langle B'_{\tau}\rangle_r}-\tau\cdot\overline{\mu_B}\right)\cdot\right.
\Big(\Big(\sum_{d\in D_S}\langle I'_{\tau,d}\rangle_r\Big)-\underbrace{\vert D_S\vert\cdot\overline{\langle B'_{\tau} \rangle_r}}_{=\sum_{d\in D_S}\langle I'_{\tau,d}\rangle_r}\Big)\Big]=0.
\end{eqnarray*}
Furthermore, this idea can be expanded.
We use
$$
\E\bigg[\frac{1}{\vert D_S\vert}\sum_{d\in D_S}I_{t,d}\biggm\vert \mathcal{A}'_t
\bigg]
\text{ with }
\mathcal{A}'_t:=\sigma\left(\mathcal{A}_t,\sigma\left(((s_t^n)^{d\not\in D_S} , (r_t^n)^{d\not\in D_S})_{n\in\lbrace 1,\ldots,N\rbrace}\right)\right)
$$
as approximation on the base increment $B_t$.
This conditional expectation represents the average increment of stagnating dimensions,
after the movement in non-stagnating dimensions is determined,
i.\,e., $(s_t^n)^{d\not\in D_S}$ and $(r_t^n)^{d\not\in D_S}$,
the random variables of the movement equations defined in Definition \ref{def:classicalPSO} for non-stagnating dimensions,
and all information about previous steps are known.
Even for the sum of $\tau$ increments
$$
\E\bigg[\frac{1}{\vert D_S\vert}\sum_{d\in D_S}\sum_{t=T_m}^{T_m+\tau-1}I_{t,d}\biggm\vert \mathcal{A}^\tau_{T_m}
\bigg]
\text{ with }
\mathcal{A}^\tau_{T_m}:=\sigma\Big(\mathcal{A}_{T_m},((s_t^n)^{d\not\in D_S} , (r_t^n)^{d\not\in D_S})_{\substack{n\in\lbrace 1,\ldots,N\rbrace,\\t\in\lbrace T_m,\ldots,T_m+\tau-1\rbrace}}\Big)
$$
is a possible approximation on $\sum_{t=T_m}^{T_m+\tau-1}B_t$ and
$$
\sum_{t=T_m}^{T_m+\tau-1}I_{t,d}
-\E\bigg[\frac{1}{\vert D_S\vert}\sum_{d\in D_S}\sum_{t=T_m}^{T_m+\tau-1}I_{t,d}\biggm\vert \mathcal{A}^\tau_{T_m}
\bigg]
$$
is a possible approximation on $\sum_{t=T_m}^{T_m+\tau-1}J_{t,d}$.
Other choices for $\mathcal{A}'_t$ and $\mathcal{A}^\tau_{T_m}$ are also possible, but the inclusions
$
\mathcal{A}_t\subseteq\mathcal{A}'_t\subseteq\mathcal{A}_{t+1}
$
and
$
\mathcal{A}_{T_m}\subseteq\mathcal{A}^\tau_{T_m}\subseteq\mathcal{A}_{T_m+\tau}
$
need to be satisfied to grant the specified measurability of Assumption \ref{ass:separation}.
With similar calculations as with the estimaters,
the actual expectation of $J_t$ and the covariance of random variables $B_t$ and $J_t$ is zero,
if the proposed approximations on random variables $B$ are accepted.
As presented in Example \ref{exam:derivativeEvolution}
the derivative in stagnating dimensions becomes quite constant
if the coordinate values of stagnating dimensions become constant.
For Figures \ref{figure:separationOfDimensions1}, \ref{figure:separationOfDimensions2} and \ref{figure:separationOfDimensions3} it is significant that even temporarily stagnating dimensions do not change their position values significantly while they are stagnating.
As the logarithm of the potential of all dimensions tends to minus infinity,
as presented in Figure~\ref{figure:separationOfDimensions3A},
the absolute values of the velocities in all dimensions need to tend to zero.
Therefore the potential in stagnating dimensions can be calculated by the multiplication of the absolute value of the velocity in that dimension and the approximately constant derivative in that dimension.
As the increments are calculated by the difference of logarithmic potential values,
only the portion of the velocity and the portion of the dimension with maximal potential remains.
Hence the increments of all stagnating dimensions are only determined by the velocities in that dimension.
The portion of the dimension with maximal potential is completely included in the base increment
and therefore the dimension dependent part $J$ of the increments is only determined by the change of the velocity in the specific dimension.
The update of global and local attractors appear simultaneously for each dimension.
Hence all dimension dependent parts $J$ have quite the same distribution
and are quite independent of each other,
because the velocity change is determined only by the updates of the attractors
and the two random variables of the movement equations,
which belong to that specific dimension.
That is the reason for averaging the expectation of random variables of type $J$ and the covariance of random variables $B$ and $J$ over all stagnating dimensions.
Therefore $\mu_J$, the actual expectation for a random variable $J$,
is
calculated with the previously described approximation by
the following equations:
\begin{align*}
\E&\Big[\sum_{t=T_m}^{T_m+\tau-1}J_{t,d}\Bigm\vert\mathcal{A}_{T_m}\Big]
\end{align*}
The variable $J_{t,d}$ is replaced by $I_{t,d} -B_t$ and then the approximation on $B_t$ is applied.
\begin{align*}
\approx&\frac{1}{\vert D_S\vert}\sum_{d\in D_S}\E\Big[\sum_{t=T_m}^{T_m+\tau-1}I_{t,d}
-\E\bigg[\frac{1}{\vert D_S\vert}\sum_{d'\in D_S}\sum_{t=T_m}^{T_m+\tau-1}I_{t,d'}\biggm\vert
\mathcal{A}^\tau_{T_m}\bigg]\biggm\vert \mathcal{A}_{T_m}\bigg]
\end{align*}
Additivity of the expected value is applied.
In the second part then only $\sigma$-algebra $\mathcal{A}_{T_m}$ remains,
because it is a sub-$\sigma$-algebra of $\mathcal{A}^\tau_{T_m}$,
i.\,e., $\mathcal{A}_{T_m}\subseteq\mathcal{A}^\tau_{T_m}$.
\begin{align*}
\overset{\mathcal{A}_{T_m}\subseteq\mathcal{A}^\tau_{T_m}}{=}&\frac{1}{\vert D_S\vert}\sum_{d\in D_S}\E\Big[\sum_{t=T_m}^{T_m+\tau-1}I_{t,d}\Bigm\vert \mathcal{A}_{T_m}\Big]
-\frac{\vert D_S\vert}{\vert D_S\vert}\E\bigg[\frac{1}{\vert D_S\vert}\sum_{d'\in D_S}\sum_{t=T_m}^{T_m+\tau-1}I_{t,d'}\biggm\vert\mathcal{A}_{T_m}\bigg]=0.
\end{align*}
Additionally $\COV_{B,J,\tau}$, the actual covariance of random variables $B$ and $J$, is calculated by
\begin{align*}
\COV_{B,J,\tau}
:=&\frac{1}{\vert D_S\vert}\sum_{d\in D_S}\E\Big[\Big(\sum_{t=T_m}^{T_m+\tau-1}B_t-\E\Big[\sum_{t=T_m}^{T_m+\tau-1}B_t\Bigm\vert \mathcal{A}_{T_m}\Big]\Big)\cdot
\Big(\sum_{t=T_m}^{T_m+\tau-1}J_{t,d}\Big)\Bigm\vert \mathcal{A}_{T_m}\Big]\approx
\end{align*}
All variables $J_{t,d}$ and $B_t$ are replaced by their approximation.
\begin{align*}
\approx&\frac{1}{\vert D_S\vert}\sum_{d\in D_S}
\E\bigg[\bigg(
	\E\bigg[\frac{1}{\vert D_S\vert}\sum_{d'\in D_S}\sum_{t=T_m}^{T_m+\tau-1}I_{t,d'}\biggm\vert \mathcal{A}^\tau_{T_m}\bigg]
	-\E\bigg[
		\E\bigg[\frac{1}{\vert D_S\vert}\sum_{d'\in D_S}\sum_{t=T_m}^{T_m+\tau-1}I_{t,d'}\biggm\vert \mathcal{A}^\tau_{T_m}\bigg]
		\biggm\vert \mathcal{A}_{T_m}\bigg]
	\bigg)
	\\&\hspace{2cm}
	\cdot\bigg(\sum_{t=T_m}^{T_m+\tau-1}I_{t,d}-\E\bigg[\frac{1}{\vert D_S\vert}\sum_{d'\in D_S}\sum_{t=T_m}^{T_m+\tau-1}I_{t,d'}\biggm\vert \mathcal{A}^\tau_{T_m}\bigg]\bigg)
	\biggm\vert \mathcal{A}_{T_m}\bigg]
\end{align*}
The two nested conditional expectations in the second line can be combined and only $\mathcal{A}_{T_m}$ remains,
because $\mathcal{A}^\tau_{T_m}$ is a sub-$\sigma$-algebra of $\mathcal{A}_{T_m}$.
Furthermore the outmost conditional expectation is replaced by two conditional expectations,
where the inner $\sigma$-algebra $\mathcal{A}^\tau_{T_m}$ is a sub-$\sigma$-algebra of $\mathcal{A}_{T_m}$.
\begin{align*}
=&\frac{1}{\vert D_S\vert}\sum_{d\in D_S}
\E\bigg[\E\bigg[\bigg(
	\E\bigg[\frac{1}{\vert D_S\vert}\sum_{d'\in D_S}\sum_{t=T_m}^{T_m+\tau-1}I_{t,d'}\biggm\vert \mathcal{A}^\tau_{T_m}\bigg]
	-\E\bigg[\frac{1}{\vert D_S\vert}\sum_{d'\in D_S}\sum_{t=T_m}^{T_m+\tau-1}I_{t,d'}\biggm\vert \mathcal{A}_{T_m}\bigg]
	\bigg)
	\\&\hspace{2.5cm}
	\cdot\bigg(\sum_{t=T_m}^{T_m+\tau-1}I_{t,d}-\E\bigg[\frac{1}{\vert D_S\vert}\sum_{d'\in D_S}\sum_{t=T_m}^{T_m+\tau-1}I_{t,d'}\biggm\vert \mathcal{A}^\tau_{T_m}\bigg]\bigg)
	\biggm\vert \mathcal{A}^\tau_{T_m}\bigg]\biggm\vert \mathcal{A}_{T_m}\bigg]
\end{align*}
The second outmost conditional expectation is applied.
The first factor is a constant under this $\sigma$-algebra.
Therefore nothing happens there.
In the second factor the conditional expectation only remains for the sum over $I_{t,d}$.
\begin{align*}
=&\frac{1}{\vert D_S\vert}\sum_{d\in D_S}
\E\bigg[\bigg(
	\E\bigg[\frac{1}{\vert D_S\vert}\sum_{d'\in D_S}\sum_{t=T_m}^{T_m+\tau-1}I_{t,d'}\biggm\vert \mathcal{A}^\tau_{T_m}\bigg]
	-\E\bigg[\frac{1}{\vert D_S\vert}\sum_{d'\in D_S}\sum_{t=T_m}^{T_m+\tau-1}I_{t,d'}\biggm\vert \mathcal{A}_{T_m}\bigg]
	\bigg)
	\\&\hspace{2.0cm}
	\cdot\bigg(\E\bigg[\sum_{t=T_m}^{T_m+\tau-1}I_{t,d}\biggm\vert \mathcal{A}^\tau_{T_m}\bigg]
	-\E\bigg[\frac{1}{\vert D_S\vert}\sum_{d'\in D_S}\sum_{t=T_m}^{T_m+\tau-1}I_{t,d'}\biggm\vert \mathcal{A}^\tau_{T_m}\bigg]\bigg)\biggm\vert \mathcal{A}_{T_m}\bigg]
\end{align*}
The sum $\frac{1}{\vert D_S\vert}\sum_{d\in D_S}$ in front of the complete expression is shifted into the conditional expectation.
The first factor does not depend on $d$.
Therefore it can be applied directly on the second factor.
On the one hand the sum is put inside the first conditional expectation of the second factor.
On the other hand the second conditional expectation does not depend on $d$ and therefore it is summed up $\vert D_S\vert$ times,
which means that this conditional expectation is effectively multiplied by one.
\begin{align*}
=&
\E\bigg[\bigg(
	\E\bigg[\frac{1}{\vert D_S\vert}\sum_{d'\in D_S}\sum_{t=T_m}^{T_m+\tau-1}I_{t,d'}\biggm\vert \mathcal{A}^\tau_{T_m}\bigg]
	-\E\bigg[\frac{1}{\vert D_S\vert}\sum_{d'\in D_S}\sum_{t=T_m}^{T_m+\tau-1}I_{t,d'}\biggm\vert \mathcal{A}_{T_m}\bigg]
	\bigg)
	\\&\hspace{0.5cm}
	\cdot\bigg(\underbrace{\E\bigg[\frac{1}{\vert D_S\vert}\sum_{d\in D_S}\sum_{t=T_m}^{T_m+\tau-1}I_{t,d}\biggm\vert \mathcal{A}^\tau_{T_m}\bigg]
	-\E\bigg[\frac{1}{\vert D_S\vert}\sum_{d'\in D_S}\sum_{t=T_m}^{T_m+\tau-1}I_{t,d'}\biggm\vert \mathcal{A}^\tau_{T_m}\bigg]}_{=0}\bigg)\biggm\vert \mathcal{A}_{T_m}\bigg]=0.
\end{align*}
Admittedly, this calculations can also be done for any other set of random variables,
which does not belong to a stagnation phase at all,
but then the described simplification cannot be justified.
As all stagnating dimensions act similar and can be permuted without any effect,
also the covariances of $J_{t,d}$ for different values of $d$ are zero.
Therefore we have that there definitely exist random variables $B$ and $J$,
which have no linear dependency.
Furthermore with the chosen approximation for variables base increment and dimension dependent increment
we can expect sufficient independence for specific times or time intervals.
\subsection{Experimental Verification of the Assumption of Independence}\label{subsec:assumptionOfIndependence}
The previous section presented reasons for the separation of the increments into independent base increments and dimension dependent increments.
This section experimentally shows that increments, which belong to different periods of time,
fulfill sufficient independence to accept the applicability of the theorems in Chapter \ref{sec:theory}.

First some additional estimaters for the variance and the sixth central moment related to Experiment \ref{exp:increments} are introduced.
\begin{align*}
\overline{\VAR_{I,\tau}}:&=\frac{1}{R\cdot\vert D_S\vert}\sum_{r=1}^R\sum_{d\in D_S}\left(\langle I'_{\tau,d}\rangle_r-\tau\cdot\overline{\mu_I}\right)^2\allowdisplaybreaks\\
\overline{\VAR_{B,\tau}}:&=\frac{1}{R}\sum_{r=1}^R\left(\overline{\langle B'_{\tau}\rangle_r}-\tau\cdot\overline{\mu_B}\right)^2\allowdisplaybreaks\\
\overline{\VAR_{J,\tau}}:&=\frac{1}{R\cdot\vert D_S\vert}\sum_{r=1}^R\sum_{d\in D_S}\overline{\langle J'_{\tau,d}\rangle_r}^2\allowdisplaybreaks\\
\overline{\Msix_{I,\tau}}:&=\frac{1}{R\cdot\vert D_S\vert}\sum_{r=1}^R\sum_{d\in D_S}\left(\langle I'_{\tau,d}\rangle_r-\tau\cdot\overline{\mu_I}\right)^6\allowdisplaybreaks\\
\overline{\Msix_{B,\tau}}:&=\frac{1}{R}\sum_{r=1}^R\left(\overline{\langle B'_{\tau}\rangle_r}-\tau\cdot\overline{\mu_B}\right)^6\allowdisplaybreaks\\
\overline{\Msix_{J,\tau}}:&=\frac{1}{R\cdot\vert D_S\vert}\sum_{r=1}^R\sum_{d\in D_S}\overline{\langle J'_{\tau,d}\rangle_r}^6
\end{align*}
Due to limited resources only the sphere function $f_{sph}$ with $N=2$ and $N=3$ particles,
the high conditioned elliptic function $f_{hce}$ with $N=2$ and $N=3$ particles
and the Schwefel's problem $f_{sch}$ with $N=3$ particles and two different sets of stagnating dimensions are investigated.
As presented in Figures \ref{fig:expectations2Particles} and \ref{fig:expectations3Particles},
the first negative value for the expectation of the increments is available with 
$L=D-3=97$ stagnating dimensions with two particles and sphere or high conditioned elliptic function, 
$L=D-7=93$ stagnating dimensions with three particles and sphere or high conditioned elliptic function and
$L=D-5=95$ stagnating dimensions and $d^*=1$ with three particles and Schwefel's problem.
If $d^*$ is not equal to one then the remaining function of the Schwefel's problem depends heavily on the number of dimensions.
For $D=10$ dimensions the suitable values are $d^*=5$ and $L=6$,
but for $D=100$ dimensions already $d^*=4$ and $L=97$ are the appropriate values.
The measured variances are visualized in Figure \ref{figure:estimaterVar}.
\begin{figure}[htbp]
\centering
\subfloat[\label{figure:estimaterVarI}Estimater for the variance of increments]
{\includegraphics[width=0.865\textwidth]{./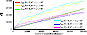}}

\subfloat[\label{figure:estimaterVarB}Estimater for the variance of base increments]
{\includegraphics[width=0.865\textwidth]{./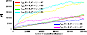}}

\subfloat[\label{figure:estimaterVarJ}Estimater for the variance of dimension dependent increments]
{\includegraphics[width=0.865\textwidth]{./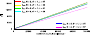}}
\caption{Estimaters for the variance of (a) increments $\overline{\VAR_{I,\tau}}$, (b) base increments $\overline{\VAR_{B,\tau}}$ and (c) dimension dependent increments $\overline{\VAR_{J,\tau}}$ resulting from Experiment \ref{exp:increments}; each line represents a single configuration}
\label{figure:estimaterVar}
\end{figure}
If the increments would be completely independent then the resulting graphs would be just lines.
As already discussed,
the distribution of the increments depends heavily on the number of non-stagnating dimensions.
In Figure \ref{figure:separationOfDimensions1C} at approximately steps $40\,000$ to $75\,000$ there are temporary four dimensions,
which do not improve their position,
but for the sphere function and three particles there generally are seven non-stagnating dimensions.
Therefore with $D=10$ dimensions only three stagnating dimensions normally appear,
because otherwise the expectation of the increments of stagnating dimensions is then positive.
The additional temporary stagnating dimension has a slightly lower logarithmic potential than the other non-stagnating dimensions,
which can be recognized in Figure \ref{figure:separationOfDimensions1B}.
Hence the logarithmic potential of the three obviously stagnating dimensions have a positive drift during this period of time with a temporary additional stagnating dimension.
The temporary stagnating dimension also receives that positive drift,
but as this is a random process,
the logarithmic potential of this dimension can remain small for some time.
At the end of periods of time with temporary additional stagnating dimensions the PSO becomes running in the temporary stagnating dimension and the other stagnating dimensions quickly lose logarithmic potential.
Such periods appear quite frequently during PSO runs and their lengths vary.
The fewer the difference between the expectation of the increments with current number of stagnating dimensions and with one dimension fewer is,
the longer those periods can become.
The heavier increase of the measured variances in Figure \ref{figure:estimaterVar} results from those periods,
because during or at the end of those periods the behavior differs.
Over larger periods this effect become less important.
Also Figure \ref{figure:estimaterVar} supports that impression.
After the heavy increase at the beginning,
the variances grow quite linearly.
For the sphere and high conditioned elliptic function with two particles,
there is not even a recognizable increase at the beginning,
because the difference of the expectation of the increments differ strongly for $L=D-3$ and $L=D-2$ dimensions (compare with Figure \ref{fig:expectations2Particles})
and therefore the lengths of periods with temporary additional stagnating dimensions are very small.
Altogether increments may be independent enough in time when large time scales are used.

Taking a deeper look into the proof of Lemma \ref{lem:clt} results in the awareness that the independence in $t$ is used three times.
It is used to guarantee that,
\begin{itemize}
\item $\E[(\sum_{\tilde t=0}^{t-1}(I_{\tilde t}-\mu^*))^6]\le C \cdot t^3$, where $C$ is some constant and $I$ is either the base increment plus drift or the dimension dependent increment plus drift,
\item there is a positive probability for finite times that the sum $\sum_{\tilde t=0}^{t-1}I_{\tilde t}$ stays negative and
\item the probability to receive a negative sum $\sum_{\tilde t=0}^{t-1}I_{\tilde t}$ is larger or equal to the case when we already know that the sum was negative for smaller values of $t$.
\end{itemize}
First the sixth central moment is investigated.
For this purpose the previously defined estimaters for the sixth central moment are visualized in Figure \ref{figure:estimaterM6}.
The estimaters are divided by $\tau^3$, because the relation to $t^3$ or $\tau^3$ respectively should be analyzed.
\begin{figure}[htbp]
\centering
\subfloat[\label{figure:estimaterM6I}Estimater for the sixth central moment of increments]
{\includegraphics[width=0.865\textwidth]{./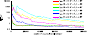}}

\subfloat[\label{figure:estimaterM6B}Estimater for the sixth central moment of base increments]
{\includegraphics[width=0.865\textwidth]{./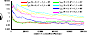}}

\subfloat[\label{figure:estimaterM6J}Estimater for the sixth central moment of dimension dependent increments]
{\includegraphics[width=0.865\textwidth]{./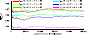}}
\caption{Estimaters for the sixth central moment of (a) increments $\overline{M^6_{I,\tau}}$, (b) base increments $\overline{M^6_{B,\tau}}$ and (c) dimension dependent increments $\overline{M^6_{J,\tau}}$ resulting from Experiment \ref{exp:increments} divided by $\tau^3$; each line represents a single configuration}
\label{figure:estimaterM6}
\end{figure}
With complete independence Figures \ref{figure:estimaterM6I} and \ref{figure:estimaterM6B} should appear similar to Figure \ref{figure:estimaterM6J},
which illustrates that for all scenarios the sixth central moment of the dimension dependent increment divided by $\tau^3$ oscillates around some constant value, as it is intended to.
But similar to the variance also the sixth central moment faces a heavy increase in the beginning,
which results from periods with temporary additional stagnating dimensions.
Nevertheless, also in Figures \ref{figure:estimaterM6I} and \ref{figure:estimaterM6B}
it seems that the central sixth moment can be bounded with some polynomial of degree three,
because the estimaters divided by $\tau^3$ are more or less decreasing or constant, but not increasing.
Note that all theorems in this paper could be adjusted if only constants $C>0$ and $\varepsilon>0$ exist,
such that $\E[(\sum_{\tilde t=0}^{t-1}(I_{\tilde t}-\mu^*))^6]\le C \cdot t^{4-\varepsilon}$ can be fulfilled.

The other two properties look quite reasonable.

For finite times the example runs show that there is a possibility that the increments and also the base increments and the dimension dependent increments sum up to negative values.
Animated by the experimental runs, a precise PSO run can be constructed such that the increments, the base increments and the dimension dependent increments sum up to negative values.
This precise run can be extended such that for each position some noise is possible,
i.\,e., it is permitted that each reached position is at most $\varepsilon$ apart from the intended position.
$\varepsilon$ can be chosen that small so that decisions whether a new position is a local or global optimum do not change.
The probability to receive one of those runs is then positive for any finite time.

Also the property,
that the probability to receive a negative sum is greater or equal,
if we already know that the sum was negative before, is reasonable for the PSO.
That is because mostly the previous sums are even much smaller than zero
and therefore the probability to stay negative,
when the next increment is added, is almost one,
while it is not that likely to be negative if the previous sum is already positive.

Therefore the assumption of independence was introduced, to generate provable theorems.
\subsection{Brownian Motion as Approximation for Accumulated Sums}
\label{sec:brownianMotion}
After Theorem \ref{sat:theorem1} it is proposed that Brownian Motions can be used as approximations on partial sums of increments.
In this section measured probabilities are compared with probabilities received from the approximation with Brownian Motion.

The probability that a stagnation phase does not end
is determined by the probability that sums of increments of initial stagnating dimension stay below some bound.
Actual maximal values cannot be measured,
because we cannot run experiments for an infinite number of iterations.
Therefore Experiment \ref{exp:increments} is used for this purpose.
For each initial stagnating dimension the maximal sum of the increments is calculated by
$$
I_{\rm max,d}:=\max_{T_m\le t \le T_e}\sum_{t'=T_m}^{t-1}I_{t',d}
$$
and $\langle I_{\rm max,d}\rangle_r$ is written for the evaluation on the $r$'th run.
For sure $100\,000$ iterations are far from infinite iterations,
but on the one hand an infinite number of iterations cannot be simulated
and on the other hand, as we have negative expectation of the increments,
the largest values of the sums most likely appear in the beginning of the process.
For the approximation with Brownian Motions the expectation and the variance of the increments are needed.
For this purpose the already defined estimaters $\overline{\mu_L}$ and $\overline{\VAR_{I,\tau}}$ are used.
As the variance does not grow linearly from the beginning, the following bounds for the variance are used:
$$
\overline{\VAR_{\rm max}}:=\max_{1\le\tau\le 1\,000}{\overline{\VAR_{I,100\cdot\tau}}}/{(100\cdot\tau)}
$$
$$
\overline{\VAR_{\rm min}}:=\left(\overline{\VAR_{I,100\,000}}-\overline{\VAR_{I,10\,000}}\right)/{90\,000}
$$
The upper bound $\overline{\VAR_{\rm max}}$ is the maximal measured variance per iteration.
The variances are calculated only each $100$'th step.
The lower bound $\overline{\VAR_{\rm min}}$ represents the approximately linear increase,
neglecting the heavy increase in the beginning.
After the proof of Theorem \ref{sat:theorem1}, it is proposed that
$1-\exp(2(c_s-c_0)\mu/\sigma^2)$
is an approximation on the probability that the maximal sum of a single dimension stays below $c_0-c_S$.
Accordingly, $1-\exp(-2\cdot x\cdot\mu/\sigma^2)$ can be assumed to be an approximation on the cumulative distribution function for $I_{\rm max,d}$.
In Figure \ref{figure:cumulativeDistributionFunctions} the cumulative distribution functions,
generated by the measured values $\overline{\mu_L}$, $\overline{\VAR_{\rm max}}$ and $\overline{\VAR_{\rm min}}$,
i.\,e., 
$$
F_{\VAR_{\rm max}}(x):=1-\exp(2\cdot x\cdot\overline{\mu_L}/\overline{\VAR_{\rm max}})
$$
and 
$$F_{\VAR_{\rm min}}(x):=1-\exp(2\cdot x\cdot\overline{\mu_L}/\overline{\VAR_{\rm min}}),
$$
and the empirical cumulative distribution function, i.\,e.,
$$
F_{\rm emp}(x):=\frac{1}{R\cdot\vert D_S\vert}\sum_{r=1}^R\sum_{d\in D_S}\1_{\langle I_{\rm max,d}\rangle_r\le x},
$$
are visualized for two different scenarios.
\begin{figure}[htbp]
\centering
\subfloat[\label{figure:cumulativeDistributionFunctionsA}Cumulative distribution function for the sphere function with two particles]
{\includegraphics[width=0.47\textwidth]{./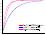}}
\hspace{0.04\textwidth}
\subfloat[\label{figure:cumulativeDistributionFunctionsB}Cumulative distribution function for the sphere function with three particles]
{\includegraphics[width=0.47\textwidth]{./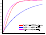}}
\caption{Cumulative distribution functions,
generated by the measured values $\overline{\mu_L}$, $\overline{\VAR_{\rm max}}$ and $\overline{\VAR_{\rm min}}$,
and the empirical cumulative distribution function for (a) the sphere function with two particles and (b) the sphere function with three particles.}
\label{figure:cumulativeDistributionFunctions}
\end{figure}
The two scenarios are visualized exemplarily,
because for other scenarios, the respective graphs look very similar.
The empirical cumulative distribution functions mainly proceed between the two cumulative distribution functions generated by the measured expectations and variances.
There is some positive probability, that $I_{\rm max,d}$ is $0$.
For the visualized scenarios this probability is quite small,
as it is hardly recognizable that the empirical cumulative distribution function starts with some positive value.
For small positive values it may appear that the empirical cumulative distribution functions are evaluated to lower values than the lower cumulative distribution function $F_{\VAR_{\rm max}}$.
Furthermore, the more the value of the cumulative distribution functions tend to $1$,
the more likely the empirical cumulative distribution functions are evaluated to larger values than the larger cumulative function $F_{\VAR_{\rm min}}$.
Both events occur in Figure \ref{figure:cumulativeDistributionFunctionsB}.
Nevertheless, Theorem \ref{sat:theorem1} only supplies positive values for the probability that $I_{\rm max,d}$ stays small enough so that stagnation phases do not end,
but this approximations even supply probabilities, which tend to $1$ if the value for the maximal sum is enlarged.
The allowed value for the maximal sum in the model used in this paper is $c_s-c_0$,
because sums after the beginning of a stagnation phase need to stay below this value.
The larger this difference is chosen,
the larger is the probability that a stagnation phase never ends,
at least for stagnation phases with one stagnating dimension.
As more stagnating dimensions are connected with the base increment,
it is more likely that the value for the maximal sum stays below some bound,
if it is already known that for other dimensions this maximal sum already stays below the same bound.
Therefore it is assumed that $(1-\exp(2(c_s-c_0)\mu/\sigma^2))^{N_0}$
is a lower bound for the probability that the maximal sum of any stagnating dimension stays below $c_s-c_0$,
which implies that a stagnation phase does not end.
The more stagnation phases occur during a single run,
the more likely some of the stagnating dimensions, which do not cause the end of stagnation phases,
have very small logarithmic potential.
Those dimensions mainly can be neglected for calculations on the probability whether a stagnation phase ends or not.
Therefore the more stagnation phases occur, the more likely those phases remain through infinity.
The probability increases to values of the case with a single stagnating dimension if $N_0-1$ dimensions already are that insignificant.
\subsection{Duration of Phases of Type $\mathit{PH}_X$}
In this section it is discussed,
why the requirement of Theorem \ref{sat:theorem2},
that phases of type $\mathit{PH}_X$ have finite expectation,
is acceptable.
Afterwards distributions of durations of phases of type $\mathit{PH}_X$ are visualized.

In general the event that no stagnation phase is active means that at least the largest $D-N_0$ dimensions have quite similar potential,
i.\,e., the absolute difference of the logarithm of their potentials is at most $\vert c_0\vert=-c_0$.
Through the movement equations
it is almost always possible,
that the absolute value of the velocity in most dimensions can be decreased significantly,
which leads to a significantly decreasing potential.
This is even possible for $D-1$ dimensions.
Therefore it is reasonable,
that there exist some positive probability $p$ and a finite number of iterations $T$,
such that the probability that the logarithmic potentials of at least $N_0$ dimensions decrease to values less than $c_0$ within $T$ iterations is at least $p$.
Then the waiting time for the event that the logarithmic potential of at least $N_0$ dimensions decrease to values less than $c_0$
is bounded above by a geometrical distribution,
which has the finite expectation $T/p$.
Therefore the expectation of the waiting time for the next stagnation phase is also bounded by $T/p$,
but this stagnation phase do not necessarily achieve the limits $\mu$, $M$ or $p_0$ specified in Definition \ref{def:stagnationPhaseClass}.
The choices of the values $N_0$, $\mu$, $M$, and $p_0$ are heavily dependent on the scenario.

For instance the sphere function $f_{sph}$ or the high conditioned elliptic function $f_{hce}$ with two particles can be analyzed.
As already discussed,
it only matters how many dimensions are stagnating.
Therefore all stagnation phases are good stagnation phases with equal parameters $\mu$, $M$, and $p_0$ if $N_0=D-3$ dimensions are stagnating.
If more dimensions than $N_0$ are stagnating,
then the expectation of the increments is positive,
which finally lead to fewer number of stagnating dimensions.
This leads to the appraisal that $T/p$ is also a bound for the expected waiting time to the next good stagnation phase for this scenario.

In contrast, the Schwefel's problem $f_{sch}$ is not that easy to handle,
because the behavior of the increments depends on the set of stagnating dimensions.
To guarantee suitable values for $N_0$, $\mu$, $M$ and $p_0$, most subsets of the set of dimensions need to be analyzed,
whether this set may appear as stagnating dimensions or not.
As there are $2^D$ many subsets,
this cannot be analyzed with acceptable time required for computing if $D$ becomes large.
Nevertheless $N_0$ can be chosen a little bit lower so that additional, not only temporary, stagnating dimensions are allowed.
Without knowing the exact values, $\mu$ and $M$ can be chosen as the maximal value over all good stagnation phases
and $p_0$ can be chosen as the minimal value over all good stagnation phases.
Also the experiments show that stagnation appears for Schwefel's problem as well.

For all analyzed scenarios the parameters $N_0$, $c_0$, $c_s$, $\mu$, $M$ and $p_0$ can be chosen,
such that all observed stagnation phases are good stagnation phases,
because the smaller $c_0$ is,
i.\,e., the larger the absolute value of $c_0$ is,
the more unlikely are stagnation phases with positive expectation of increments.
In particular,
this implies $\alpha_i=\tilde\alpha_i$ and $\beta_i=\tilde\beta_i$ for every $i$,
which is assumed from now on.

The following experiment supplies data for actual stagnation phases, because a usual initialization is used.
\begin{expe}
\label{exp:longTimeRun}
\label{EXP:LONGTIMERUN}
All functions are evaluated with $500\,000$ iterations.
The number of particles $N$, the number of dimensions $D$ and the objective functions vary.
Usual initialization is used,
i.\,e., Algorithm \ref{alg:specialInit} can be used for position initialization, but $L$, the number of initially stagnating dimensions, is set to zero,
and all velocities are initially set to zero.
Each tested configuration is started $R=500$ times with different seeds.
Let $T_{e}$ be $500\,000$, the number of time steps of the evaluated process.
$\triangle t$ is set to $100$.
\end{expe}
The following abbreviations are defined:
$$
D_{X,i}:=\lbrace r\in\lbrace 1,\ldots,R\rbrace:i=0\lor\langle\beta_{i-1}\rangle_r\le T_e\rbrace
$$
represents the set of runs such that the $i$'th phase of type $\mathit{PH}_X$ has started within the tested number of iterations.
$$
D_{Y,i}:=\lbrace r\in\lbrace 1,\ldots,R\rbrace:\langle\alpha_{i}\rangle_r\le T_e\rbrace
$$
represents the set of runs such that the $i$'th phase of type $\mathit{PH}_Y$ or $\mathit{PH}_F$ has started within the tested number of iterations.
For all tested scenarios $D_{X,i}$ equals $D_{Y,i}$,
which means that no test run of any tested scenario ended in a phase of type $\mathit{PH}_X$.
$\vert D_{X,0}\vert=R=500$,
because each test run starts with a phase of type $\mathit{PH}_X$.
Furthermore,
$$
F_{X,i}(x):=\sum_{r\in D_{X,i}}\frac{\1_{\langle X_i\rangle_r\le x}}{\vert D_{X,i}\vert}
$$
represents the empirical distribution of the phase lengths $X_i$.
\begin{figure}[htb]
\centering
{\includegraphics[width=0.6\textwidth]{./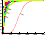}}
\caption{Empirical distribution of phase lengths $X_i$ with sphere function, $N=3$ particles, $D=100$ dimensions, $N_0=93$, $c_0=-40$ and $c_s=-20$}
\label{fig:empiricalPhaseLengths}
\end{figure}
Figure \ref{fig:empiricalPhaseLengths} shows this empirical distribution,
received from Experiment \ref{exp:longTimeRun},
for the durations of a phase of type $\mathit{PH}_X$
with the sphere function $f_{sph}$, $N=3$ particles, $D=100$ dimensions, $N_0=D-7=93$, $c_0=-40$ and $c_s=-20$.
This configuration is chosen exemplarily.
For all configurations the durations of a phase of type $\mathit{PH}_X$ are largest for the initial phase of type $\mathit{PH}_X$, which starts at iteration $0$.
At the beginning, all dimensions have quite the same potential.
Therefore the logarithmic potential of all dimensions are not that far from zero.
This implies that $N_0$ dimensions need to decrease their potential to $c_0=-40$ or lower.
After the end of a stagnation phase it is only known,
that from the set of initially stagnating dimensions
less than $N_0$ dimensions remain,
which have logarithmic potential less than $c_s=-20$ since the start of the stagnation phase.
Nevertheless there can be either other dimensions, which now have already low logarithmic potential,
i.\,e., less than $c_0$ or $c_s$.
Also there might be dimensions, which did not cause the end of the stagnation phase and still have low logarithmic potential.
Additionally, if the logarithmic potential of a dimension is larger than $c_s$, 
then that logarithmic potential is not almost zero.
Instead it most likely is just a little bit larger than $c_s$.
Altogether, after stagnation phases
there are still dimensions with much lower logarithmic potential
than they most likely have immediately after the initialization of the process.
Therefore the data is conform with this theoretical thoughts.
For this large number of stagnating dimensions $N_0=97$ it is also reasonable,
that the graphs of the empirical distributions move even further to the left,
because it becomes more and more likely that some of the stagnating dimensions are much smaller than $c_0$
and therefore do not delay the start of the next stagnation phase.
Further data on the existence of phases of type $\mathit{PH}_X$ and some statistical values can be found in Appendix \ref{appendix:dataOfExp:longTimeRun}.
\subsection{Swarm Converges}
In this section it is discussed why the swarm converges almost surely for the tested scenarios.

On the one hand the chosen parameters $\chi=0.72984$ and $c_1=c_2=1.496172$, as proposed in \cite{CK:02},
are known to be good, because rough theoretical analysis shows,
that convergence can be expected.

On the other hand the experiments supply also information, which lead to the assumption of convergence.
As assumed, the logarithmic potential of stagnating dimensions tends to minus infinity with linear drift.
Also the logarithm of the potential of all dimensions tend to minus infinity with linear drift for all recently mentioned scenarios.
The derivative for a stagnating dimension $d_1$ finally stays quite constant.
Therefore the potential in dimension $d_1$ is approximately the absolute value of the velocity in that dimension multiplied by that constant:
$$
\Phi(t,d_1)\approx \max_{n\in\lbrace 1,\ldots,N\rbrace}\vert V_t^{n,d_1}\vert\cdot const.
$$
The logarithm of the potential is then the logarithm of the absolute value of the velocity plus the logarithm of a constant value:
$$
\log(\Phi(t,d_1))\approx\log\big( \max_{n\in\lbrace 1,\ldots,N\rbrace}\vert V_t^{n,d_1}\vert\big)+\log(const)
$$
This implies that also the logarithm of the absolute values of the velocities of stagnating dimensions tends to minus infinity with linear drift.
For a non-stagnating dimension $d_2$ the first derivative is approximately zero,
but the second derivatives of all used functions are positive definite matrices.
Therefore it can be assumed that the potential of non-stagnating dimensions can be approximated by
the square of the velocity multiplied by a constant:
$$
\Phi(t,d_2)\approx \max_{n\in\lbrace 1,\ldots,N\rbrace}(V_t^{n,d_2})^2\cdot const.
$$
The logarithm of the potential is then two times the logarithm of the absolute value of the velocity plus the logarithm of a constant value:
$$
\log(\Phi(t,d_2))\approx 2\cdot\log\big( \max_{n\in\lbrace 1,\ldots,N\rbrace}\vert V_t^{n,d_2}\vert\big)+\log(const)
$$
This implies that also the logarithm of the absolute value of the velocities of non-stagnating dimensions tend to minus infinity with linear drift.

Therefore the absolute values of the velocities $V_t^{d,n}$ can be approximately bounded by $v_0\cdot c^t$ with positive constants $v_0\in\R$ and $c\in]0,1[$.
The maximal distance,
which can be covered by a particle beginning at some time $T$,
is then bounded by
$$
\sum_{t=T}^\infty v_0\cdot c^t=v_0\cdot c^T\sum_{t=0}^\infty c^t=\frac{v_0\cdot c^T}{1-c}.
$$
This bound is a finite value which tends to zero if $T$ tends to infinity.
Therefore the positions of the particles will tend to constant positions.
This position is indeed the limit of the global attractor,
because otherwise the velocities of a particle will stay approximately as large as the difference from current position to global attractor
and will not tend to zero.

Another point of view is that all investigated functions are strictly convex,
i.\,e.,
$$f(a\cdot x+(1-a)y)<a\cdot f(x)+(1-a)f(y)$$ for all $a\in]0,1[$ and $x,y\in\R^D,\,x\not=y$.
Usually the positions are initialized in a closed set.
Therefore the maximal function value for the initial global attractor $f(G_1^1)$ is bounded.
As all investigated functions can be written as $x^t\cdot A\cdot x$, with positive definite matrix $A$,
the set of points with lower function values than the worst possible initial global attractor is bounded too.
With the theorem of Bolzano-Weierstra{\ss} we receive that $(G_t^1)_{t\in\N_0}$ contains convergent subsequences.
If there is more than one accumulation point,
then the function value of all accumulation points needs to be equal,
because the functions are continuous
and therefore points with a larger function value than the lowest function value of any accumulation point cannot remain as accumulation points.
But if there are different accumulation points,
then the particles almost surely visit the area between the accumulation points.
As the investigated functions are strictly convex,
the points between two points with equal function value have strictly lower function values than the function values of the accumulation points.
After one of those points in between is visited,
the global attractor will change to that point and will never return to the surroundings of the intended accumulation points,
because the function values in this area are larger than the encountered value of the current point.
Therefore the sequence of the global attractor can have at most one accumulation point almost surely.
The theorem of Bolzano-Weierstra{\ss} guarantees,
that there is at least one accumulation point.
This implies that there is exactly one accumulation point $G_{lim}$ almost surely.
If the global attractor does not converge to that accumulation point,
then there exists an $\varepsilon>0$ such that the set $I:=\lbrace t\in \N_0: \Vert G_t^1-G_{lim}\Vert\rbrace$ does not have a finite number of elements.
With the theorem of Bolzano-Weierstra{\ss} also a subsequence of the sequence only containing indexes available in $I$ has at least one accumulation point.
This accumulation point cannot be equal to $G_{lim}$,
but this is a contradiction to the statement that there is only a single accumulation point.
Therefore the complete sequence of the global attractors converges almost surely to the limit point $G_{lim}$.
Indeed that limit point is not deterministic.
Also the local attractors need to converge to the same limit point almost surely,
because otherwise the same contradictions appear as with multiple accumulation points of the global attractor.
With usual parameter choices then also the positions will converge to the limit point of the global attractor and the velocities will tend to zero.

Nevertheless the property,
that a swarm will converge almost surely,
is dependent on the objective function, the number of particles, the number of dimensions and the parameters of the PSO
and needs to be checked if Theorem \ref{sat:theorem2} should be applied.
\subsection{Empirical Distributions of Stopping Times $\alpha_i$ and $\beta_i$ and of Logarithmic Potential}
This section provides additional data concerning Experiment \ref{exp:longTimeRun}.
First, empirical distributions of the stopping times $\alpha_i$ and $\beta_i$ are provided for two different scenarios.
Afterwards, empirical distributions of the logarithmic potential at the end of the test runs are visualized.

The empirical distributions of the stopping times $\alpha_i$ and $\beta_i$ are defined by the following expressions:
$$
F_{\alpha,i}(x):=\frac{1}{R}\sum_{r=1}^R\1_{\langle\alpha_i\rangle_r\le x}
\text{, }
F_{\beta,i}(x):=\frac{1}{R}\sum_{r=1}^R\1_{\langle\beta_i\rangle_r\le x}
$$
Figure \ref{figure:empiricalDistributionAlphaBeta1} shows the empirical distributions for the configuration with sphere function, $N=3$ particles, $D=100$ dimensions, $N_0=93$, $c_0=-40$ and $c_s=-20$ and
Figure \ref{figure:empiricalDistributionAlphaBeta2} shows the empirical distributions for the configuration with sphere function, $N=3$ particles, $D=8$ dimensions, $N_0=1$, $c_0=-40$ and $c_s=-20$.
\begin{figure}[htbp]
\centering
\subfloat[\label{figure:empiricalDistributionAlphaBeta1A}$F_{\alpha,i},\,F_{\beta,i}$ up to iteration $100\,000$]
{\includegraphics[width=0.47\textwidth]{./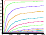}}
\hspace{0.04\textwidth}
\subfloat[\label{figure:empiricalDistributionAlphaBeta1B}$F_{\alpha,i},\,F_{\beta,i}$ up to iteration $500\,000$]
{\includegraphics[width=0.47\textwidth]{./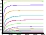}}
\caption{Empirical distribution of the stopping times $\alpha_i$ and $\beta_i$ ($F_{\alpha,i},\,F_{\beta,i}$) with sphere function, $N=3$ particles, $D=8$ dimensions, $N_0=1$, $c_0=-40$ and $c_s=-20$}
\label{figure:empiricalDistributionAlphaBeta1}
\end{figure}
\begin{figure}[htbp]
\centering
\subfloat[\label{figure:empiricalDistributionAlphaBeta2A}$F_{\alpha,i},\,F_{\beta,i}$ up to iteration $100\,000$]
{\includegraphics[width=0.47\textwidth]{./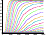}}
\hspace{0.04\textwidth}
\subfloat[\label{figure:empiricalDistributionAlphaBeta2B}$F_{\alpha,i},\,F_{\beta,i}$ up to iteration $500\,000$]
{\includegraphics[width=0.47\textwidth]{./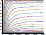}}
\caption{Empirical distribution of the stopping times $\alpha_i$ and $\beta_i$ ($F_{\alpha,i},\,F_{\beta,i}$) with sphere function, $N=3$ particles, $D=100$ dimensions, $N_0=93$, $c_0=-40$ and $c_s=-20$}
\label{figure:empiricalDistributionAlphaBeta2}
\end{figure}
The duration of the first phase of type $\mathit{PH}_X$ takes longer with more dimensions.
Also it is more unlikely that stagnation phases remain.
Especially in case of $D=100$ dimensions in the beginning most stagnation phases are quite short,
because in $93$ dimensions the logarithmic potential needs to stay small.
For $D=8$ only the logarithmic potential of a single dimension needs to stay small
and therefore the probability that a stagnation phase remains is much higher.
For $D=100$ it becomes also more and more likely that stagnation phases remain,
because there are more and more dimensions, which have a logarithmic potential which remains far below $c_0$.
This can also be observed in the Figures \ref{figure:empiricalDistributionAlphaBeta1} and \ref{figure:empiricalDistributionAlphaBeta2}.
The leftmost line represents the empirical distribution of $\alpha_0$, which equals the length of the first phase of type $\mathit{PH}_X$.
For $D=100$ the first empirical distributions of the stopping times $\alpha_i$ and $\beta_i$ reach the value $1$ very early.
For $D=8$ the empirical distribution of $\beta_0$,
which is represented by the second left line,
indicates the end of the first stagnation phase.
In contrast to the previous scenario,
this empirical distribution does not reach the value $1$.
That means that the first stagnation phase does not end within the tested iterations.
A further observation is that phases of type $\mathit{PH}_X$ have quite short duration compared with phases of type $\mathit{PH}_Y$.
Already in Figures \ref{figure:empiricalDistributionAlphaBeta1A} and \ref{figure:empiricalDistributionAlphaBeta2A} the empirical distributions of $F_{\beta,i}$ and $F_{\alpha,{i+1}}$ have only a small distance.
In Figures \ref{figure:empiricalDistributionAlphaBeta1B} and \ref{figure:empiricalDistributionAlphaBeta2B} they are merging to a single line already.

Another indication for the existence of unlimited stagnation phases is the empirical distribution of the final value of the logarithmic potential.
$$
F_{\Psi,d}(x):=\frac{1}{R}\sum_{r=1}^R\1_{(\sum_{d'=1}^D\1_{\langle\Psi(T_e,d')\rangle_r\le x})\ge d}
$$
To receive more meaningful figures,
not the empirical distributions for single dimensions are visualized,
but the empirical distributions for the $d$'th largest dimension is visualized.
$\sum_{d'=1}^D\1_{\langle\Psi(T_e,d')\rangle_r\le x}$ counts how many dimensions of run $r$ have a final logarithmic potential of less or equal than $x$.
A single run has then only an effect if at least $d$ dimensions with final logarithmic potential of less or equal than $x$ appear.
Figures \ref{Figure:empiricalDistributionPotential1} and \ref{Figure:empiricalDistributionPotential2} show the respective empirical distributions for two scenarios.
\begin{figure}[htb]
\centering
{\includegraphics[width=0.6\textwidth]{./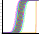}}
\caption{Empirical distribution of the final value of the logarithmic potential $F_{\Psi,d}$ with sphere function, $N=2$ particles and $D=100$ dimensions}
\label{Figure:empiricalDistributionPotential1}
\end{figure}

As already proposed,
the scenario with the sphere function and two particles,
which is displayed in Figure \ref{Figure:empiricalDistributionPotential1},
experiences always $D-3$ stagnating dimensions.
The three dimensions with largest logarithmic values at the end significantly differ from the other dimensions.
Their logarithmic potential is not that far from zero.
In contrast the logarithmic potential of the remaining $97$ dimensions is always smaller than $-500$ and in most cases much smaller than this value.
This small value means that the potential of the fourth largest dimension is $2^{-500}\approx 10^{-151}$ times smaller than the potential of the largest dimension.
The minimal logarithmic potential value of the third largest dimension is approximately $-90$ and the minimal logarithmic potential value of the second largest dimension is approximately $-30$.
Admittedly, these values are also very small compared to the largest dimension,
but, as already discussed, dimensions can become temporarily stagnating.
Therefore small logarithmic potential values can appear even for the dimension with second largest potential.
Nevertheless, the difference from the third to the fourth largest logarithmic potential value is huge.
In contrast the empirical distributions of the dimensions with lower logarithmic potential are quite similar.
They only move a little bit to the left.

\begin{figure}[htb]
\centering
{\includegraphics[width=0.6\textwidth]{./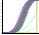}}
\caption{Empirical distribution of the final value of the logarithmic potential $F_{\Psi,d}$ with Schwefel's problem, $N=3$ particles and $D=100$ dimensions}
\label{Figure:empiricalDistributionPotential2}
\end{figure}
For the Schwefel's problem the number of actual stagnating dimensions depends on which dimensions are stagnating and which are not.
There are sets with three non-stagnating dimensions such that the expectation of the increments becomes negative,
but there are also sets with five non-stagnating dimension such that,
if any of the dimensions is removed from the set of non-stagnating dimensions,
then the expectation of the increments becomes positive.
Figure \ref{Figure:empiricalDistributionPotential2} represents the empirical distributions of the Schwefel's problem with three particles and $100$ dimensions.
There it can be observed that the empirical distribution of the fourth and fifth largest logarithmic potential value
do neither always belong to the set of stagnating dimensions nor they always belong to the set of non-stagnating dimensions.
Instead there is some positive probability that the dimension,
which refers to the fourth or the fifth largest logarithmic potential value,
belongs to the set of non-stagnating dimension.
This is illustrated by the heavy increase of the empirical distribution in the near surroundings of the potential value zero.
For the remaining cases the fourth or the fifth largest logarithmic potential value behave similar to the other stagnating dimensions.
It is also reasonable that the fourth or the fifth largest logarithmic potential value behave not identical,
because the distribution of the increments depends on the set of stagnating dimensions.
For sixth and further dimensions all stagnation phases are merged,
while the fifth largest potential only counts as stagnating dimension if four or fewer dimensions are non-stagnating.
The cases with exactly five non-stagnating dimensions is removed for this dimension.
\subsection{Long Time Experiments}
Further more time consuming experiments are done with larger number of particles.
Beware, if the number of steps is doubled, then the runtime sometimes increases by a factor of four,
because the longer the experiment takes the more increases the precision.
This is due to the event that the longer an experiment takes the more diverse become the values.
Also the number of dimensions influences the runtime more than linearly.
To grant no irregularities the precision needs to be increased,
which is executed automatically on demand by the used software.
As this experiments are analyzed for $100\,000\,000$ iterations only single runs are executed.

For example the configuration with four particles and the sphere function was under investigation.
With $D=40$ dimensions and $100\,000\,000$ iterations it appears that after a long period of time the smallest occurring logarithmic potential among the dimensions stays quite constant.
Indeed the dimension, which is responsible for the lowest value of the logarithmic potential changes over time.
The value for the lowest logarithmic potential $\Psi(t,d)$ stays approximately in the range $-1\,000$ to $-2\,000$.
Periods of time with temporary stagnating dimensions can even last up to $10\,000\,000$ iterations.
However the largest overall potential $\Phi(100\,000\,000,d)$ has decreased to approximately $2^{-8\,000}$.
It even appears that each dimension becomes significant from time to time,
which leads to the impression that four particles finally will succeed to optimize the sphere function with $40$ dimensions.
The absolute values of the global attractor for all dimensions at the end are in the range from $2^{-4\,200}$ to $2^{-3\,600}$,
which strengthens this impression.

With $D=400$ dimensions and $100\,000\,000$ iterations it cannot yet be determined, whether there are actual stagnating dimensions or not.
In fact the value for the lowest logarithmic potential $\Psi(t,d)$ reaches approximately $-10\,000$,
but no significant separation of logarithmic potential of different dimensions can be observed,
which would indicate stagnating and non-stagnating dimensions.
Some dimensions are stagnating almost since the beginning of that test run and have very small logarithmic potential,
but other dimensions regained logarithmic potential after very large number of iterations.
The maximal potential value of the last iterations has not decreased that extremely.
Potential values $\Phi(t,d)$ of up to approximately $2^{-30}$  are reached in the last $10\%$ of the measured iterations.
The absolute values of the global attractor for all dimensions at the end are in the range from $2^{-250}$ to $2^{7}$.
This means that the position of the global attractor in some dimensions at the end is still in the range of the initialized positions.
Surely this looks like that there are permanent stagnating dimensions,
but, as periods of time with temporary stagnating dimensions can take even more iterations than in the case with $40$ dimensions,
no prediction can be made,
whether there are stagnating dimensions in this case or not.
On the one hand a larger number of iterations needs to be tested
and on the other hand more than a single test run needs to be executed.
This is not quite easy because already this single test run took some months of computing time,
because the final needed precision is more than $10\,000$ bits,
which cause very time consuming calculations.

Therefore it is not quite clear, whether there is a limit of dimensions such that the PSO with chosen parameters and four particles finally stagnates at non-optimal points, or not.

\section{Conclusion}
Altogether a theoretical base was introduced to formalize stagnation during runs of the PSO.
For some functions evidence was provided, that the chosen theoretical model is applicable.
Especially for the sphere function and its scaled version, the high conditioned elliptic function,
it was shown that with two particles there are at most three dimensions, which do not stagnate permanently.
All other dimensions are finally stagnating.
For three particles the number of dimensions, which do not stagnate permanently,
is increased to seven.
For three particles, and PSO parameters as used in this paper, the sphere function and eight dimensions there is always a dimension,
which finally stops optimizing.
For other functions the number of stagnating dimensions can be significantly different from the configuration with the sphere function.
An example is the diagonal function.
Even with four particles there are only two dimensions, which do not stagnate permanently.

Furthermore, the presented framework is not only applicable to the presented version of the PSO.
Everything can work also with different parameters for the movement equations
or even with changed rules for movement or updates.
For example, the global attractor could be updated only each iteration and not after each particle.
Also other iteration based optimization algorithms can be analyzed if a suitable potential can be defined.

On the one hand it can be analyzed whether optimization algorithms can find local optima for specified functions,
or it may appear that there are always some dimensions, such that the coordinates in this dimension are not optimized.
On the other hand if convergence is present and if comparable definitions for a potential can be made,
for example Definition \ref{def:experimentalPotential} for PSO algorithms,
then the speed of convergence can be compared by comparison of the increments of logarithmic potentials.
Also the number of stagnating dimensions can be a comparison criterion.

In this paper only functions are analyzed, which do not change their shape, no matter how the coordinates are scaled.
For example if there are two vectors $x,y\in\R^D$ then $f(a\cdot x)\le f(a\cdot y)\Leftrightarrow a^2f(x)\le a^2f(y)\Leftrightarrow f(x)\le f(y)$ for any $a\in \R\setminus\lbrace 0\rbrace$.
The established framework can also handle more complicated functions,
for instance functions with bounded third derivatives.
For each local optimum those functions can be approximated by their Taylor approximation of degree two.
If the second derivative matrix is not only positive semidefinite but also positive definite,
then this approximation is sufficient to determine whether the PSO finally stops to tend to that local optimum or not.
If the PSO running on the approximation almost surely stagnates and does not end up in the single optimum,
then the PSO will stagnate if it tends to that local optimum in the original function,
because changes in higher derivatives become less important the nearer the particles are positioned around the local optimum.
In contrast if for all local optima stagnation is impossible almost surely for the Taylor approximation,
then it can also stagnate before, when higher derivatives are still significant.

Additionally if stagnation is not present then convergence to local optima is not granted.
At least not with the recent analysis.
Nevertheless it is expected that if stagnation is not available,
then each dimension reaches large logarithmic potential from time to time.
If dimensions have large logarithmic potential, then it is assumed that these dimensions optimize their positions.
In this paper stagnation is defined for some minimal number of stagnating dimensions $N_0$,
which is appropriate to grant not reaching a local optimum.
To grant convergence to local optima for at least $D-N_0$ dimensions,
stagnation phases need to be defined differently with a maximal (not minimal) number of stagnating dimensions.
This positive result can not be extended by only analyzing the derivatives of local optima, because earlier stagnation can be available.

Nevertheless if there is no positive value of $N_0$, the minimal number of stagnating dimensions,
such that stagnation is observed, then it is expected that convergence to local optima is present.

Final continuing questions in this context, which could be analyzed with the given framework, are the following:
Is there a number of particles, such that it can be granted that at least the sphere function,
continuous functions or differentiable functions will be optimized perfectly almost surely?
Is there a fixed number of particles such that it can be granted that at least the sphere function,
continuous functions or differentiable functions will be optimized perfectly almost surely for some bounded number of dimensions?
Is there a minimal or maximal number of stagnating dimensions for fixed number of particles, fixed PSO parameters and a set of objective functions?

\newpage
\bibliographystyle{abbrv}
\bibliography{literature}

\begin{thebibliography}{10}

\bibitem{bauer1996probability}
H.~Bauer.
\newblock {\em Probability Theory}, volume~23 of {\em Studies in Mathematics}.
\newblock De Gruyter, 1996.

\bibitem{billing}
P.~Billingsley.
\newblock {\em Convergence of probability measures}.
\newblock Wiley Series in Probability and Statistics: Probability and
  Statistics. John Wiley \& Sons Inc., New York, second edition, 1999.
\newblock A Wiley-Interscience Publication.

\bibitem{Carlisle01}
A.~Carlisle and G.~Dozier.
\newblock An off-the-shelf {PSO}.
\newblock In {\em Proc. Particle Swarm Optimization Workshop}, pages 1--6,
  2001.

\bibitem{CK:02}
M.~Clerc and J.~Kennedy.
\newblock The particle swarm {--} explosion, stability, and convergence in a
  multidimensional complex space.
\newblock {\em IEEE Transactions on Evolutionary Computation}, 6:58--73, 2002.

\bibitem{durrett2010probability}
R.~Durrett.
\newblock {\em Probability: Theory and Examples}.
\newblock Cambridge Series in Statistical and Probabilistic Mathematics.
  Cambridge University Press, 2010.

\bibitem{eb_ken_1995}
R.~C. Eberhart and J.~Kennedy.
\newblock A new optimizer using particle swarm theory.
\newblock In {\em Proc. 6th International Symposium on Micro Machine and Human
  Science}, pages 39--43, 1995.

\bibitem{JLY:07a}
M.~Jiang, Y.~P. Luo, and S.~Y. Yang.
\newblock Particle swarm optimization {--} stochastic trajectory analysis and
  parameter selection.
\newblock In F.~T.~S. Chan and M.~K. Tiwari, editors, {\em Swarm Intelligence
  {--} Focus on Ant and Particle Swarm Optimization}, chapter~17, pages
  179--198. I-TECH Education and Publishing, Vienna, 2007.

\bibitem{JLY:07}
M.~Jiang, Y.~P. Luo, and S.~Y. Yang.
\newblock Stochastic convergence analysis and parameter selection of the
  standard particle swarm optimization algorithm.
\newblock {\em Inf. Process. Lett.}, 102:8--16, 2007.

\bibitem{ken_eb_1995}
J.~Kennedy and R.~C. Eberhart.
\newblock Particle swarm optimization.
\newblock In {\em Proc. IEEE International Conference on Neural Networks},
  volume~4, pages 1942--1948, 1995.

\bibitem{LW:11}
P.~K. Lehre and C.~Witt.
\newblock Finite first hitting time versus stochastic convergence in particle
  swarm optimisation.
\newblock In L.~Di~Gaspero, A.~Schaerf, and T.~St\"utzle, editors, {\em
  Advances in Metaheuristics}, OR/CS, pages 1--20. Springer Science+Business
  Media, New York, USA, 2013.

\bibitem{OD:10}
J.~E. Onwunalu and L.~J. Durlofsky.
\newblock Application of a particle swarm optimization algorithm for
  determining optimum well location and type.
\newblock {\em Computational Geosciences}, 14:183--198, 2010.

\bibitem{swarmhandbook:11}
B.~K. Panigrahi, Y.~Shi, and M.-H. Lim, editors.
\newblock {\em Handbook of Swarm Intelligence {---} Concepts, Principles and
  Applications}.
\newblock Springer, 2011.

\bibitem{RPPN:09}
K.~Ramanathan, V.~M. Periasamy, M.~Pushpavanam, and U.~Natarajan.
\newblock Particle swarm optimisation of hardness in nickel diamond electro
  composites.
\newblock {\em Archives of Computational Materials Science and Surface
  Engineering}, 1:232--236, 2009.

\bibitem{Scheike:92}
T.~H. Scheike.
\newblock A boundary-crossing result for the {Brownian Motion}.
\newblock {\em Journal of Applied Probability}, 29(2), 1992.

\bibitem{S15}
B.~I. Schmitt.
\newblock {\em Convergence Analysis for Particle Swarm Optimization}.
\newblock FAU University Press, Erlangen, 2015.
\newblock Doctoral Thesis.

\bibitem{SWa:13}
M.~Schmitt and R.~Wanka.
\newblock Particle swarm optimization almost surely finds local optima.
\newblock In {\em Proc. 15th Genetic and Evolutionary Computation Conference
  (GECCO)}, pages 1629--1636, 2013.

\bibitem{SWb:13}
M.~Schmitt and R.~Wanka.
\newblock Particles prefer walking along the axes: {E}xperimental insights into
  the behavior of a particle swarm.
\newblock In {\em Companion of Proc. 15th Genetic and Evolutionary Computation
  Conference (GECCO)}, pages 17--18, 2013.

\bibitem{SWc:15}
M.~Schmitt and R.~Wanka.
\newblock Particle swarm optimization almost surely finds local optima.
\newblock {\em Theoretical Computer Science}, 561:57--72, 2015.

\bibitem{benchmarkset}
P.~N. Suganthan, N.~Hansen, J.~J. Liang, K.~Deb, Y.~P. Chen, A.~Auger, and
  S.~Tiwari.
\newblock Problem definitions and evaluation criteria for the {CEC} 2005
  special session on real-parameter optimization.
\newblock Technical report, Nanyang Technological University, Singapore, 2005.

\bibitem{T:03}
I.~C. Trelea.
\newblock The particle swarm optimization algorithm: Convergence analysis and
  parameter selection.
\newblock {\em Inf. Process. Lett.}, 85:317--325, 2003.

\bibitem{WSZZE:04}
M.~P. Wachowiak, R.~Smol{\'\i}kov{\'a}, Y.~Zheng, J.~M. Zurada, and A.~S.
  Elmaghraby.
\newblock An approach to multimodal biomedical image registration utilizing
  particle swarm optimization.
\newblock {\em IEEE Transactions on Evolutionary Computation}, 8:289--301,
  2004.

\end{thebibliography}
\newpage
\begin{appendix}
\section{Data of Experiment \ref{exp:expectationOfIncrements}}
\label{appendix:dataOfExp:expectationOfIncrements}
\begin{table}[htb]
\center{
\caption{Values of estimaters specified in Experiment \ref{exp:expectationOfIncrements} for the sphere function $f_{sph}$}
{
$\begin{array}{| c | c | c | c | c | c | c | c | }
\hline
f & N & L & d^* & \overline{\mu}_U &  \overline{\mu}_M & \overline{\mu}_D & \overline{\mu}_L\\
\hline
\hline
f_{sph} & 2 & 9 & 1 & -0.280370 & -0.280370 & -0.067542 & 0.212828\\
f_{sph} & 2 & 8 & 1 & -0.133841 & -0.133843 & -0.0742897 & 0.0595512\\
f_{sph} & 2 & 7 & 1 & -0.0265214 & -0.0265178 & -0.0307508 & -0.0042293\\
\hline
f_{sph} & 3 & 9 & 1 & -0.281658   & -0.281658   & -0.0653937  &  0.216264  \\
f_{sph} & 3 & 8 & 1 & -0.232753   & -0.232748   & -0.0736407  &  0.159112  \\
f_{sph} & 3 & 7 & 1 & -0.175692   & -0.175695   & -0.0677883  &  0.107903  \\
f_{sph} & 3 & 6 & 1 & -0.0944085  & -0.094405   & -0.0458445  &  0.048564  \\
f_{sph} & 3 & 5 & 1 & -0.0436236  & -0.043621   & -0.026277   &  0.0173466 \\
f_{sph} & 3 & 4 & 1 & -0.0202586  & -0.020237   & -0.0155185  &  0.0047401 \\
f_{sph} & 3 & 3 & 1 & -0.0074307  & -0.0074679  & -0.0091868  & -0.0017561 \\
\hline
\end{array}$}}
\end{table}
\begin{table}[htb]
\center{
\caption{Values of estimaters specified in Experiment \ref{exp:expectationOfIncrements} for the high conditioned elliptic function $f_{hce}$}
{
$\begin{array}{| c | c | c | c | c | c | c | c | }
\hline
f & N & L & d^* & \overline{\mu}_U &  \overline{\mu}_M & \overline{\mu}_D & \overline{\mu}_L\\
\hline
\hline
f_{hce} & 2 & 9 & 1 & -0.280370 & -0.280370 & -0.067542  &  0.212828\\
f_{hce} & 2 & 8 & 1 & -0.133605 & -0.133599 & -0.0742256 &  0.0593796\\
f_{hce} & 2 & 7 & 1 & -0.026488 & -0.026485 & -0.0307987 & -0.0043110\\
\hline
f_{hce} & 3 & 9 & 1 & -0.281658  & -0.281658  & -0.0653937 &  0.216264\\
f_{hce} & 3 & 8 & 1 & -0.232727  & -0.232721  & -0.073618  &  0.159109\\
f_{hce} & 3 & 7 & 1 & -0.175414  & -0.175412  & -0.0676028 &  0.107811\\
f_{hce} & 3 & 6 & 1 & -0.0941275 & -0.0941295 & -0.0457537 &  0.0483738\\
f_{hce} & 3 & 5 & 1 & -0.0436067 & -0.0436075 & -0.0263199 &  0.0172867\\
f_{hce} & 3 & 4 & 1 & -0.0201859 & -0.0201904 & -0.0154480 &  0.0047379\\
f_{hce} & 3 & 3 & 1 & -0.0074169 & -0.0074541 & -0.0091259 & -0.0017089\\
\hline
\end{array}$}}
\end{table}
\begin{table}[htb]
\center{
\caption{Values of estimaters specified in Experiment \ref{exp:expectationOfIncrements} for the Schwefel's problem $f_{sch}$}
{
$\begin{array}{| c | c | c | c | c | c | c | c | }
\hline
f & N & L & d^* & \overline{\mu}_U &  \overline{\mu}_M & \overline{\mu}_D & \overline{\mu}_L\\
\hline
\hline
f_{sch} & 2 & 9 & 1 & -0.280217 & -0.280217 & -0.0675834 & 0.212634\\
f_{sch} & 2 & 9 & 2 & -0.280472 & -0.280472 & -0.0676385 & 0.212833\\
f_{sch} & 2 & 8 & 1 & -0.0920762 & -0.0920796 & -0.0622003 &  0.0298759\\
f_{sch} & 2 & 8 & 3 & -0.0258683 & -0.0258645 & -0.0449234 & -0.0190551\\
f_{sch} & 2 & 7 & 1 & -0.0052527 & -0.0052403 & -0.0215513 & -0.0162987\\
f_{sch} & 2 & 7 & 4 & -0.0275117 & -0.0446342 & -0.0450705 & -0.0175588\\
\hline
f_{sch} & 3 & 9 & 1 & -0.281479  & -0.281479 & -0.0654046 & 0.216074\\
f_{sch} & 3 & 8 & 1 & -0.220181  & -0.220177 & -0.0691141 & 0.151067\\
f_{sch} & 3 & 8 & 3 & -0.166589  & -0.166591 & -0.0553963 & 0.111193\\
f_{sch} & 3 & 7 & 1 & -0.120368  & -0.120371 & -0.0522450 & 0.0681226\\
f_{sch} & 3 & 7 & 4 & -0.0585785 & -0.058586 & -0.0338823 & 0.0246963\\
f_{sch} & 3 & 6 & 1 & -0.0363988 & -0.036386 & -0.0251476 & 0.0112512\\
f_{sch} & 3 & 6 & 5 & -0.0169291 & -0.0169333 & -0.0173492 & -0.0004201\\
f_{sch} & 3 & 5 & 1 & -0.0083174 & -0.0083236 & -0.0120229 & -0.0037055\\
f_{sch} & 3 & 5 & 6 & -0.0080515 & -0.0086733 & -0.0125910 & -0.0045394\\
f_{sch} & 3 & 4 & 1 & -0.0076700 & -0.0116463 & -0.0117998 & -0.0041299\\
f_{sch} & 3 & 4 & 7 & -0.0065029 & -0.0118378 & -0.0116454 & -0.0051425\\
f_{sch} & 3 & 3 & 1 & -0.0071813 & -0.0118641 & -0.0115253 & -0.0043441\\
f_{sch} & 3 & 3 & 8 & -0.0055640 & -0.0114905 & -0.0110172 & -0.0054532\\
\hline
\end{array}$}
}
\end{table}
\begin{table}[htb]
\center{
\caption{Values of estimaters specified in Experiment \ref{exp:expectationOfIncrements} for the diagonal function $f_{diag}$}
{
$\begin{array}{| c | c | c | c | c | c | c | c | }
\hline
f & N & L & d^* & \overline{\mu}_U &  \overline{\mu}_M & \overline{\mu}_D & \overline{\mu}_L\\
\hline
\hline
f_{diag} & 2 & 9 & 1 & -0.280374 & -0.280374 & -0.067633 & 0.212741\\
f_{diag} & 2 & 8 & 1 & -6.88\cdot 10^{-6} & -6.33\cdot 10^{-6} & -0.037121 & -0.037114\\
\hline
f_{diag} & 3 & 9 & 1 & -0.281716 & -0.281716 & -0.065535 & 0.216181\\
f_{diag} & 3 & 8 & 1 & -1.09\cdot 10^{-5} & -1.08\cdot 10^{-5} & -0.027709 & -0.027699\\
\hline
f_{diag} & 4 & 9 & 1 & -0.281934 & -0.281934 & -0.064976 & 0.216958\\
f_{diag} & 4 & 8 & 1 & -1.58\cdot 10^{-5} & -1.83\cdot 10^{-5} & -0.006380 & -0.006364\\
\hline
\end{array}$}}
\end{table}
\FloatBarrier
\section{Data of Experiment \ref{exp:longTimeRun}}
\label{appendix:dataOfExp:longTimeRun}
\begin{small}
The following variables are defined,
which indicate minimal, maximal and average values and variance observed with Experiment \ref{exp:longTimeRun}:\\
$\min\limits_{X,i}:=\inf_{r\in D_{X,i}}\langle X_i\rangle_r,$
$\max\limits_{X,i}:=\sup_{r\in D_{X,i}}\langle X_i\rangle_r,$\\
$\mu_{X,i}:=\sum_{r\in D_{X,i}} \frac{\langle X_i\rangle_r}{\vert D_{X_i}\vert},$
and $\VAR_{X,i}:=\sum_{r\in D_{X,i}}\frac{\left(\langle X_i\rangle_r- \mu_{X,i}\right)^2}{\vert D_{X,i}\vert}.$
\end{small}
\begin{table}[htb]
\caption{\small{Evaluation of data from Experiment \ref{exp:longTimeRun} with $c_0=-40$, $c_s=-20$ and function $f_{sph}$}}
\begin{center}{{
$
\begin{array}{| c c c | c c c c c | c | }
\hline
N & D & N_0 & \vert D_{X,0} \vert &\min\limits_{X,0} & \max\limits_{X,0} & \mu_{X,0} & \VAR_{X,0} & \vert D_{X,1} \vert\\
\hline\hline
2 &   4 &  1 & 500 &    400 &  9\,000 &  2\,017.6 &  1\,244\,090 & 351\\
2 &  10 &  7 & 500 & 1\,000 &  9\,900 &  3\,239.6 &  1\,766\,472 & 460\\
2 & 100 & 97 & 500 & 1\,500 & 13\,100 &  4\,840.2 &  2\,913\,124 & 493\\
\hline
3 &   8 &  1 & 500 &    700 &  7\,300 &  2\,597.6 &  1\,482\,594 & 488\\
3 &  20 & 13 & 500 & 2\,600 & 17\,900 &  6\,369.0 &  5\,246\,859 & 500\\
3 & 100 & 93 & 500 & 2\,800 & 22\,300 & 10\,319.4 &  9\,648\,924 & 500\\
\hline
\end{array}
$
}\\
{
$
\begin{array}{| c c c | c c c c c | c | }
\hline
N & D & N_0 & \vert D_{X,1} \vert & \min\limits_{X,1} & \max\limits_{X,1} & \mu_{X,1} & \VAR_{X,1} & \vert D_{X,2} \vert\\
\hline\hline
2 &   4 &  1 & 351 &      0 &  3\,600 &    862.7 &    518\,636 & 225\\
2 &  10 &  7 & 460 &      0 & 10\,500 & 1\,054.3 & 1\,030\,785 & 359\\
2 & 100 & 97 & 493 &      0 &  6\,900 & 1\,080.5 &    773\,576 & 457\\
\hline
3 &   8 &  1 & 488 &      0 &  6\,200 &    748.0 &    715\,611 & 430\\
3 &  20 & 13 & 500 &      0 &  9\,700 & 1\,554.8 & 2\,139\,917 & 500\\
3 & 100 & 93 & 500 &      0 & 12\,800 & 2\,087.6 & 4\,274\,366 & 500\\
\hline
\end{array}
$
}\\
{
$
\begin{array}{| c c c | c c c c c | c | }
\hline
N & D & N_0 & \vert D_{X,2} \vert & \min\limits_{X,2} & \max\limits_{X,2} & \mu_{X,2} & \VAR_{X,2} & \vert D_{X,3} \vert \\
\hline\hline
2 &   4 &  1 & 225 &      0 &  5\,100 &    831.1 &     752\,277 & 138\\
2 &  10 &  7 & 359 &      0 &  4\,100 &    763.0 &     424\,394 & 256\\
2 & 100 & 97 & 457 &      0 &  6\,000 &    903.5 &     669\,835 & 365\\
\hline
3 &   8 &  1 & 430 &      0 &  4\,200 &    604.7 &     488\,537 & 348\\
3 &  20 & 13 & 500 &      0 & 13\,800 & 1\,231.0 &  1\,885\,419 & 500\\
3 & 100 & 93 & 500 &      0 & 12\,600 & 1\,577.4 &  2\,411\,149 & 500\\
\hline
\end{array}
$
}}
\end{center}
\end{table}
\begin{table}[htb]
\caption{\small{Evaluation of data from Experiment \ref{exp:longTimeRun} with $c_0=-40$, $c_s=-20$ and function $f_{hce}$}}
\begin{center}{{
$
\begin{array}{| c c c | c c c c c | c | }
\hline
N & D & N_0 & \vert D_{X,0} \vert & \min\limits_{X,0} & \max\limits_{X,0} & \mu_{X,0} & \VAR_{X,0} & \vert D_{X,1} \vert \\
\hline\hline
2 &   4 &  1 & 500 &    400 &  6\,400 &  1\,948.8 &  1\,016\,179 & 368\\
2 &  10 &  7 & 500 & 1\,000 &  7\,800 &  3\,191.4 &  1\,570\,546 & 458\\
2 & 100 & 97 & 500 & 1\,700 &  9\,500 &  4\,481.4 &  2\,077\,034 & 487\\
\hline
3 &   8 &  1 & 500 &    400 &  7\,700 &  2\,520.6 &  1\,554\,436 & 483\\
3 &  20 & 13 & 500 & 1\,700 & 15\,100 &  6\,523.2 &  5\,161\,462 & 500\\
3 & 100 & 93 & 500 & 3\,400 & 20\,800 &  9\,861.0 &  8\,493\,979 & 500\\
\hline
\end{array}
$
}\\
{
$
\begin{array}{| c c c | c c c c c | c | }
\hline
N & D & N_0 & \vert D_{X,1} \vert & \min\limits_{X,1} & \max\limits_{X,1} & \mu_{X,1} & \VAR_{X,1} & \vert D_{X,2} \vert \\
\hline\hline
2 &   4 &  1 & 368 &      0 &  4\,900 &    892.1 &    696\,813 & 230\\
2 &  10 &  7 & 458 &      0 &  5\,900 & 1\,018.3 &    784\,991 & 371\\
2 & 100 & 97 & 487 &      0 &  7\,700 & 1\,092.6 &    932\,923 & 448\\
\hline
3 &   8 &  1 & 483 &      0 &  5\,400 &    767.9 &    665\,658 & 424\\
3 &  20 & 13 & 500 &      0 & 11\,900 & 1\,563.6 & 2\,322\,035 & 500\\
3 & 100 & 93 & 500 &      0 & 11\,600 & 1\,950.8 & 4\,057\,459 & 500\\
\hline
\end{array}
$
}\\
{
$
\begin{array}{| c c c | c c c c c | c | }
\hline
N & D & N_0 & \vert D_{X,2} \vert & \min\limits_{X,2} & \max\limits_{X,2} & \mu_{X,2} & \VAR_{X,2} & \vert D_{X,3} \vert \\
\hline\hline
2 &   4 &  1 & 230 &      0 &  5\,800 &    764.8 &     612\,455 & 132\\
2 &  10 &  7 & 371 &      0 &  5\,700 &    827.2 &     585\,970 & 272\\
2 & 100 & 97 & 448 &      0 &  5\,900 &    989.3 &     720\,108 & 360\\
\hline
3 &   8 &  1 & 424 &      0 &  5\,300 &    612.5 &     509\,301 & 368\\
3 &  20 & 13 & 500 &      0 & 10\,200 & 1\,159.6 &  1\,652\,248 & 500\\
3 & 100 & 93 & 500 &      0 & 13\,600 & 1\,478.8 &  2\,400\,471 & 500\\
\hline
\end{array}
$
}}
\end{center}
\end{table}
\begin{table}[htb]
\caption{\small{Evaluation of data from Experiment \ref{exp:longTimeRun} with $c_0=-40$, $c_s=-20$ and function $f_{sch}$}}
\begin{center}{{
$\begin{array}{| c c c | c c c c c | c | }
\hline
N & D & N_0 & \vert D_{X,0} \vert & \min\limits_{X,0} & \max\limits_{X,0} & \mu_{X,0} & \VAR_{X,0} & \vert D_{X,1} \vert \\
\hline\hline
3 &   8 &  3 & 500 & 1\,200 & 12\,900 &  4\,664.6 &  3\,911\,887 & 474\\
3 &  20 & 15 & 500 & 2\,300 & 16\,000 &  6\,931.2 &  5\,668\,627 & 492\\
3 & 100 & 95 & 500 & 2\,900 & 17\,500 &  8\,996.0 &  8\,277\,224 & 491\\
\hline
\end{array}
$}\\
{
$\begin{array}{| c c c | c c c c c | c | }
\hline
N & D & N_0 & \vert D_{X,1} \vert & \min\limits_{X,1} & \max\limits_{X,1} & \mu_{X,1} & \VAR_{X,1} & \vert D_{X,2} \vert \\
\hline\hline
3 &   8 &  3 & 474 &      0 & 12\,600 & 1\,171.7 & 2\,251\,437 & 371\\
3 &  20 & 15 & 492 &      0 & 15\,000 & 1\,270.7 & 2\,516\,623 & 452\\
3 & 100 & 95 & 491 &      0 & 14\,100 & 1\,365.4 & 2\,824\,870 & 446\\
\hline
\end{array}
$}\\
{
$
\begin{array}{| c c c | c c c c c | c | }
\hline
N & D & N_0 & \vert D_{X,2} \vert & \min\limits_{X,2} & \max\limits_{X,2} & \mu_{X,2} & \VAR_{X,2} & \vert D_{X,3} \vert \\
\hline\hline
3 &   8 &  3 & 371 &      0 &  5\,800 &    990.0 &  1\,037\,286 & 255\\
3 &  20 & 15 & 452 &      0 & 11\,300 &    990.7 &  1\,588\,719 & 367\\
3 & 100 & 95 & 446 &      0 &  6\,000 &    942.2 &  1\,108\,582 & 380\\
\hline
\end{array}
$}}
\end{center}
\end{table}
\begin{table}[htb]
\caption{\small{Evaluation of data from Experiment \ref{exp:longTimeRun} with $c_0=-40$, $c_s=-20$ and function $f_{dia}$}}
\begin{center}{{
$\begin{array}{| c c c | c c c c c | c | }
\hline
N & D & N_0 & \vert D_{X,0} \vert & \min\limits_{X,0} & \max\limits_{X,0} & \mu_{X,0} & \VAR_{X,0} & \vert D_{X,1} \vert \\
\hline\hline
3 &   3 &  1 & 500 &    700 & 13\,300 &  2\,557.0 &  1\,845\,011 &   1\\
3 &  10 &  8 & 500 & 2\,300 & 26\,200 &  5\,722.8 &  9\,167\,760 &   1\\
\hline
4 &   3 &  1 & 500 & 1\,500 & 28\,500 &  6\,533.4 & 13\,461\,824 &  46\\
4 &  10 &  8 & 500 & 6\,200 & 36\,900 & 14\,990.2 & 26\,613\,084 &  75\\
\hline
\end{array}
$}\\
{
$\begin{array}{| c c c | c c c c c | c | }
\hline
N & D & N_0 & \vert D_{X,1} \vert & \min\limits_{X,1} & \max\limits_{X,1} & \mu_{X,1} & \VAR_{X,1} & \vert D_{X,2} \vert \\
\hline\hline
3 &   3 &  1 &  1 & 1\,000 &  1\,000 & 1\,000.0 &           0 &   0\\
3 &  10 &  8 &  1 &    500 &     500 &    500.0 &           0 &   0\\
\hline
4 &   3 &  1 & 46 &    300 & 13\,500 & 3\,021.7 & 9\,384\,310 &   4\\
4 &  10 &  8 & 75 &    400 &  8\,000 & 2\,809.3 & 3\,977\,646 &  11\\
\hline
\end{array}
$}\\
{
$
\begin{array}{| c c c | c c c c c | c | }
\hline
N & D & N_0 & \vert D_{X,2} \vert & \min\limits_{X,2} & \max\limits_{X,2} & \mu_{X,2} & \VAR_{X,2} & \vert D_{X,3} \vert \\
\hline\hline
3 &   3 &  1 &    0 & \infty & -\infty &      0.0 &            0 &   0\\
3 &  10 &  8 &    0 & \infty & -\infty &      0.0 &            0 &   0\\
\hline
4 &   3 &  1 &    4 & 1\,400 & 16\,600 & 6\,825.0 & 33\,761\,875 &   0\\
4 &  10 &  8 &   11 &    900 & 13\,100 & 3\,800.0 & 11\,474\,545 &   0\\
\hline
\end{array}
$}}
\end{center}
\end{table}

\end{appendix}

\end{document}